\documentclass{article}

\usepackage{microtype}
\usepackage{graphicx}
\usepackage{booktabs} 

\PassOptionsToPackage{numbers, compress}{natbib}




\usepackage[nonatbib, final]{util/neurips_2024}




\usepackage[utf8]{inputenc} 
\usepackage[T1]{fontenc}    
\usepackage[colorlinks=true, linkcolor=blue, citecolor=blue,urlcolor=black]{hyperref}        
\usepackage{url}            
\usepackage{booktabs}       
\usepackage{amsfonts}       
\usepackage{nicefrac}       
\usepackage{microtype}      
\usepackage[table]{xcolor}         

\usepackage{algorithm}
\usepackage{algorithmic}
\usepackage{enumitem}

\usepackage[numbers]{natbib}

\usepackage{amsmath}
\usepackage{amssymb, mathabx}
\usepackage{mathtools}
\usepackage{amsthm}
\usepackage{bm}
\usepackage{soul}
\usepackage{nccmath}
\usepackage{dutchcal}
\usepackage{arydshln}

\usepackage[capitalize,noabbrev]{cleveref}

\usepackage[caption=false]{subfig}
\usepackage{wrapfig}
\usepackage{multirow}
\usepackage{svg}

\theoremstyle{plain}

\theoremstyle{definition}

\theoremstyle{remark}

\usepackage{thm-restate}



\definecolor{usc}{RGB}{153, 27, 30}
\definecolor{jpm}{RGB}{0, 53, 148}
\definecolor{eqcol}{RGB}{206, 240, 216}
\definecolor{backg_blue}{RGB}{225,236,244}
\definecolor{tagtxt_blue}{RGB}{88,115,159}
\definecolor{backg_red}{RGB}{250, 222, 222}
\definecolor{tagtxt_red}{RGB}{204, 71, 71}
\definecolor{darkgreen}{RGB}{21, 145, 1}
\definecolor{orange}{RGB}{230, 125, 14}
\definecolor{teal}{RGB}{0, 128, 128}
\definecolor{neon}{RGB}{25, 210, 227}
\definecolor{maroon}{RGB}{143, 4, 4}
\definecolor{purple}{RGB}{119, 26, 240}
\definecolor{Pink}{RGB}{212, 38, 235}


\renewcommand{\iota}{\textit{i}}
\renewcommand{\implies}{\; \Rightarrow \;}

\newcommand{\forAll}{{\;\; \forall \;}}


\renewcommand{\tilde}[1]{\widetilde{#1}} 

\newcommand{\given}{\, | \,}


\newcommand{\E}[1]{\underset{\begin{subarray}{c} #1 \end{subarray}}{\Ebb}}

\newcommand{\pib}{\bm{\pi}}

\newcommand{\rp}{\rho_{\pi}}

\newcommand{\vfunc}{{V}^{\pib}}

\newcommand{\qfunc}{{Q}^{\pib}}

\newcommand{\adv}{{A}^{\pib}}








\newcommand{\Ebb}{{\mathbb E}}

\newcommand{\Nbb}{{\mathbb N}}
\newcommand{\Pbb}{{\mathbb P}}

\newcommand{\Rbb}{{\mathbb R}}



\newcommand{\Ccal}{{\mathcal{C}}}

\newcommand{\Lcal}{{\mathcal{L}}}
\newcommand{\Mcal}{{\mathcal{M}}}







\DeclareMathOperator*{\argmin}{arg\,min}



\makeatletter
\newcommand{\revisionhistory}[1]{%
\@ifundefined{showrevisionhistory}{\relax}{%
{#1}%
}%
}
\makeatother


\definecolor{Green}{rgb}{0.13, 0.65, 0.3}
\definecolor{Greenish}{RGB}{49, 158, 140}
\definecolor{Pinkish}{RGB}{184, 31, 120}
\definecolor{CellCol}{RGB}{218, 242, 199}
\definecolor{CellColLight}{RGB}{238, 245, 233}


\definecolor{blueish}{RGB}{46, 48, 146}
\definecolor{marigold}{RGB}{246, 145, 30}
\definecolor{purpleish}{RGB}{129, 2, 129}
\definecolor{greenish}{RGB}{0, 154, 85}
\definecolor{maroonish}{RGB}{130, 27, 1}


\title{e-COP : Episodic Constrained Optimization of Policies}

%

\author{%
  Akhil Agnihotri\\
  University of Southern California \\
  \texttt{agnihotri.akhil@gmail.com} \\
  \And
  Rahul Jain \\
  Google DeepMind and USC\\
  \texttt{rahulajain@google.com} \\
   \AND
   Deepak Ramachandran \\
   Google DeepMind \\
   \texttt{ramachandrand@google.com} \\
   \And
   Sahil Singla \\
   Google DeepMind \\
   \texttt{sasingla@google.com} \\
}

\begin{document}

\maketitle

\begin{abstract}
In this paper, we present the \texttt{e-COP} algorithm, the first policy optimization algorithm for constrained Reinforcement Learning (RL) in episodic (finite horizon) settings. Such formulations are applicable when there are separate sets of optimization criteria and constraints on a system's behavior. We approach this problem by first establishing a policy difference lemma for the episodic setting, which provides the theoretical foundation for the algorithm. Then, we propose to combine a set of established and novel solution ideas to yield the \texttt{e-COP} algorithm that is easy to implement and numerically stable, and provide a theoretical guarantee on optimality under certain scaling assumptions. Through extensive empirical analysis using benchmarks in the Safety Gym suite, we show that our algorithm has similar or better performance than SoTA (non-episodic) algorithms adapted for the episodic setting. The scalability of the algorithm  opens the door to its application in safety-constrained Reinforcement Learning from Human Feedback for Large Language or Diffusion Models.
\end{abstract}

\section{Introduction}
\label{sec:introduction}

RL problems may be formulated in order to satisfy multiple simultaneous objectives. These can include performance objectives that we want to maximize, and physical, operational or other objectives that we wish to constrain rather than maximize. For example, in robotics, we often want to optimize a task completion objective while obeying physical safety constraints. Similarly, in generative AI models, we want to optimize for human preferences while ensuring that the output generations remain safe (expressed perhaps as a threshold on an automatic safety score that penalizes violent or other undesirable content). Scalable policy optimization algorithms such as TRPO \cite{schulman2015trust}, PPO \cite{schulman2017proximal}, etc have been central to the achievements of RL over the last decade \cite{silver2017mastering,vinyals2019grandmaster,akkaya2019solving}. In particular, these have found utility in generative models, e.g., in the training of Large Language Models (LLMs) to be aligned to human preferences through the RL with Human Feedback (RLHF) paradigm \cite{achiam2023gpt}. But these algorithms are designed primarily for the unconstrained infinite-horizon discounted setting: Their use for constrained problems via optimization of the Lagrangian often gives unsatisfactory constraint satisfaction results. This has prompted development of a number of constrained policy optimization algorithms  for the infinite-horizon discounted setting \cite{achiam2017constrained, yang2020projection, zhang2020first, zhang2022penalized, Hu_2022_CVPR, 9030307, agnihotri2024online}, and for the average setting \cite{agnihotri24acpo}.

However, many RL problems are more accurately formulated as episodic, i.e., having a finite time horizon. For instance, in image diffusion models \cite{black2023training,janner2022planning}, the denoising sequence is really a \emph{finite} step trajectory optimization problem, better suited to be solved via RL algorithms for the episodic setting. When existing algorithms for infinite horizon discounted setting are used for such problem, they exhibit sub-optimal performance or fail to satisfy task-specific constraints by prioritizing short-term constraint satisfaction over episodic goals \cite{bojun2020steady, neu2020unifying, greenberg2021detecting}. Furthermore, the episodic setting allows for objective functions to be time-dependent which the infinite-horizon formulations do not. Even when the objective functions are time-invariant, there is a key difference: for non-episodic settings, a stationary policy that is optimal exists whereas for episodic settings, the optimal policy is always non-stationary and time-dependent. This necessitates development of policy optimization algorithms specifically for the episodic constrained setting. We note that such policy optimization algorithms do not exist even for the unconstrained episodic setting. 

In this paper, we introduce \texttt{e-COP}, a policy optimization algorithm for episodic constrained RL problems. Inspired by PPO, it uses deep learning-based function approximation, a KL divergence-based proximal trust region and gradient clipping, along with several novel ideas specifically for the finite horizon episodic and constrained setting. The algorithm is based on a technical lemma (Lemma \ref{lem:episodicpolicydifference}) on the difference between two policies, which leads to a new loss function for the episodic setting. We also introduce other ideas that obviate the need for matrix inversion thus improving scalability and numerical stability. The resulting algorithm improves performance over state-of-the-art baselines (after adapting them for the episodic setting). In sum, the \texttt{e-COP} algorithm has the following advantages: (i) \textit{Solution equivalence:} We show that the solution set of our \texttt{e-COP} loss function is same as that of the original CMDP problem, leading to precise cost control during training and avoidance of approximation errors (see Theorem \ref{th:relu_solution_same});  (ii) \textit{Stable convergence:} The \texttt{e-COP} algorithm converges tightly to safe optimal policies without the oscillatory behavior seen in other algorithms like PDO \cite{chow2017risk} and FOCOPS \cite{zhang2020first};  (iii) \textit{Easy integration:} \texttt{e-COP} follows the skeleton structure of PPO-style algorithms using clipping instead of steady state distribution approximation, and hence can be easily integrated into existing RL pipelines; and (iv) \textit{Empiricial performance:} the \texttt{e-COP} algorithm demonstrates superior performance on several benchmark problems for constrained optimal control compared to several state-of-the-art baselines. 

\noindent\textbf{Our Contributions and Novelty.} 
We introduce the first policy optimization algorithm for episodic RL problems, both with and without constraints (no constraints is a special case). While some of the policy optimization algorithms can be adapted for the constrained setting via a Lagrangian formulation, as we show, they don't work so well empirically in terms of constraint satisfaction and optimality. The algorithm is based on a policy difference (technical) lemma, which  is novel. We have gotten rid of Hessian matrix inversion, a common feature of policy optimization algorithms (see, for example, PPO \citep{schulman2017proximal}, CPO \citep{achiam2017constrained}, P3O \citep{zhang2022penalized}, etc.) and replaced it with a quadratic penalty term which also improves numerical stability near the edge of the constraint set boundary - a problem unique to constrained RL problems. We provide an instance-dependent hyperparameter tuning routine that generalizes to various testing scenarios. And finally, our extensive empirical results against an extensive suite of baseline algorithms (e.g., adapted PPO \cite{schulman2017proximal}, FOCOPS \cite{zhang2020first}, CPO \cite{achiam2017constrained}, PCPO \cite{yang2020projection}, and P3O \cite{zhang2022penalized}) show that \texttt{e-COP} performs the best or near-best on a range of Safety Gym \citep{brockman2016openai} benchmark problems.

\noindent\textbf{Related Work.} A broad view of planning and model-free RL techniques for Constrained MDPs is provided in \cite{liu2021policy} and \cite{gu2022review}. The development of SOTA policy optimization started with the TRPO algorithm \cite{schulman2015trust}, which used a trust region to update the current policy, and was further improved in PPO by use of proximal ideas \cite{schulman2017proximal}. This led to works like CPO \cite{achiam2017constrained}, RCPO \cite{tessler2018reward}, and PCPO \cite{yang2020projection} for constrained RL problems in the infinite-horizon discounted setting. ACPO \cite{agnihotri24acpo} extended CPO to the infinite-horizon average setting. These methods typically  require inversion of a computationally-expensive Fischer information matrix at each update step, thus limiting scalability. Lagrangian-based algorithms \cite{Ray2019, ray2019benchmarking} showed that you could incorporate constraints but constrained satisfaction remained a concern. Algorithms like PDO \cite{chow2017risk} and RCPO \cite{tessler2018reward} also  use Lagrangian duality but to solve risk-constrained RL problems and suffer from computational overhead. Some other notable algorithms include IPO \cite{liu2020ipo}, P3O \cite{zhang2022penalized},  APPO \cite{Dai2023}, etc. that use penalty terms, and hence do not suffer from computational overhead but have other drawbacks. For example, IPO assumes feasible intermediate iterations, which cannot be fulfilled in practice, P3O requires arbritarily large penalty factors for feasibility which can lead to significant estimation errors. We note that all the above algorithms are for the infinite-horizon discounted (non-episodic) setting (except ACPO \cite{agnihotri24acpo}, which is for the average setting). We are not aware of any policy optimization algorithm for the episodic RL problem, with or without constraints.

\section{Preliminaries}

An episodic, or fixed horizon  Markov decision process (MDP) is a tuple, $\Mcal := (S,A,r,P,\mu, H)$, where $S$ is the set of states, $A$ is the set of actions, $r : S \times A \times S \to \Rbb$ is the reward function, $P : S \times A \times S \to [0,1]$ is the transition probability function such that $P(s'|s,a)$ is the probability of transitioning to state $s'$ from state $s$ by taking action $a$, $\mu : S \to [0,1]$ is the initial state distribution, and $H$ is the time horizon for each episode (characterized by a terminal state $s_{H}$). 

A policy $\pi : S \to \Delta(A)$ is a mapping from states to probability distributions over the actions, with $\pi(a|s)$ denoting the probability of selecting action $a$ in state $s$, and $\Delta(A)$ is the probability simplex over the action space $A$. However, due to the temporal nature of episodic RL, the optimal policies are generally not stationary, and we index the policy at time $h$ by $\pi_{h}$, and denote $\pib_{1:H} = ( \pi_{h} )_{h=1}^{H}$. Then, the total undiscounted reward objective within an episode is defined as 
\begin{align*}
    J(\pib_{1:H}) &:=  \E{\tau \sim \pib_{1:H}} \left[\sum_{h=1}^{H} r(s_{h}, a_{h}, s_{h+1}) \right]
\end{align*} 

where $\tau$ refers to the sample trajectory $(s_1, a_1, s_2, a_2, \dots, s_{H})$ generated when following a policy sequence, i.e., $a_h \sim \pi_{h}(\cdot | s_h)$, ~$s_{h+1} \sim P(\cdot | s_h, a_h),$ and $s_{1} \sim \mu$. 

Let $R_{h:H}(\tau)$ denote the total reward of a trajectory $\tau$ starting from time $h$ until  episode terminal time $H$ generated by following the policy sequence $\pib_{h:H}$. We also define the state-value function of a state $s$ at step $h$ as $V^{\pib}_{h}(s) := \E{\tau \sim \pib}[R_{h:H}(\tau) \given s_h = s]$ and the action-value function as $Q^{\bm{\pi}}_{h}(s,a) := \E{\tau \sim \bm{\pi}} [R_{h:H}(\tau) \given s_h= s, a_h = a]$. The advantage function is $A^{\bm{\pi}}_{h}(s,a) := Q^{\bm{\pi}}_{h}(s,a) - V^{\bm{\pi}}_{h}(s)$. We also define $\Pbb^{\pib}_{h}(s \given s_{1}) = \sum_{a \in A} \Pbb^{\pib}_{h}(s,a \given s_{1})$, where the term $\Pbb^{\pib}_{h}(s,a \given s_{1})$ is the probability of reaching $(s,a)$ at time step $h$ following $\pib$ and starting from $s_{1}.$


\noindent\textbf{Constrained MDPs.}
A constrained Markov decision process (CMDP) is an MDP augmented with constraints that restrict the set of allowable policies for that MDP.  Specifically, we have $m$  cost functions, $C_1, \cdots , C_m$ (with each function $C_i : S \times A \times S \to \Rbb$ mapping transition tuples to costs, similar to the reward function), and bounds $d_1, \cdots , d_m$. And similar to the value function for the reward objective, we define the expected total cost objective for each cost function $C_{i}$ (called cost value for the constraint) as 
\begin{align*}
    J_{C_i}(\pib_{1:H}) &:=  \E{\tau \sim \pib_{1:H}} \left[\sum_{h=1}^{H} C_i(s_{h}, a_{h}, s_{h+1}) \right].
\end{align*} 
The goal then, in each episode, is to find a policy sequence $\pib_{1:H}^{\star}$ such that 
\begin{equation}
J(\pib_{1:H}^{\star})  := \max_{ \pib_{1:H} \in \Pi_{C} } J(\pib_{1:H}), ~~\text{where}~~~\Pi_{C} := \left\{\pib_{1:H} \in \Pi \; : \; J_{C_i}(\pib_{1:H}) \leq d_i, \forAll i \in [1:m] \right\}
\label{eq:cmdp_problem}
\end{equation}
is the set of feasible policies for a CMDP for some given class of policies $\Pi$.
Lastly, analogous to $\vfunc_{h}$, $\qfunc_{h}$, and $\adv_{h}$, we can also define quantities for the cost functions  $C_i(\cdot)$ by replacing, and denote them by $\vfunc_{C_{i}, h}$, $\qfunc_{C_{i}, h}$, and $\adv_{C_{i}, h}$. Proofs of theorems and statements, if not given, are available in Appendix \ref{appendix}. 


\noindent\textbf{Notation.} 
$[N]$ denotes $\{1, \dots, N\}$ for some $N \in \Nbb$. $\pi_{h}$ refers to the policy at time step $h$ within an episode. Denote $\pib_{s:t} := ( \pi_{s}, \pi_{s+1}, \dots, \pi_{t})$ for some $s \leq t$ with $s,t \in [H]$. 
We shall only write $\pi_{k,h}$ when it is necessary to differentiate policies from different episodes but at the same time $h$. It then naturally follows to define $\pib_{k,s:t}$ to be the sequence $\pib_{s:t}$ in episode $k$. We will denote $\pib_{k} \equiv \pib_{k,1:H}$, and where not needed drop the index for the episode so that $\pib \equiv \pib_{k}$.

\section{Episodic Constrained Optimization of Policies (\texttt{e-COP})}
\label{sec:e-COP}


In this section, we propose a constrained policy optimization algorithm for episodic MDPs. Policy optimization algorithms for MDPs have proven remarkably effective given their ability to computationally scale up to high dimensional continuous state and action spaces \cite{schulman2015trust, schulman2016high, schulman2017proximal}. Such algorithms have also been proposed for infinite-horizon constrained MDPs with discounted criterion \cite{achiam2017advanced}  as well as  the average criterion \cite{agnihotri24acpo} but not for the finite horizon (or as it is often called, the episodic) setting. 

We note that finite horizon is not simply a special case of infinite-horizon discounted setting since the reward/cost functions in the former can be time-varying while the latter only allows for time-invariant objectives. Furthermore, even with time-invariant objectives,  the optimal policy is time-dependent, while for the latter setting there an optimal policy that is stationary exists.

\noindent\textbf{A Policy Difference Lemma for Episodic MDPs.} 
Most policy optimization RL algorithms are based on a value or policy difference technical lemma \cite{kakade2002approximately}. Unfortunately, the policy difference lemmas that have been derived previously for the infinite-horizon discounted \cite{schulman2015trust} and average case \cite{agnihotri24acpo} are not applicable here and hence, we derive a new policy difference lemma for the episodic setting. 


\begin{restatable}{lemma}{episodicpolicydifference}
\label{lem:episodicpolicydifference} For an episode of length $H$ and two policies, $\pib$ and $\pib'$, the difference in their performance assuming identical initial state distribution $\mu$ (i.e., $s_{1} \sim \mu$) is given by
\begin{equation}
J(\pib) - J(\pib') = \sum_{h=1}^{H} \E{s_{h}, a_{h} \sim \Pbb^{\pib}_{h}(\cdot, \cdot \given s) \\ s_{1} \sim \mu} \big[  A^{\pib'}_{h}(s_{h}, a_{h}) \big].
\end{equation}
\end{restatable}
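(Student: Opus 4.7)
The plan is to mimic the Kakade--Langford performance-difference argument, adapted to the finite-horizon, non-stationary setting. The right-hand side is an expectation of advantages of $\pib'$ taken under trajectories drawn from $\pib$, so I would start from that side and show it collapses via telescoping into $J(\pib)-J(\pib')$.

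First I would rewrite the advantage using the Bellman relation for $\pib'$: for any $(s_h,a_h)$,
\begin{equation*}
A^{\pib'}_h(s_h,a_h) \;=\; Q^{\pib'}_h(s_h,a_h) - V^{\pib'}_h(s_h) \;=\; \mathbb{E}_{s_{h+1}\sim P(\cdot\mid s_h,a_h)}\!\bigl[\,r(s_h,a_h,s_{h+1}) + V^{\pib'}_{h+1}(s_{h+1})\,\bigr] - V^{\pib'}_h(s_h),
\end{equation*}
with the convention $V^{\pib'}_{H+1}(\cdot)\equiv 0$ at the terminal step. The crucial observation is that the transition kernel $P$ is common to both policies, so when I take an outer expectation over $(s_h,a_h,s_{h+1})$ drawn from a trajectory generated by $\pib$, the inner one-step expectation is consistent with the marginal $\mathbb{P}^{\pib}_{h+1}(s_{h+1}\mid s_1)$.

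Next I would sum over $h=1,\dots,H$ and take the expectation with respect to $\tau\sim\pib$ with $s_1\sim\mu$. The reward terms reassemble into $J(\pib)$, while the $V^{\pib'}$ terms telescope:
\begin{equation*}
\sum_{h=1}^{H}\mathbb{E}_{\tau\sim\pib}\!\bigl[V^{\pib'}_{h+1}(s_{h+1}) - V^{\pib'}_h(s_h)\bigr] \;=\; \mathbb{E}_{\tau\sim\pib}\!\bigl[V^{\pib'}_{H+1}(s_{H+1})\bigr] - \mathbb{E}_{s_1\sim\mu}\!\bigl[V^{\pib'}_1(s_1)\bigr] \;=\; 0 - J(\pib'),
\end{equation*}
where the first term vanishes by the terminal convention and the second equals $J(\pib')$ precisely because both policies share the initial distribution $\mu$. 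Combining the two pieces yields the claim.

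I do not anticipate a real obstacle, since the argument is essentially algebraic once the Bellman rewriting and the terminal/initial boundary conditions are in place. The only point that deserves care is bookkeeping of the time-indexed, non-stationary value and policy sequences $\pib_{1:H}$ and $\pib'_{1:H}$: I must make sure the advantage at step $h$ uses $V^{\pib'_{h:H}}_h$ and $V^{\pib'_{h+1:H}}_{h+1}$ consistently, and that the outer sampling uses the on-policy marginals $\mathbb{P}^{\pib}_h(\cdot,\cdot\mid s_1)$ of $\pib$ rather than $\pib'$. With that notation pinned down, the telescoping and the final identification with $J(\pib)-J(\pib')$ are immediate.
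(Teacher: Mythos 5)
Your argument is correct and is essentially the paper's own proof run in the opposite direction: the paper starts from $V^{\pib}_1(s_1)-V^{\pib'}_1(s_1)$ and recursively peels off one step at a time to accumulate the advantage terms, whereas you start from the advantage sum, expand each $A^{\pib'}_h$ via the Bellman relation, and telescope the $V^{\pib'}$ terms. The key ingredients (shared transition kernel, shared initial distribution, terminal boundary condition) are used identically in both, so this is the same proof up to presentation.
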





The proof can be found in Appendix \ref{proof:episodicpolicydifference}. A key difference to note between the above and similar results for infinite-horizon settings \cite{schulman2015trust, agnihotri24acpo} is that considering stationary policies (and hence corresponding occupation measures) is not enough for the episodic setting since, in general, such a policy may be far from optimal. This explains why Lemma \ref{lem:episodicpolicydifference} looks so different (e.g., see (2) in \cite{schulman2015trust}, and Lemma 3.1 in \cite{agnihotri24acpo}). Indeed, the lemma above indicates that policy updates do not have to recurse backwards from the terminal time as dynamic programming algorithms do for episodic settings, which is somewhat surprising.

\noindent\textbf{A Constrained Policy Optimization Algorithm for Episodic MDPs.} 
Iterative policy optimization algorithms achieve state of the art performance \cite{schulman2017proximal, tessler2018reward, yang2020projection} on RL problems. Most such algorithms maximize the advantage function based on a suitable policy difference lemma, solving an unconstrained RL problem. Some additionally ensure satisfaction of infinite horizon expectation  constraints \cite{achiam2017constrained, agnihotri24acpo}. However, given that our policy lemma for the episodic setting (Lemma \ref{lem:episodicpolicydifference}) is significantly different, we need to re-design the algorithm based on it. A first attempt is presented as Algorithm \ref{alg:iterativepolicyoptimization},  where each iteration corresponds to an update with a full horizon $H$ episode.

\begin{algorithm}[!h]
   \caption{\textbf{I}terative \textbf{P}olicy \textbf{O}ptimization for \textbf{C}onstrained \textbf{E}pisodic  (IPOCE) RL}
\begin{algorithmic}[1]
   \STATE {\bfseries Input:} Initial policy $\pib_{0}$, number of episodes $K$, episode horizon $H$.
    \FOR{$k = 1,2, \dots ,K$} 
    \STATE Run $\pib_{k-1}$ to collect trajectories $\tau$.
    \STATE Evaluate $A^{\pib_{k-1}}_{h}$ and $A_{C_i, h}^{\pib_{k-1}}$ for $h \in [H]$ from $\tau$.
      \FOR{$t = H, H-1, \dots, 1$} 
	 \STATE \hspace{-0.2cm} \hspace{-1.75cm} $\vcenter{
    \begin{equation}
        \resizebox{0.85\linewidth}{!}{$
            \begin{aligned}
            \pi_{k,t}^{\star} = \argmin_{\pi_{k,t}} \sum_{h=t}^H \E{\substack{s \sim \rho_{\pi_{k,h}}\\ a\sim \pi_{k,h}}}[ -A^{\pib_{k-1}}_{h} (s,a)] 
             \quad \mathrm{s.t.}  \quad J_{C_i}(\pib_{k-1}) +  \sum_{h=t}^H {\E{\substack{s \sim \rho_{\pi_{k,h}}\\ a\sim \pi_{k,h}}}} \big  [A_{C_i, h}^{\pib_{k-1}} (s,a) \big ] \leq d_i,\ \forall i \hspace{-1cm}
            \label{eq:cpo_improvement_vanilla}
            \end{aligned}
        $}
    \end{equation}
    }$
	 \ENDFOR
  \STATE Set $\pib_{k} \leftarrow \left(\pi_{k,1}^{\star}, \pi_{k,2}^{\star}, \, \dots \, , \pi_{k,H}^{\star} \right)$.
    \ENDFOR
\end{algorithmic}
\label{alg:iterativepolicyoptimization}
\end{algorithm}

The iterative constrained policy optimization algorithm introduced above uses the current iterate of the policy $\pib_k$ to collect a trajectory $\tau$, and use them to evaluate $A^{\pib_{k-1}}_{h}$ and $A_{C_i, h}^{\pib_{k-1}}$ for $h \in [H]$. At the end of the episode, we solve $H$ optimization problems (one for each $h \in [H]$) that result in a new sequence of policies $\pib$. As is natural in episodic problems, we do backward iteration in time, i.e., solve the problem in step (6) at $h=H$, and then go backwards towards $h=1$. 

Note that the expectation of advantage functions in equation \eqref{eq:cpo_improvement_vanilla} is with respect to the policy $\pi$ (the optimization variable) and its corresponding \emph{time-dependent} state occupation distribution  $\rho_{\pi_h}$. In the infinite-horizon settings, the expectation is with respect to the steady state stationary distribution, but that does not exist in the episodic setting.  



\noindent\textbf{Using current policy for action selection.} Algorithm \ref{alg:iterativepolicyoptimization} represents an exact principled solution tothe constrained episodic MDP, but the intractable optimization performed in \eqref{eq:cpo_improvement_vanilla} makes it impractical (as in the case of infinite horizon policy optimization algorithms \cite{schulman2015trust, achiam2017constrained, yang2020projection}). We proceed to introduce a sequence of ideas that make the algorithm practical (e.g.,  by avoiding computationally expensive Hessian matrix inversion for use with trust-region methods  \cite{schulman2017proximal, chow2019lyapunov, zhang2022penalized}). However, getting rid of trust regions leads to large updates on policies, but PPO \cite{schulman2017proximal} and P3O \cite{zhang2022penalized} successfully overcome this problem by clipping the advantage function and adding a ReLU-like penalty term to the minimization objective. Motivated by this, we rewrite the optimization problem in \eqref{eq:cpo_improvement_vanilla} as follows by parameterizing the policy  $\pib_{k,t}$ in episode $k$ and time step $t$ by $\theta_{k,t}$:
\begin{equation}
\resizebox{0.99\linewidth}{!}{$
\begin{aligned}
\pi_{k,t} & =  \mathop{\arg\min}_{\pi_{k,t}} \sum_{h=t}^{H} \mathop{\E{\substack{s \sim \rho_{\pi_{k,h}}\\ a\sim \pi_{k-1,h}}}} \big[-{\rho(\theta_{h})}A^{\pib_{k-1}}_{h}(s,a) \big] + \sum_{i}^{m} \lambda_{t,i} \max \bigg\{0, \; \sum_{h=t}^{H} \mathop{{\E{\substack{s \sim \rho_{\pi_{k,h}}\\ a\sim \pi_{k-1,h}}}}} \big  [{\rho(\theta_{h})}A_{C_i, h}^{\pib_{k-1}} (s,a) \big ] + J_{C_i}(\pib_{k-1})-d_i \bigg\}  \, , \hspace{-0.45cm}
\end{aligned}
\label{eq:p3po_objective_fn}
$}
\end{equation}
where ${\rho(\theta_{h})} = \frac{\pi_{\theta_{k,h}}}{\pi_{\theta_{k-1,h}}}$ is the importance sampling ratio, $\lambda_{t,i}$ is a penalty factor for constraint $C_{i}$, and  $\pi_{k,\theta_{h}} \equiv \pi_{k,h} \equiv \theta_{k,h}$. Note that the ReLU-like penalty term above is different from the traditional first-order and second-order gradient approximations that are employed in trust-region methods \cite{achiam2017constrained, yang2020projection}. In essence, the penalty is applied when the agent breaches the associated constraint, while the objective remains consistent with standard policy optimization when all constraints are satisfied. 



\noindent\textbf{Introducing quadratic damping penalty.}
\label{sec:intro_quadratic_term}
It has been noted in such iterative policy optimization  algorithms that the behaviour of the learning agent when it nears the constraint threshold is quite volatile during training \cite{achiam2017constrained, yang2020projection, zhang2021average}. This is because the penalty term is active only when the constraints are violated which results in sharp behavior change for the agent. To alleviate this problem, we introduce an additional quadratic damping term to the objective above, which provides stable cost control to compliment the lagged Lagrangian multipliers. This has proved effective in physics-based control applications \cite{dowling2005feedback, klimesch2012alpha, imayoshi2013oscillatory} resulting in improved convergence since the damping term provides stability,  while keeping the solution set the same as for the original Problem \eqref{eq:cpo_improvement_vanilla} and the adapted Problem \eqref{eq:p3po_objective_fn} (as we prove later). 

For brevity, we denote the constraint term in Problem \eqref{eq:p3po_objective_fn} as
\begin{center}
\resizebox{0.7\linewidth}{!}{$
$$
\Psi_{C_{i}, t}(\pib_{k-1}, \pib_{k}) := \sum_{h=t}^{H} \mathop{{\E{\substack{s \sim \rho_{\pi_{k,h}}\\ a\sim \pi_{k-1,h}}}}} \big  [{\rho(\theta_{h})}A_{C_i, h}^{\pib_{k-1}} (s,a) \big ] + J_{C_i}(\pib_{k-1})-d_i.
$$
$}
\end{center}
\vspace{-0.35cm}
Now introduce a slack variable $x_{t,i} \geq 0$ for each constraint to convert the inequality constraint ($\Psi_{C_{i}, t}(\cdot,\cdot) \leq 0 $) to equality by letting 
$$w_{t,i}(\pib_{k}) := \Psi_{C_{i}, t}(\pib_{k-1}, \pib_{k}) + x_{t,i} = 0.$$ 

With this notation, we restate Problem \eqref{eq:p3po_objective_fn} as:

\begin{center}
\resizebox{0.92\linewidth}{!}{$
$$
 \pi_{k,t}^{\star} = \min_{\pi_{k,t}} \; {\Lcal}_{t}(\pib_{k}, \bm{\lambda}) \; := \;
\sum_{h=t}^{H} {\E{\substack{s \sim \rho_{\pi_{k,h}}\\ a\sim \pi_{k-1,h}}}} \big[-{\rho(\theta_{h})}A^{\pib_{k-1}}_{h}(s,a) \big] + \sum_{i}^{m} \lambda_{t,i} \max \{ 0, \Psi_{C_{i}, t}(\pib_{k-1}, \pib_{k}) \}.
$$
$}   
\end{center}
\vspace{-0.35cm}

Now we introduce the quadratic damping term and the intermediate loss function then takes the form,
\begin{equation}
\resizebox{.78\linewidth}{!}{$
\begin{aligned}
\Lcal_{t}(\pib_{k}, \bm{\lambda}, \bm{x}, \beta) &:=  \sum_{h=t}^{H} {\E{\substack{s \sim \rho_{\pi_{k,h}}\\ a\sim \pi_{k-1,h}}}} \big[-{\rho(\theta_{h})}A^{\pib_{k-1}}_{h}(s,a) \big] + \sum_{i}^{m} \lambda_{t,i} w_{t,i}(\pib_{k}) + \frac{\beta}{2} \sum_{i}^{m} w_{t,i}^{2}(\pib_{k}) \\
\text{Then,} &\quad\quad (\pi_{k,t}^{\star}, \bm{\lambda}_{t}^{\star}, \bm{x}_{t}^{\star}) = \max_{\bm{\lambda} \geq 0} \min_{\pi_{k,t}, \bm{x}} \Lcal_{t}(\pib_{k}, \bm{\lambda}, \bm{x}, \beta) \; , \hspace{1cm}
\label{eq:e-COP_initial_fn}
\end{aligned}
$} 
\end{equation}
where $\beta$ is the damping factor, $\bm{\lambda}_{t} = \left(\lambda_{t,i}\right)_{i=1}^{m}$, and $\bm{x}_{t} = \left(x_{t,i}\right)_{i=1}^{m}$. We can then construct a primal-dual solution to the max-min optimization problem. The need for a slack variable $\bm{x}$ can be obviated by setting the partial derivative of $\Lcal_{t}(\cdot)$ with respect to $\bm{x}$ equal to 0. This leads to a ReLU-like solution: $x_{t,i}^{\star} = \max \big(0, - \Psi_{C_{i}, t}(\pib_{k-1}, \pib_{k}) - \frac{\lambda_{t,i}}{\beta} \big)$. The intermediate problem then takes the form as below.

\begin{restatable}{proposition}{dampedintermediateproblem}
\label{prop:damped_intermediate_problem}
The inner optimization problem in \eqref{eq:e-COP_initial_fn} with respect to $\bm{x}$ is a convex quadratic program with
non-negative constraints, which can be solved to yield the following intermediate problem:
\begin{equation}
\resizebox{.92\linewidth}{!}{$
\begin{aligned}
 (\pi_{k,t}^{\star}, \bm{\lambda}_{t}^{\star}) = \max_{\bm{\lambda} \geq 0} \min_{\pi_{k,t}} & \; \, \Lcal_{t}(\pib_{k}, \bm{\lambda}, \beta), \quad \text{\small where} \\
\Lcal_{t}(\pib_{k}, \bm{\lambda}, \beta) = \sum_{h=t}^{H} {\E{\substack{s \sim \rho_{\pi_{k,h}}\\ a\sim \pi_{k-1,h}}}} \big[-{\rho(\theta_{h})}A^{\pib_{k-1}}_{h}(s,a) \big] &+ \frac{\beta}{2} \sum_{i}^{m} \bigg( \max \bigg\{0, \Psi_{C_{i}, t}(\pib_{k-1}, \pib_{k}) + \frac{\lambda_{t,i}}{\beta} \bigg\}^{2} - \frac{\lambda_{t,i}^{2}}{\beta^{2}}  \bigg). 
\end{aligned}
$}
\label{eq:damped_intermediate_problem}
\end{equation}
\end{restatable}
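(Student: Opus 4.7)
The plan is to treat the $\bm{x}$-subproblem as a separable, strictly convex quadratic program over the orthant $\{\bm{x} \ge 0\}$, solve it in closed form by KKT, substitute the minimizer back, and simplify until the expression matches \eqref{eq:damped_intermediate_problem}. Because the advantage term in $\Lcal_t(\pib_k, \bm{\lambda}, \bm{x}, \beta)$ is independent of $\bm{x}$, the $\bm{x}$-minimization decouples across $i$ into $m$ one-dimensional problems
\[
  \min_{x_{t,i} \ge 0} f_i(x_{t,i}), \quad f_i(x) := \lambda_{t,i}\bigl(\Psi_{C_i,t} + x\bigr) + \tfrac{\beta}{2}\bigl(\Psi_{C_i,t} + x\bigr)^2,
\]
where I abbreviate $\Psi_{C_i,t} := \Psi_{C_i,t}(\pib_{k-1},\pib_k)$, which is held fixed in this subproblem. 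For $\beta>0$ each $f_i$ is a strictly convex quadratic in $x$ with a single non-negativity constraint, which already establishes the convex-QP claim of the proposition.

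Next I would solve each one-dimensional subproblem in closed form. The unconstrained stationary point of $f_i$ is $\widehat{x}_i = -\Psi_{C_i,t} - \lambda_{t,i}/\beta$; projecting onto $\{x \ge 0\}$ (valid because $f_i$ is strictly convex) yields
\[
  x_{t,i}^\star \;=\; \max\!\Bigl(0,\; -\Psi_{C_i,t} - \tfrac{\lambda_{t,i}}{\beta}\Bigr),
\]
which recovers the ReLU-like formula stated immediately before the proposition. Substituting $x_{t,i}^\star$ back into $f_i$ calls for a brief case split: when $\Psi_{C_i,t} + \lambda_{t,i}/\beta \ge 0$ (so $x_{t,i}^\star = 0$), completing the square in $f_i(0)$ gives $\tfrac{\beta}{2}(\Psi_{C_i,t} + \lambda_{t,i}/\beta)^2 - \tfrac{\lambda_{t,i}^2}{2\beta}$; when $\Psi_{C_i,t} + \lambda_{t,i}/\beta < 0$, direct substitution of $\widehat{x}_i$ yields $-\tfrac{\lambda_{t,i}^2}{2\beta}$, which is what the previous expression returns once the $\max\{0,\cdot\}^2$ term vanishes. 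Both branches therefore collapse into the unified form $\tfrac{\beta}{2}\max\{0, \Psi_{C_i,t} + \lambda_{t,i}/\beta\}^2 - \tfrac{\lambda_{t,i}^2}{2\beta}$. Summing over $i \in [m]$ and re-attaching the advantage term produces exactly the displayed $\Lcal_t(\pib_k, \bm{\lambda}, \beta)$ of \eqref{eq:damped_intermediate_problem}, and the outer $\max_{\bm{\lambda}\ge 0}\min_{\pi_{k,t}}$ structure of \eqref{eq:e-COP_initial_fn} is preserved because $\bm{x}$ was eliminated independently of both $\pib_k$ and $\bm{\lambda}$.

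The argument is essentially convex-analytic and routine, so I do not anticipate a serious conceptual obstacle. The one care point is the algebraic gluing step that fuses the two branches of the case analysis into a single $\max\{0,\cdot\}^2$ expression, and the bookkeeping check that the additive constant $-\lambda_{t,i}^2/(2\beta)$ produced by completing the square matches the $-\tfrac{\beta}{2}\cdot(\lambda_{t,i}^2/\beta^2)$ appearing inside the stated form. Continuity of $x_{t,i}^\star$ and of $f_i(x_{t,i}^\star)$ at the transition point $\Psi_{C_i,t} + \lambda_{t,i}/\beta = 0$ makes this gluing immediate.
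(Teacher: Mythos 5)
Your proposal is correct and follows essentially the same route as the paper's proof: both solve the separable convex quadratic subproblem in $\bm{x}$ via its projected stationary point $x_{t,i}^{\star}=\max\{0,-\Psi_{C_i,t}-\lambda_{t,i}/\beta\}$, substitute back, and simplify to the $\frac{\beta}{2}\big(\max\{0,\Psi_{C_i,t}+\lambda_{t,i}/\beta\}^{2}-\lambda_{t,i}^{2}/\beta^{2}\big)$ form. The only cosmetic difference is that the paper substitutes via the identity $w_{t,i}=\max\{0,\lambda_{t,i}/\beta+\Psi_{C_i,t}\}-\lambda_{t,i}/\beta$ and expands algebraically in one pass, whereas you perform an explicit two-case analysis and glue the branches; the algebra agrees in both.
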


The proof can be found in Appendix \ref{proof:damped_intermediate_problem}. One can see that the cost penalty is active when $\Psi_{C_{i}, t}(\pib_{k-1}, \pib_{k}) \geq - \frac{\lambda_{t,i}}{\beta}$ rather than when $\Psi_{C_{i}, t}(\pib_{k-1}, \pib_{k}) \geq 0$. This allows the agent to act in a constrained manner even before the constraint is violated. Further, as we show next, the introduction of the damping factor and the RELU-like penalty does not change the solution of the problem (under some suitable assumptions):

\begin{restatable}{theorem}{relusolutionsame}
\label{th:relu_solution_same} Let $\pi^{{\eqref{eq:cpo_improvement_vanilla}^{\star}}}$ be a solution to Problem \eqref{eq:cpo_improvement_vanilla}, and let $\big(\pi^{{\eqref{eq:damped_intermediate_problem}^{\star}}}, \bm{\lambda}^{{\eqref{eq:damped_intermediate_problem}^{\star}}} \big)$ be a solution to Problem \eqref{eq:damped_intermediate_problem}. Then, for sufficiently large $\beta > \bar\beta$ and $\lambda_{t,i} > \bar\lambda \forAll i$, $\pi^{{\eqref{eq:cpo_improvement_vanilla}^{\star}}}$ is a solution to Problem \eqref{eq:damped_intermediate_problem}, and $\pi^{{\eqref{eq:damped_intermediate_problem}}^{\star}}$ is a solution to Problem \eqref{eq:cpo_improvement_vanilla}. 
\end{restatable}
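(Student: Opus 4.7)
The result is an instance of the classical equivalence between an inequality-constrained program and its Powell--Hestenes--Rockafellar (PHR) augmented Lagrangian. I would begin by observing that Problem \eqref{eq:damped_intermediate_problem} is exactly the PHR augmented Lagrangian associated with Problem \eqref{eq:cpo_improvement_vanilla}: writing the constraints compactly as $g_i(\pib_k) := \Psi_{C_i,t}(\pib_{k-1},\pib_k)\le 0$ (with $\pib_{k-1}$ fixed) and the reward loss as $f(\pib_k)$, the slack-variable construction $w_{t,i} = g_i + x_{t,i} = 0$ with quadratic penalty in \eqref{eq:e-COP_initial_fn}, after the closed-form $\bm{x}$-minimisation carried out in Proposition \ref{prop:damped_intermediate_problem}, becomes exactly \eqref{eq:damped_intermediate_problem}. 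The task then reduces to matching saddle points of the two formulations.

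\textbf{Forward direction.} Let $\bm{\lambda}^{\star}$ be KKT multipliers for $\pi^{{\eqref{eq:cpo_improvement_vanilla}^{\star}}}$, assuming LICQ and second-order sufficient conditions (SOSC) at the solution. I would verify that $(\pi^{{\eqref{eq:cpo_improvement_vanilla}^{\star}}}, \bm{\lambda}^{\star})$ satisfies the saddle-point conditions of \eqref{eq:damped_intermediate_problem}. For active constraints ($g_i = 0$, $\lambda_i^{\star}\ge 0$), the penalty gradient reduces to $\beta\max\{0,\lambda_i^{\star}/\beta\}\nabla g_i = \lambda_i^{\star}\nabla g_i$; for inactive constraints ($g_i<0$, $\lambda_i^{\star} = 0$), the clipped argument is negative, so the max-term contributes nothing. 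Primal stationarity of $\Lcal_t$ therefore coincides with the KKT gradient at $\pi^{{\eqref{eq:cpo_improvement_vanilla}^{\star}}}$, and the standard Debreu-lemma argument promotes it to a strict local minimiser once $\beta > \bar\beta$: the augmented Hessian $\nabla^{2} f + \sum_i \lambda_i^{\star}\nabla^{2} g_i + \beta\sum_{i\in\mathrm{active}}\nabla g_i\nabla g_i^{\top}$ becomes positive definite on the tangent space of the active constraints, with $\bar\beta$ set by the SOSC gap and constraint curvature. The dual-side condition then reduces to $g_i\le 0$ and $\lambda_i^{\star}g_i = 0$, which hold by KKT.

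\textbf{Reverse direction.} Let $(\pi^{\dagger},\bm{\lambda}^{\dagger})$ be a saddle point of \eqref{eq:damped_intermediate_problem}. Defining $\mu_i := \beta\max\{0, g_i(\pi^{\dagger}) + \lambda_i^{\dagger}/\beta\}\ge 0$, the inner-min first-order condition reads $\nabla f(\pi^{\dagger}) + \sum_i \mu_i \nabla g_i(\pi^{\dagger}) = 0$, a KKT-type equation. Differentiating $\Lcal_t$ in $\lambda_i$ at the outer max yields either $g_i(\pi^{\dagger}) = 0$ (when the clipped argument is positive) or $\lambda_i^{\dagger} = 0$ (otherwise); imposing $\lambda_i^{\dagger} > \bar\lambda > 0$ rules out the second branch on active constraints, so $g_i(\pi^{\dagger})\le 0$ with $\mu_i = \lambda_i^{\dagger}$ and complementary slackness hold. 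Hence $\pi^{\dagger}$ is feasible for \eqref{eq:cpo_improvement_vanilla} and a KKT point with multipliers $\bm{\lambda}^{\dagger}$. Combining this with the saddle-value inequality $\Lcal_t(\pi^{\dagger},\bm{\lambda}^{\dagger},\beta)\le \Lcal_t(\pi^{{\eqref{eq:cpo_improvement_vanilla}^{\star}}},\bm{\lambda}^{\star},\beta) = f(\pi^{{\eqref{eq:cpo_improvement_vanilla}^{\star}}})$, where the augmented-penalty terms vanish at both saddle points by feasibility and complementary slackness, yields $f(\pi^{\dagger})\le f(\pi^{{\eqref{eq:cpo_improvement_vanilla}^{\star}}})$, so $\pi^{\dagger}$ also solves \eqref{eq:cpo_improvement_vanilla}.

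\textbf{Main obstacle.} The non-convexity of the parametric policy class means the arguments naturally live at KKT points and local minima, so the cleanest statement matches local minima satisfying LICQ and SOSC; passing to global optima requires these conditions at every candidate minimiser. Quantitatively, $\bar\beta$ must absorb the second-order deficit of the Lagrangian and the curvature of the constraints, while $\bar\lambda$ must dominate the KKT multipliers of \eqref{eq:cpo_improvement_vanilla} so the dual-gradient condition in the reverse direction cleanly selects its feasible set rather than admitting spurious saddles at $\lambda_i = 0$. A final sanity check is that the analytical $\bm{x}$-elimination in Proposition \ref{prop:damped_intermediate_problem} preserves the saddle structure, which holds because the inner $\bm{x}$-problem is a strictly convex quadratic over $\bm{x}\ge 0$ whose unique minimiser has already been substituted.
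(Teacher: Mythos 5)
Your proposal is correct and follows essentially the same route as the paper: the paper's Appendix proof also treats Problem \eqref{eq:damped_intermediate_problem} as the slack-variable/PHR augmented Lagrangian of Problem \eqref{eq:cpo_improvement_vanilla}, proves the forward direction via KKT stationarity plus the same Debreu-type argument that $\nabla^2\Lcal^R + \beta\,\nabla\bm{w}\,\nabla\bm{w}^{\top}$ is positive definite on the tangent space for $\beta>\bar\beta$, and proves the reverse direction via feasibility, complementary slackness, and the same case split on the sign of $\lambda_{t,i}/\beta+\Psi_{C_i,t}$. The only organizational difference is that the paper factors the argument through two intermediate lemmas (ReLU form $\leftrightarrow$ slack form, then slack form $\leftrightarrow$ plain Lagrangian), whereas you present it as a single saddle-point matching; the mathematical content is the same.
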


We refer the reader to Appendix \ref{proof:relu_solution_same} for the proof. This theorem implies that we can search for the optimal feasible policies of the CMDP Problem \eqref{eq:cmdp_problem} by iteratively solving Problem \eqref{eq:damped_intermediate_problem}. Next, we make some further modifications to Problem \eqref{eq:damped_intermediate_problem} that give us our final tractable algorithm.

\noindent\textbf{Removing Lagrange multiplier dependency.} Problem \eqref{eq:damped_intermediate_problem} requires a primal-dual algorithm that will iteratively solve over the policies and the dual variable $\lambda$. But from the Lagrangian, we can actually take a derivative with respect to $\lambda$, and then solve for it, which yields the following update rule for it:
\begin{equation}
\lambda_{t,i}^{(k)} = \max \big(0, \lambda_{t,i}^{(k-1)} + \beta \Psi_{C_{i}, t}(\pib_{k-1}, \pib_{k-1}) \big).
\label{eq:lagrangeupdaterule}
\end{equation}
This update rule simplifies the optimization problem and updates the Lagrange multipliers in the $k{\textit{th}}$ episode based on the constraint violation in the $(k-1){\textit{th}}$ episode.

\noindent\textbf{Clipping the advantage functions.}
Solving the optimization problem presented in equation \eqref{eq:damped_intermediate_problem} is intractable since we do not know $\rp$ beforehand. Hence, we replace $\rp$ by the empirical distribution observed with the policy of the previous episode, $\pib_{k-1}$, i.e., $\rho_{\pi_{k,h}} \approx \rho_{\pi_{k-1,h}} \forAll h$. Similar to  \cite{schulman2017proximal} for PPO, we also  use \textit{clipped} surrogate objective functions for both the reward and cost advantage functions. Thus, the final problem combining equation \eqref{eq:p3po_objective_fn} and equation \eqref{eq:damped_intermediate_problem} can be constructed as follows.

If we let
\begin{equation*}
\resizebox{.825\linewidth}{!}{$
\begin{aligned}
\mathcal{L}_{t}(\theta) = \sum_{h=t}^{H} \mathop{{\E{\substack{s\sim \rho_{\pi_{k-1,h}} \\a\sim \pi_{k-1,h}}}}}\big [ -\min\big \{  {\rho(\theta_{h})}  A_{h}^{\pib_{k-1}}(s,a), \mathrm{clip} ({\rho(\theta_{h})}, \,  1-\epsilon, \, 1+\epsilon) A_{h}^{\pib_{k-1}} (s,a) \big \} \big ] \quad \text{and,} \\
\mathcal{L}_{C_i, t}(\theta) = \sum_{h=t}^{H} \mathop{{\E{\substack{s\sim \rho_{\pi_{k-1,h}} \\a\sim \pi_{k-1,h}}}}}\big [ -\min\big \{  {\rho(\theta_{h})}  A_{C_{i},h}^{\pib_{k-1}}(s,a), \mathrm{clip} ({\rho(\theta_{h})}, \,  1-\epsilon, \, 1+\epsilon) A_{C_{i},h}^{\pib_{k-1}} (s,a) \big \} \big ]
\end{aligned}
$}
\end{equation*}

then, the final loss function $\tilde{\Lcal}_{t}(\cdot)$ of the final problem is:


\begin{equation}
\resizebox{.93\linewidth}{!}{$
\begin{aligned}
\pi_{k,t}^{\star} = \argmin_{\pi_{k,t}} \tilde{\mathcal{L}}_{t}(\pi_{\theta}, \bm{\lambda}, \beta) := \argmin_{\pi_{k,t}} \;\;  & \mathcal{L}_{t}(\theta) + \sum_{i}^{m} \lambda_{t,i} \max \big\{0,\mathcal{L}_{C_i,t}(\theta) + \big(J_{C_i}(\pib_{k-1})-d_i \big) \big\} \\ & + \frac{\beta}{2} \sum_{i}^{m} \bigg( \max \bigg\{0,\mathcal{L}_{C_i,t}(\theta) + \big(J_{C_i}(\pib_{k-1})-d_i \big) + \frac{\lambda_{t,i}}{\beta} \bigg\}^{2} - \frac{\lambda_{t,i}^{2}}{\beta^{2}}  \bigg)
\end{aligned}
$}
\label{eq:main_e-COP_problem}
\end{equation}

Usually for experiments, Gaussian policies with means and variances predicted from neural networks are used \cite{schulman2015trust, achiam2017constrained, schulman2017proximal, yang2020projection}. We employ the same approach and since we work in the finite horizon setting, the reward and constraint advantage functions can  easily be calculated from any trajectory $\tau \sim \pib$. The surrogate problem in equation \eqref{eq:main_e-COP_problem} then includes the pessimistic bounds on Problem \eqref{eq:damped_intermediate_problem}, which is unclipped. 





\begin{algorithm}[!b]
   \caption{\textbf{E}pisodic \textbf{C}onstrained \textbf{O}ptimization of \textbf{P}olicies (\texttt{e-COP})}
\begin{algorithmic}[1]
   \STATE {\bfseries Input:} Initial policy $\theta_{0} := \pib_{0} := \pib_{\theta_0}$, critic networks $V^{\phi_0}$ and $V^{\psi_0}_{C_i} \forAll i$, penalty factor $\beta$, number of episodes $K$, episode horizon $H$, learning rate $\alpha$, penalty update factor $\mathcal{p}$.
    \FOR{$k = 1,2, \dots ,K$} 
    \STATE Collect a set of trajectories $\mathcal{D}_{k-1}$ with policy $\pib_{k-1}$ and update the critic network.
    \STATE Get updated $\lambda^{(k)}$ using equation \eqref{eq:lagrangeupdaterule}.
      \FOR{$t = H, H-1, \dots, 1$} 
            \STATE Update the policy $\theta_{k,t} \leftarrow \theta_{k,t+1} - \alpha \nabla_{\theta} \tilde{\mathcal{L}}_{t}(\theta, \lambda^{(k)}, \beta) $ using equation \eqref{eq:main_e-COP_problem}.
        \ENDFOR
      \IF{$\Ccal(\theta_{k}) \geq \mathcal{c}_{k}$}
      \STATE $\beta = \min(\beta_{\max}, \mathcal{p} \beta)$
      \ENDIF
    \ENDFOR
\end{algorithmic}
\label{alg:practical-ECOP}
\end{algorithm}

\noindent\textbf{Adaptive parameter selection.}
The value of $\beta$ is required to be larger than the unknown $\bar{\beta}$ according to Theorem \ref{th:relu_solution_same}, but we also know that too large a $\beta$ decays the performance (as seen in harmonic oscillator kinetic energy formulations  \cite{klimesch2012alpha, imayoshi2013oscillatory, yuan2018online}). To manage this tradeoff, we provide an instance-dependent adaptive way to adjust the damping factor as a hyperparameter. In each episode $k$, we update the damping parameter whenever a secondary constraint cost value denoted by $\Ccal(\pib_{k})$ is larger than some threshold $\mathcal{c}_{k}$. Using Proposition \ref{prop:damped_intermediate_problem}, we provide the following definitions.

$$
\Ccal(\pib_{k}) := \sum_{t=1}^{H} \sum_{i}^{m} \max \bigg\{ J_{C_i}(\pib_{k})-d_i ,  - \frac{\lambda_{t,i}^{(k)}}{\beta} \bigg\} \quad \text{and} \quad \mathcal{c}_{k} := \frac{\sqrt{m}}{\beta} \cdot \max_{t \in [H]} \big\| \bm{\lambda}^{(k)}_{t} \big\|_{\infty}
$$

Hence, we increase $\beta$ by a constant factor $\mathcal{p} > 1$ after every episode if $\Ccal(\pib_{k}) \geq \mathcal{c}_{k}$  until a stopping condition is fulfilled, typically a constant $\beta_{\max}$. This leads to constraint-satisfying iterations that are more stable, and we show that it enables a fixed $\beta$ to generalize well across various tasks. The initial $\beta$ can simply be selected by a quantified line-search to obtain a feasible $\beta > \bar{\beta}$ \cite{achiam2017constrained, agnihotri24acpo}.

We note that the final loss function in equation \eqref{eq:main_e-COP_problem} is differentiable almost everywhere, so we could easily solve it via any first-order optimizer \cite{kingma2014adam}. The final practical algorithm, \texttt{e-COP} is given in Algorithm \ref{alg:practical-ECOP}.

\section{Experiments}
\label{sec:experiments}

We conducted extensive experimental evaluation on the relative empirical performance of the \texttt{e-COP} algorithm to arrive at the following conclusions: 
(i) The \texttt{e-COP} algorithm performs better or nearly as well as all baseline algorithms for infinite-horizon discounted safe RL tasks in maximizing episodic return while satisfying given constraints.
(ii) It is more robust to stochastic and complex environments \cite{ray2019benchmarking}, even where previous methods struggle. 
(iii) It has stable behavior and more accurate cost control as compared to other baselines near the constraint threshold due to the damping term.


\noindent\textbf{Environments.}
For a comprehensive empirical evaluation, we selected eight scenarios from well-known safe RL benchmark environments - Safe MuJoCo \cite{zhang2020first} and Safety Gym \cite{ray2019benchmarking}, as well as MuJoCo environments.  These include: \texttt{Humanoid}, \texttt{PointCircle}, \texttt{AntCircle}, \texttt{PointReach}, \texttt{AntReach}, \texttt{Grid}, \texttt{Bottleneck}, and \texttt{Navigation}. See Figure \ref{fig:env_overview} for an overview of the tasks and scenarios. Note that \texttt{Navigation} is a multi-constraint task and for the \texttt{Reach} environment, we set the reward as a function of the Euclidean distance between agent's position and goal position. In addition, we make it impossible for the agent to reach the goal before the end of the episode. For more information see Appendix \ref{appendix:envs}.

\begin{figure}[t]
    \centering
    \subfloat[Humanoid]{
        \includegraphics[height=.12\textwidth, width=0.15\textwidth]{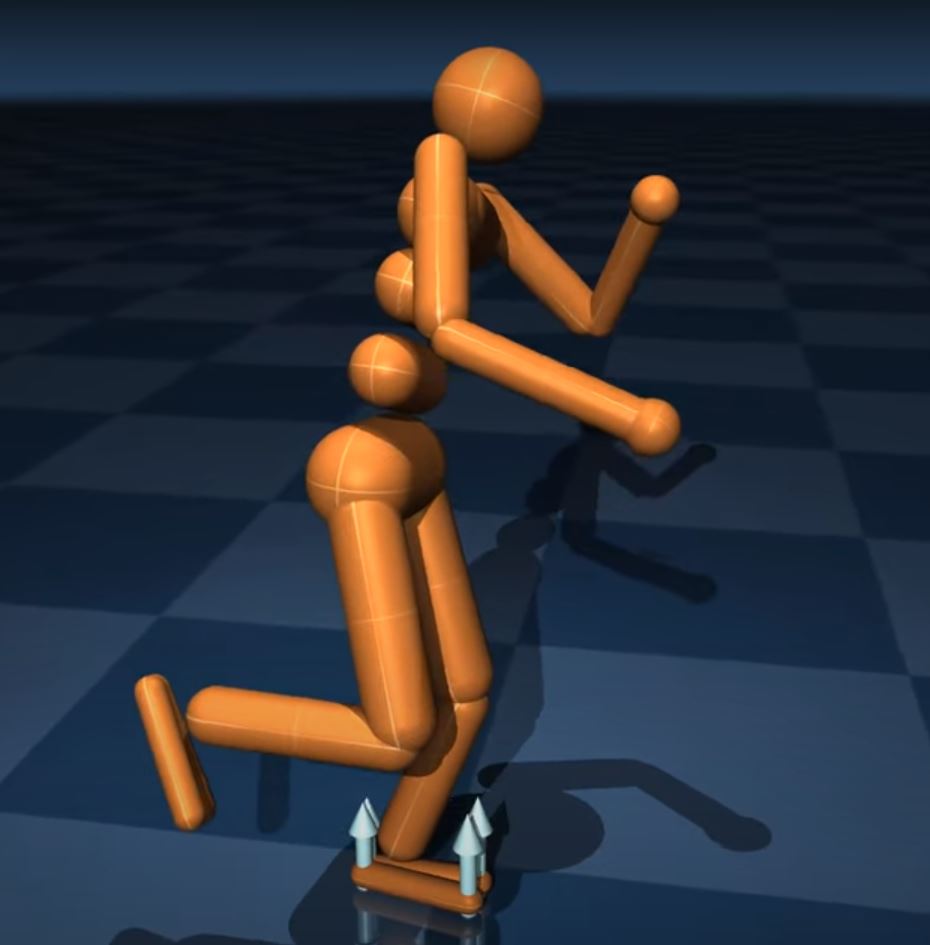}
        \label{fig:task_humanoid}
    }
    \subfloat[Circle]{
        \includegraphics[height=.12\textwidth, width=0.15\textwidth]{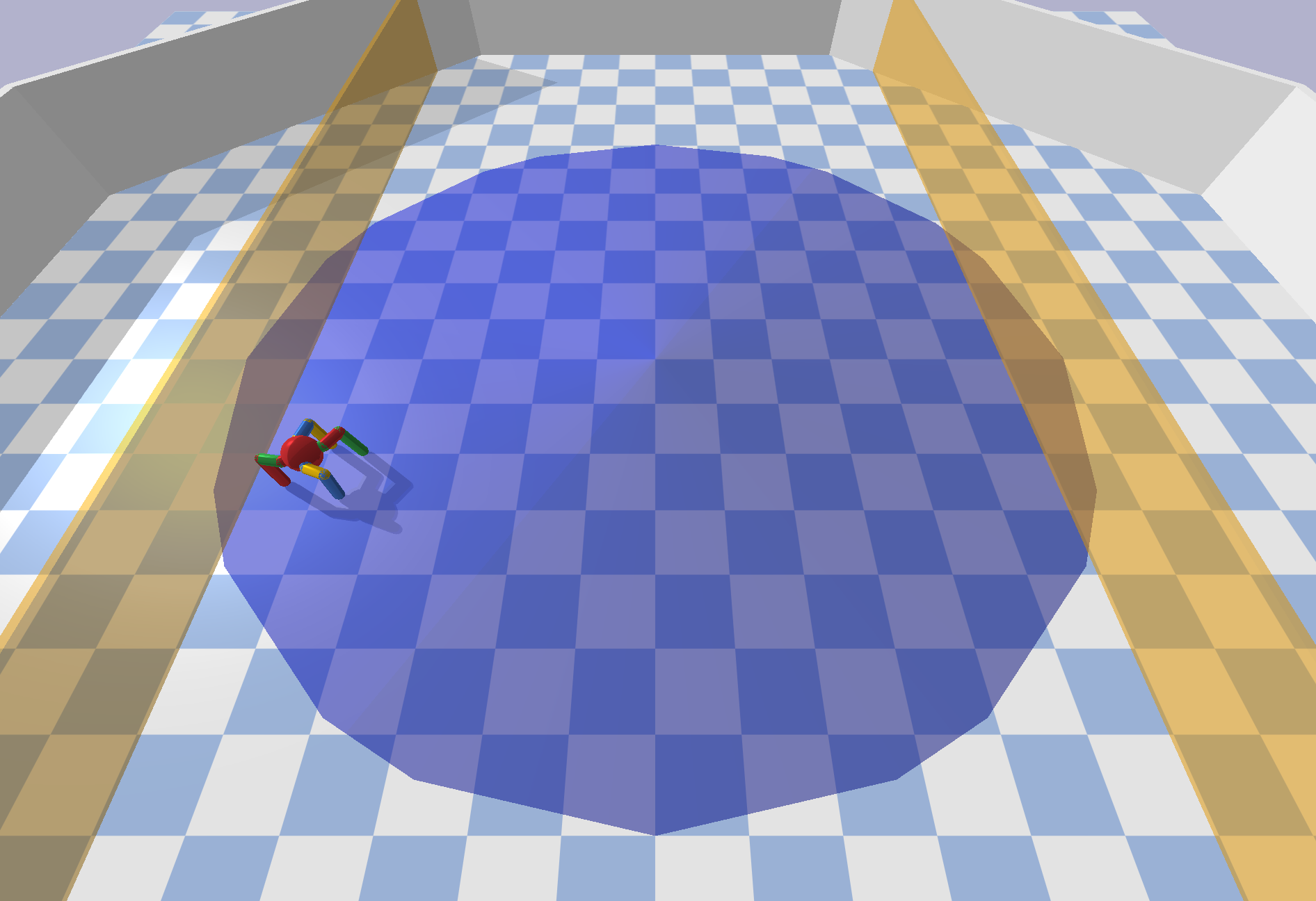}
        \label{fig:task_circle}
    }
    \subfloat[Reach]{
        \includegraphics[height=.12\textwidth, width=0.15\textwidth]{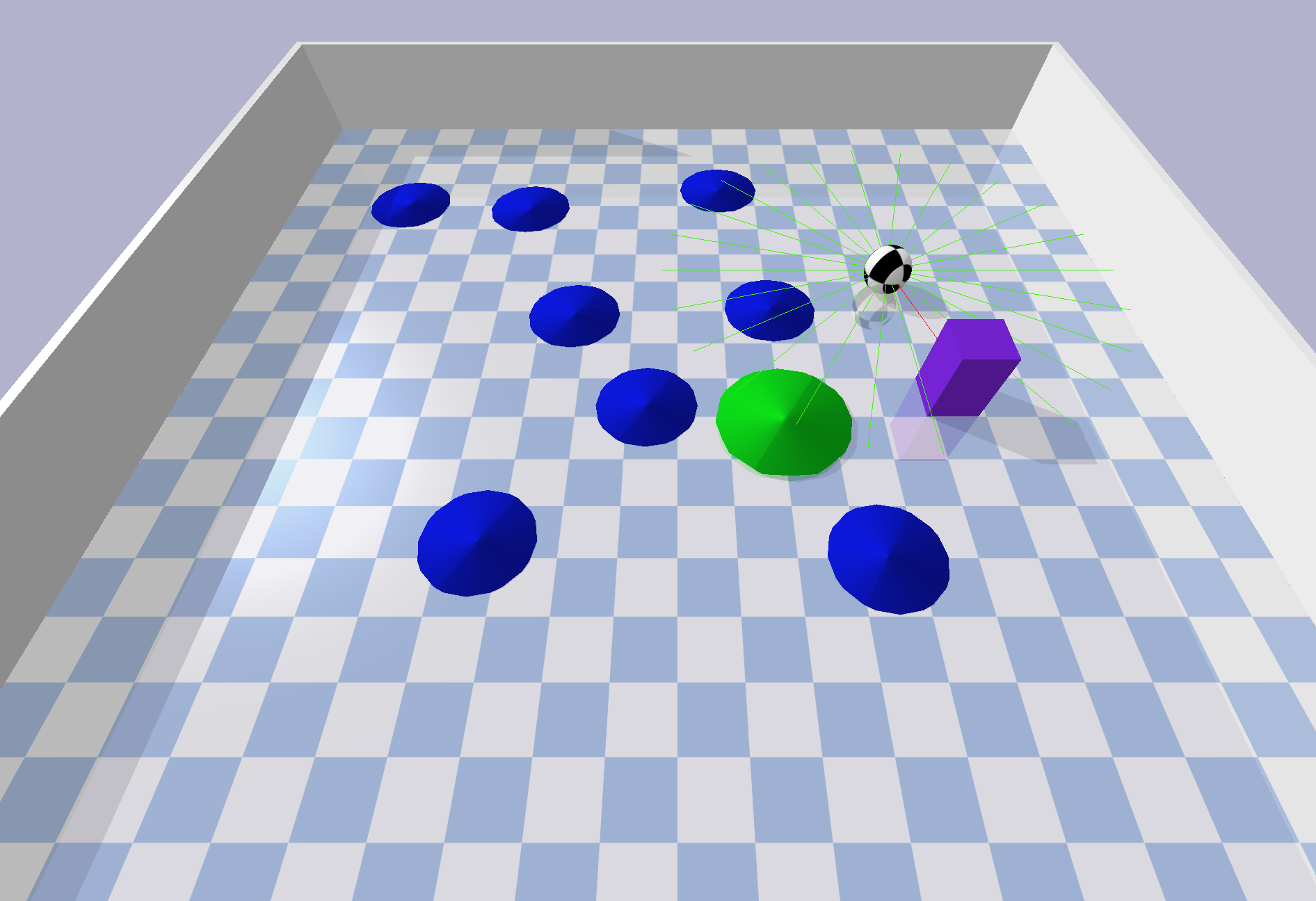}
        \label{fig:task_reach}
    }
    \subfloat[Grid]{
        \includegraphics[height=.12\textwidth, width=0.15\textwidth]{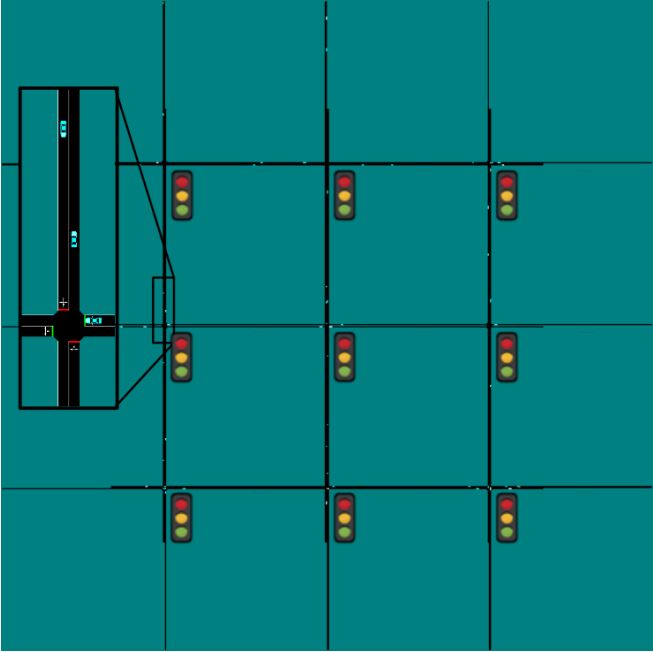}
        \label{fig:task_grid}
    }
    \subfloat[Bottleneck]{
        \includegraphics[height=.12\textwidth, width=0.15\textwidth]{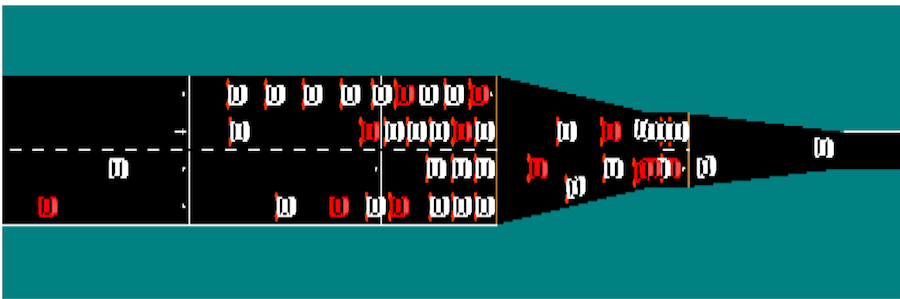}
        \label{fig:task_bottleneck}
    }
    \subfloat[Navigation]{
        \includegraphics[height=.12\textwidth, width=0.15\textwidth]{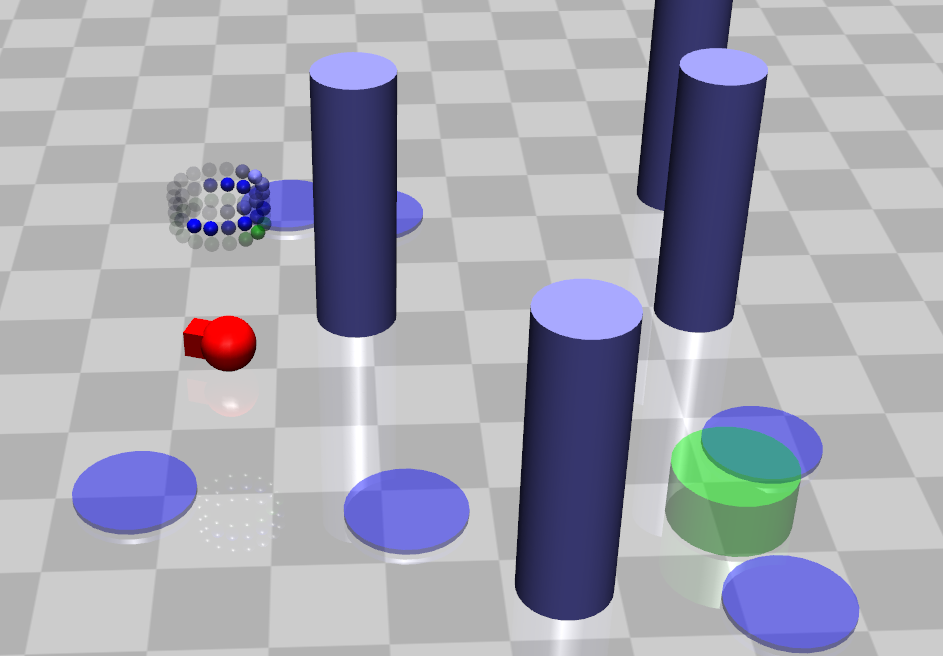}
        \label{fig:task_navigation}
    }
    \caption[]{\small The Humanoid, Circle, Reach, Grid, Bottleneck, and Navigation tasks. 
    See Appendix \ref{appendix:envs} for details.
    }
\label{fig:env_overview}
\vspace*{-11pt}
\end{figure}

\noindent\textbf{Baselines.}
We compare our \texttt{e-COP} algorithm with the following baseline algorithms:  CPO \cite{achiam2017constrained}, PCPO \cite{yang2020projection}, FOCOPS \cite{zhang2020first}, PPO with Lagrangian relaxation \cite{schulman2017proximal, stooke2020responsive}, and penalty-based P3O \cite{zhang2022penalized}. Since the above state-of-the-art baseline algorithms are already well understood to outperform other algorithms such as PDO \cite{chow2017risk}, IPO \cite{liu2020ipo}, and CPPO-PID \cite{stooke2020responsive} in prior benchmarking studies, we do not compare against them. Moreover, since PPO does not originally incorporate constraints, for fair comparison, we introduce constraints using a Lagrangian relaxation (called PPO-L). In addition, for each algorithm, we report its performance with the discount factor that achieves the best performance. See Appendix \ref{appendix:experimental_details} for more details.

\noindent\textbf{Evaluation Details and Protocol.}  For the Circle task, we use a  a point-mass with $S \subseteq \Rbb^{9}, A \subseteq \Rbb^{2}$ and for the Reach task, an ant robot  with $S \subseteq \Rbb^{16}, A \subseteq \Rbb^{8}$. The Grid task has $S \subseteq \Rbb^{56}, A \subseteq \Rbb^{4}$. We use two hidden layer neural networks to represent Gaussian policies for the tasks. For Circle and Reach, size is (32,32) for both layers, and for Grid and Navigation the layer sizes are (16,16) and (25,25). We set the step size $\delta$ to $10^{-4}$, and for each task, we conduct 5 independent runs of $K=500$ episodes each of horizon $H=200$. Since there are two objectives (rewards in the objective and costs in the constraints), we show the plots which maximize the reward objective while satisfying the cost constraint.

\subsection{Performance Analysis}

{
\renewcommand{\arraystretch}{1.1}
\begin{table*}
\centering
\fontsize{7}{9}\selectfont
\addtolength{\tabcolsep}{-0.69em}
\begin{tabular}{cl|cccccccc}
\hline
 \multicolumn{2}{c|}{Task}  & \texttt{e-COP} & FOCOPS \cite{zhang2020first} & PPO-L \cite{ray2019benchmarking} & PCPO \cite{yang2020projection} & P3O \cite{zhang2022penalized} & CPO \cite{achiam2017constrained} & APPO \cite{Dai2023}  & IPO \cite{liu2020ipo} \\
\hline
\multirow{2}*{Humanoid} & R & \cellcolor{CellColLight}{1652.5 $\pm$ 13.4} & \cellcolor{CellCol}{\textbf{1734.1 $\pm$ 27.4}} & 1431.2 $\pm$ 25.2 & 1602.3 $\pm$ 10.1  &  1669.4 $\pm$ 13.7 & 1465.1 $\pm$ 55.3 & 1488.2 $\pm$ 29.3 & 1578.6 $\pm$ 25.2 \\
 &  C (20.0) & \cellcolor{CellColLight}{17.3 $\pm$ 0.3}  &  \cellcolor{CellCol}{19.7} $\pm$ 0.6 & 18.8 $\pm$ 1.5 & 16.3 $\pm$ 1.4 & 20.1 $\pm$ 3.3 & 18.5 $\pm$ 2.9 & 20.0 $\pm$ 1.3 & 19.1 $\pm$ 2.5 \\ 
 \hline
\multirow{2}*{PointCircle} & R & \cellcolor{CellCol}{\textbf{110.5 $\pm$ 9.3}} & 81.6 $\pm$ 8.4 & 57.2 $\pm$ 9.2 & 68.2 $\pm$ 9.1  &  89.1 $\pm$ 7.1 & 65.3 $\pm$ 5.3 & 91.2 $\pm$ 9.6 & 68.7 $\pm$ 15.2 \\
 &  C (10.0) & \cellcolor{CellCol}{9.8 $\pm$ 0.9}  &  10.0 $\pm$ 0.4 & 9.8 $\pm$ 0.5 & 9.9 $\pm$ 0.4 & 9.9 $\pm$ 0.3 & 9.5 $\pm$ 0.9 & 10.2 $\pm$ 0.6 & 9.3 $\pm$ 0.5 \\ 
\hline
\multirow{2}*{AntCircle} & R & \cellcolor{CellCol}{\textbf{198.6 $\pm$ 7.4}} & 161.9 $\pm$ 22.2  & 134.4 $\pm$ 10.3 & 168.3 $\pm$ 13.3 & 182.6 $\pm$ 18.7 & 127.1 $\pm$ 12.1 & 155.5 $\pm$ 19.4 & 149.3 $\pm$ 33.6 \\
 &  C (10.0) & \cellcolor{CellCol}{9.8 $\pm$ 0.6} & 9.9 $\pm$ 0.5  & 9.6 $\pm$ 1.6  & 9.5 $\pm$ 0.6 & 9.8 $\pm$ 0.2 & 10.1 $\pm$ 0.7 & 10.0 $\pm$ 0.5 & 9.5 $\pm$ 1.0 \\
\hline
\multirow{2}*{PointReach} & R & \cellcolor{CellCol}{\textbf{81.5 $\pm$ 10.2}} & 65.1 $\pm$ 9.6 & 46.1 $\pm$ 14.8 & 73.2 $\pm$ 7.4 & 76.3 $\pm$ 6.4 & 89.2 $\pm$ 8.1  & 74.3 $\pm$ 6.7 & 49.1 $\pm$ 10.6 \\
 &  C (25.0)  & \cellcolor{CellCol}{24.5 $\pm$ 6.1} & 24.8 $\pm$ 7.6 & 25.1 $\pm$ 6.1  & 24.9 $\pm$ 5.6 & 26.3 $\pm$ 6.9 & 33.3 $\pm$ 10.7 & 26.3 $\pm$ 8.1 & 24.7 $\pm$ 11.3 \\
 \hline
\multirow{2}*{AntReach} & R & \cellcolor{CellColLight}{70.8 $\pm$ 14.6} & 48.3 $\pm$ 5.6  & 54.2 $\pm$ 9.5 & 39.4 $\pm$ 5.3  & \cellcolor{CellCol}{\textbf{73.6 $\pm$ 5.1}}  & 102.3 $\pm$ 7.1 & 61.5 $\pm$ 10.4 & 45.2 $\pm$ 13.3 \\
 &  C (25.0) & \cellcolor{CellColLight}{24.2 $\pm$ 8.4} & 25.1 $\pm$ 11.9  & 21.9 $\pm$ 10.7  & 27.9 $\pm$ 12.2 & \cellcolor{CellCol}{24.8 $\pm$ 7.3} & 35.1 $\pm$ 10.9 & 24.5 $\pm$ 6.4 & 24.9 $\pm$ 9.2 \\
 \hline
 & R & \cellcolor{CellColLight}{258.1 $\pm$ 33.1} & 215.4 $\pm$ 45.6 & \cellcolor{CellCol}{\textbf{276.3 $\pm$ 57.9}}  & 226.5 $\pm$ 29.2 & 201.5 $\pm$ 39.2 & 178.1 $\pm$ 23.8 & 184.4 $\pm$ 21.5 & 229.4 $\pm$ 32.8 \\
\multirow{-2}*{Grid} & C (75.0) &  \cellcolor{CellColLight}{71.3 $\pm$ 26.9} & 76.6 $\pm$ 29.8  & \cellcolor{CellCol}{71.8 $\pm$ 25.1} & 72.6 $\pm$ 16.5 & 79.3 $\pm$ 19.3 & 69.3 $\pm$ 19.8 & 79.5 $\pm$ 35.8 & 74.2 $\pm$ 24.6 \\ 
 \hline
 \multirow{2}*{Bottleneck} & R & \cellcolor{CellCol}{\textbf{345.1 $\pm$ 52.6}} & 251.3 $\pm$ 59.1 & 298.3 $\pm$ 71.2  & 264.2 $\pm$ 43.8 & 291.1 $\pm$ 26.7 & 388.1 $\pm$ 36.6 & 220.1 $\pm$ 30.1 & 279.3 $\pm$ 43.8 \\
 &  C (50.0) &  \cellcolor{CellCol}{49.7 $\pm$ 15.1} & 46.6 $\pm$ 19.8  & 41.4 $\pm$ 17.6 & 49.8 $\pm$ 10.5 & 45.3 $\pm$ 8.2 & 54.3 $\pm$ 13.5 & 47.4 $\pm$ 12.3 & 48.2 $\pm$ 14.6 \\
\hline
\multirow{3}*{Navigation} & R & \cellcolor{CellCol}{\textbf{217.6 $\pm$ 11.5}} & & 175.1 $\pm$ 3.7 &  &  153.5 $\pm$ 25.2  & & 135.7 $\pm$ 19.2 & 164.1 $\pm$ 12.8 \\
 &  C1 (10.0) &  \cellcolor{CellCol}{9.6 $\pm$ 1.5} & n/a &  9.9 $\pm$ 1.9 & n/a & 9.9 $\pm$ 1.7 & n/a &  9.9 $\pm$ 2.1 & 10.0 $\pm$ 0.5 \\
  &  C2 (25.0) & \cellcolor{CellCol}{23.7 $\pm$ 4.1}  & & 22.3 $\pm$ 2.1   & & 24.5 $\pm$ 4.1 & & 23.9 $\pm$ 3.8 & 24.6 $\pm$ 3.1 \\
\hline
\end{tabular}
\caption{\small Mean performance with normal 95\% confidence interval over 5 independent runs on some tasks. 
}
\label{table:rewards_costs_comparison}
\end{table*}
}

\begin{figure*}[t]
    \textbf{\small Episodic Rewards:} \vspace{0.2cm} \newline 
    {
        \includegraphics[width=0.23\textwidth]{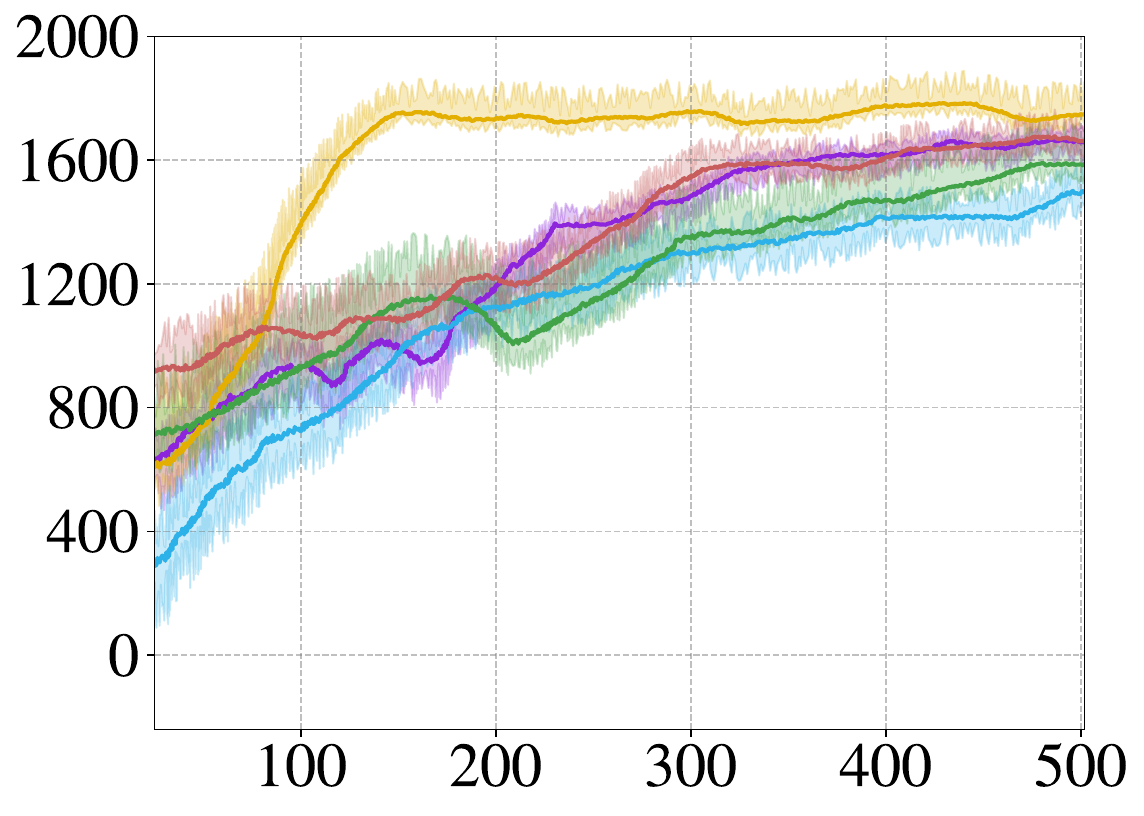}
        \label{fig:humanoid_reward}
    }
    {
        \includegraphics[width=0.23\textwidth]{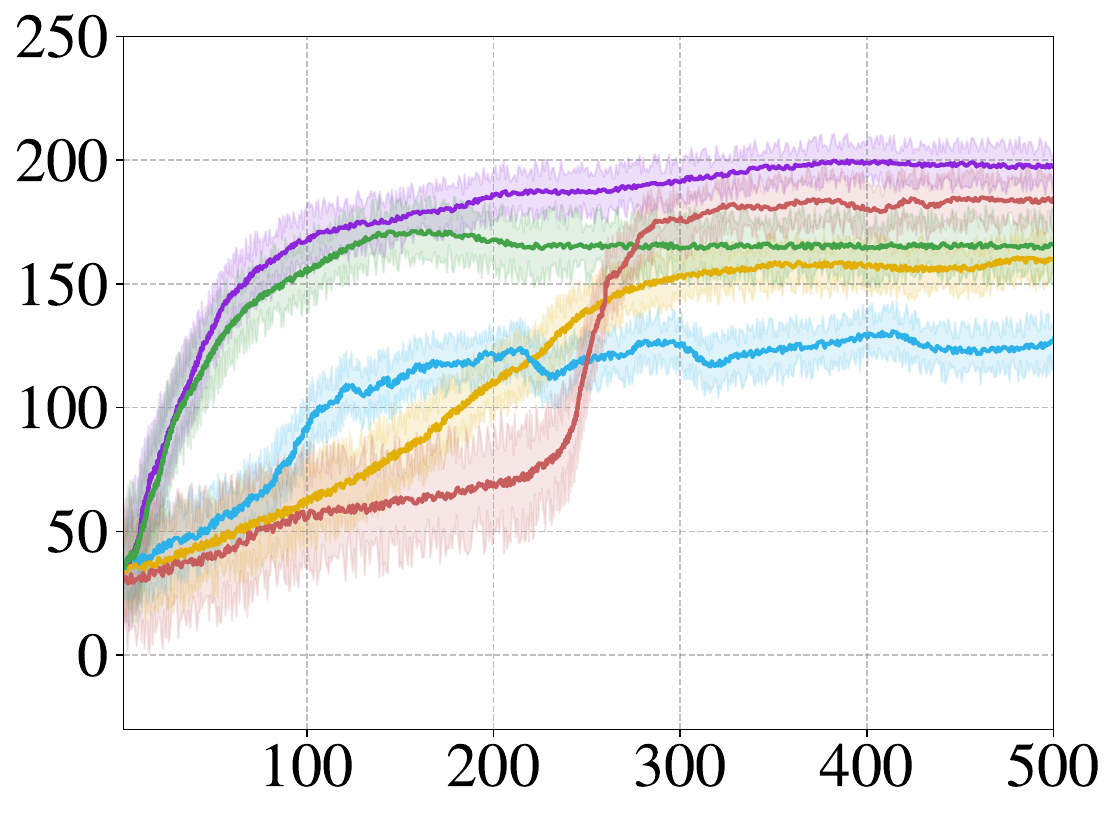}
        \label{fig:antcircle_reward}
    }
    {
        \includegraphics[width=0.23\textwidth]{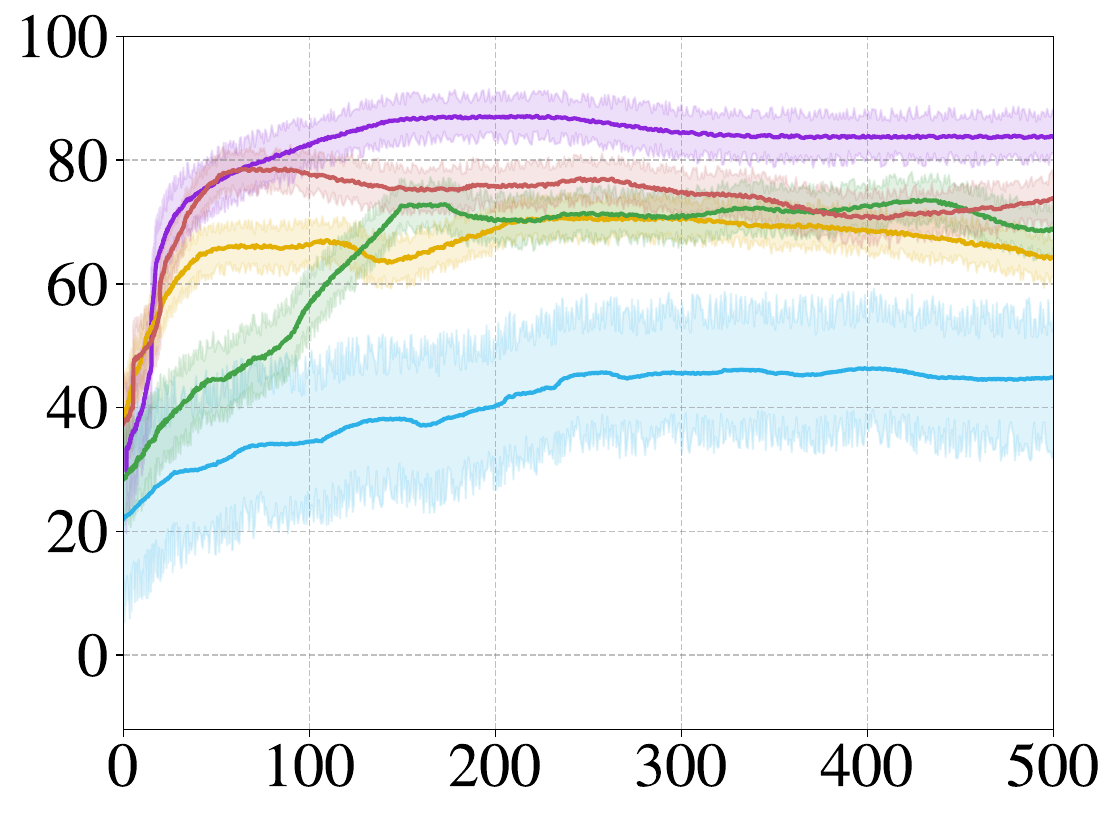}
        \label{fig:pointreach_reward}
    }
    {
        \includegraphics[width=0.23\textwidth]{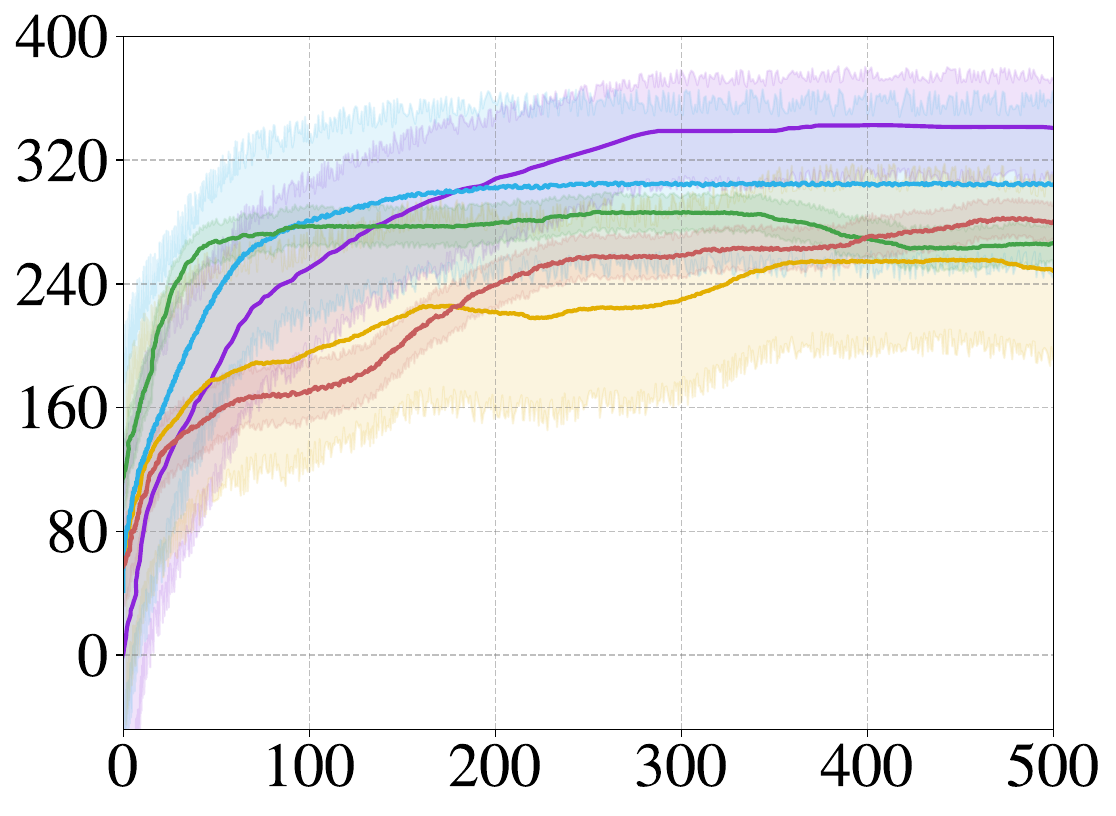}
        \label{fig:bottleneck_reward}
    }
    \newline
    \rule{\linewidth}{0.5pt}
    \hspace{2cm} \textbf{\small Constraint Costs:} 
    \hspace{2.75cm} \includegraphics[width=0.63\textwidth]{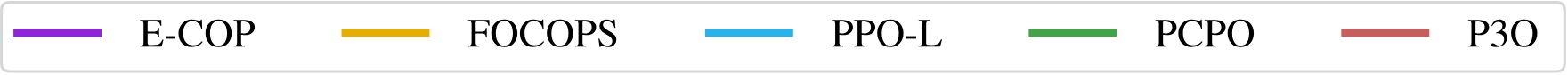} \newline
    \subfloat[Humanoid]{
        \includegraphics[width=0.23\textwidth]{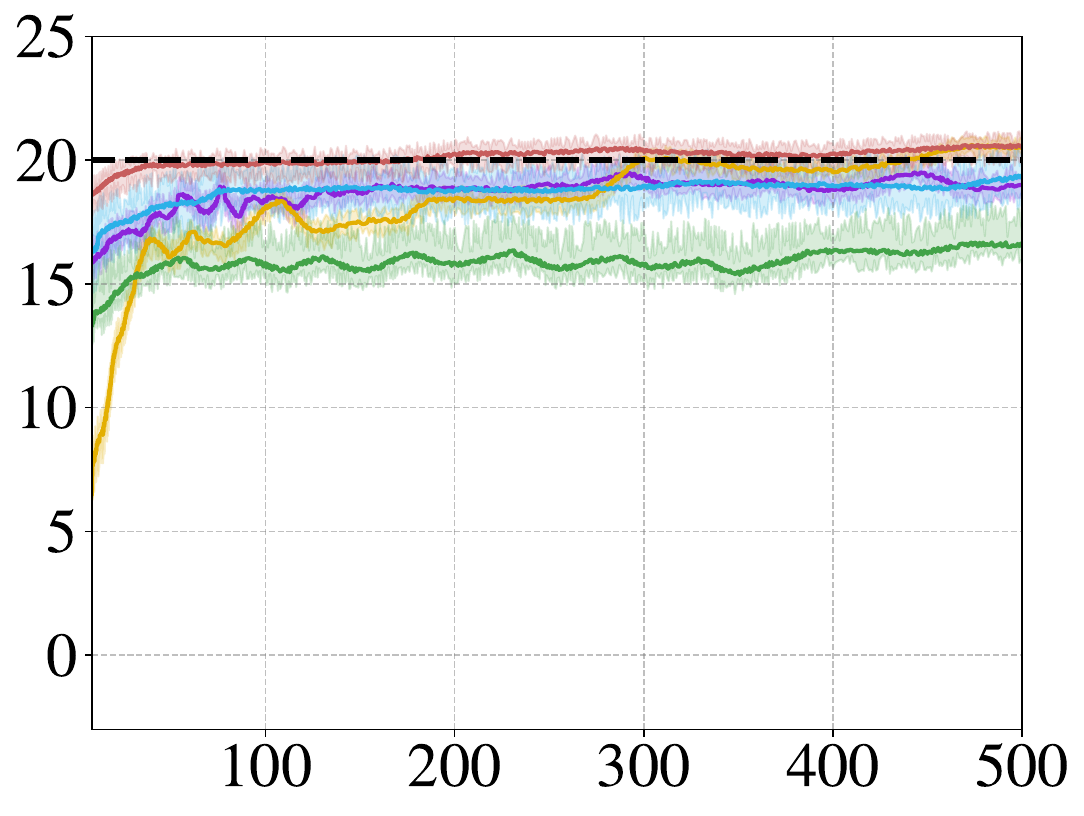}
        \label{fig:humanoid_cost}
    }
    \subfloat[Ant Circle]{
        \includegraphics[width=0.23\textwidth]{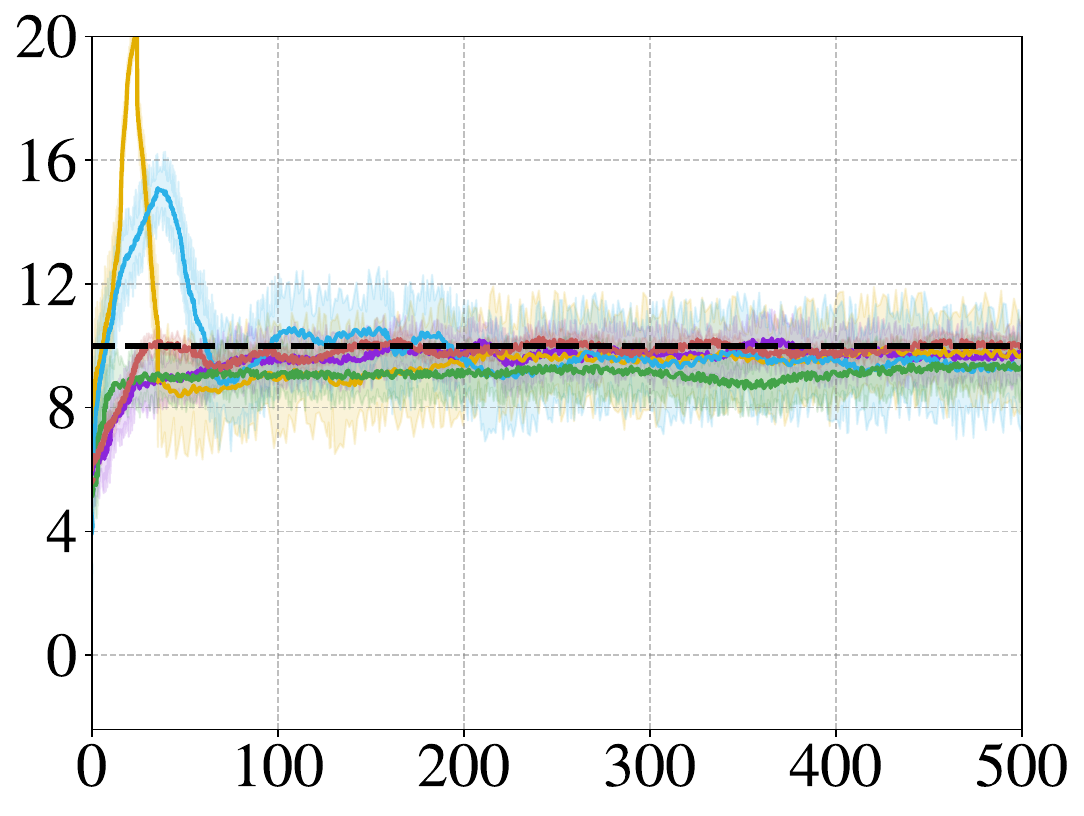}
        \label{fig:antcircle_cost}
    }
    \subfloat[Point Reach]{
        \includegraphics[width=0.24\textwidth]{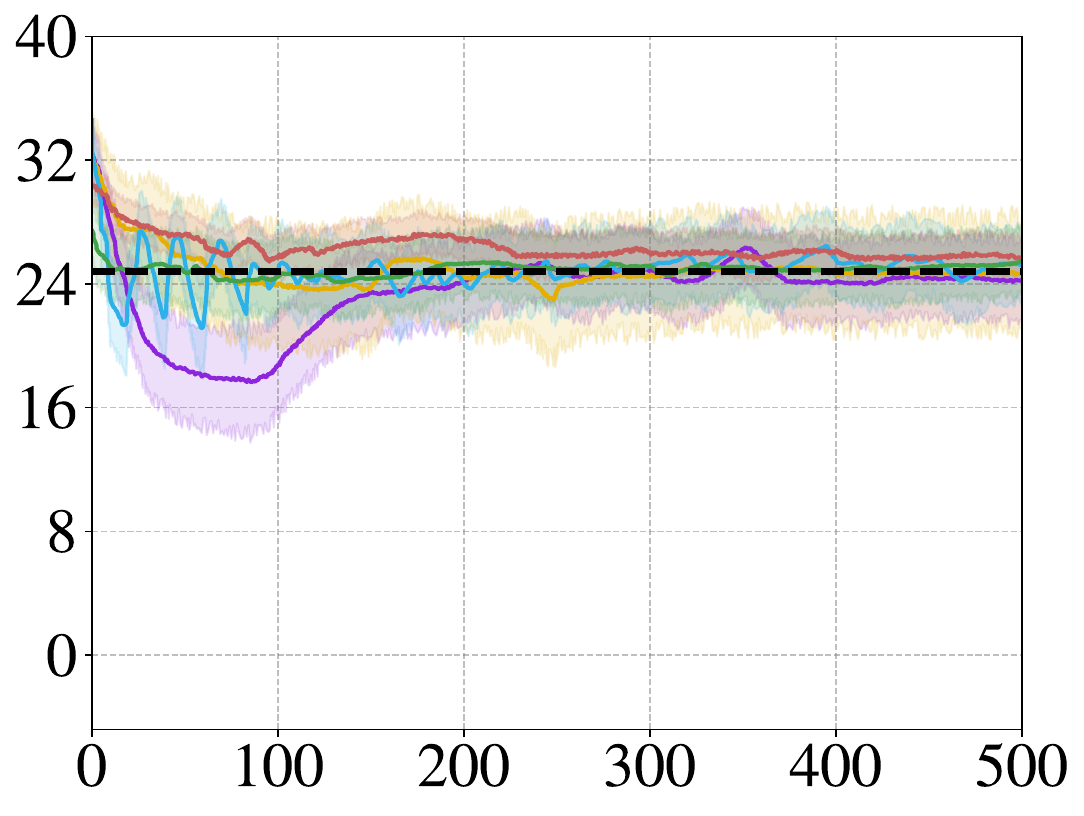}
        \label{fig:pointreach_cost}
    }
    \subfloat[Bottleneck]{
        \includegraphics[width=0.23\textwidth]{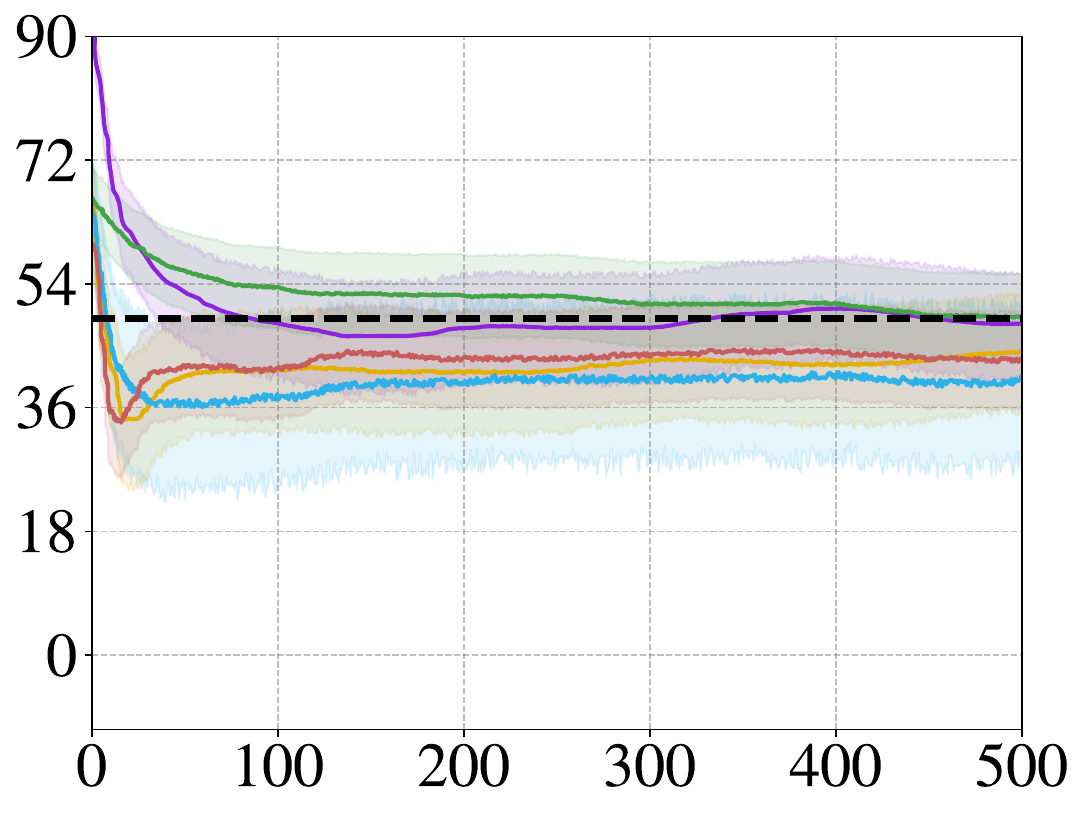}
        \label{fig:bottleneck_cost}
    }
    \caption{The cumulative episodic reward and constraint cost function values vs episode learning curves for some algorithm-task pairs. Solid lines in each figure are the means, while the shaded area represents 1 standard deviation, all over 5 runs. The dashed line in constraint plots is the constraint threshold.}
    \label{fig:rewards_costs_comparison}
\vspace*{-11pt}
\end{figure*}

\begin{figure*}[ht]
  \centering
  \includegraphics[height=0.03\linewidth, width=0.6\linewidth] {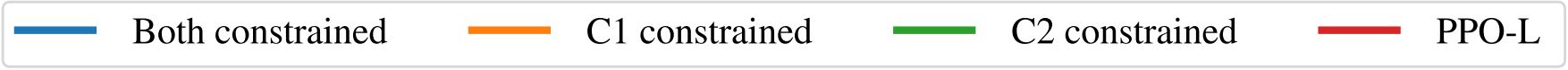} \vspace*{-\baselineskip}
  
    \subfloat[Rewards]{
        \includegraphics[height=0.125\linewidth, width=0.24\linewidth]{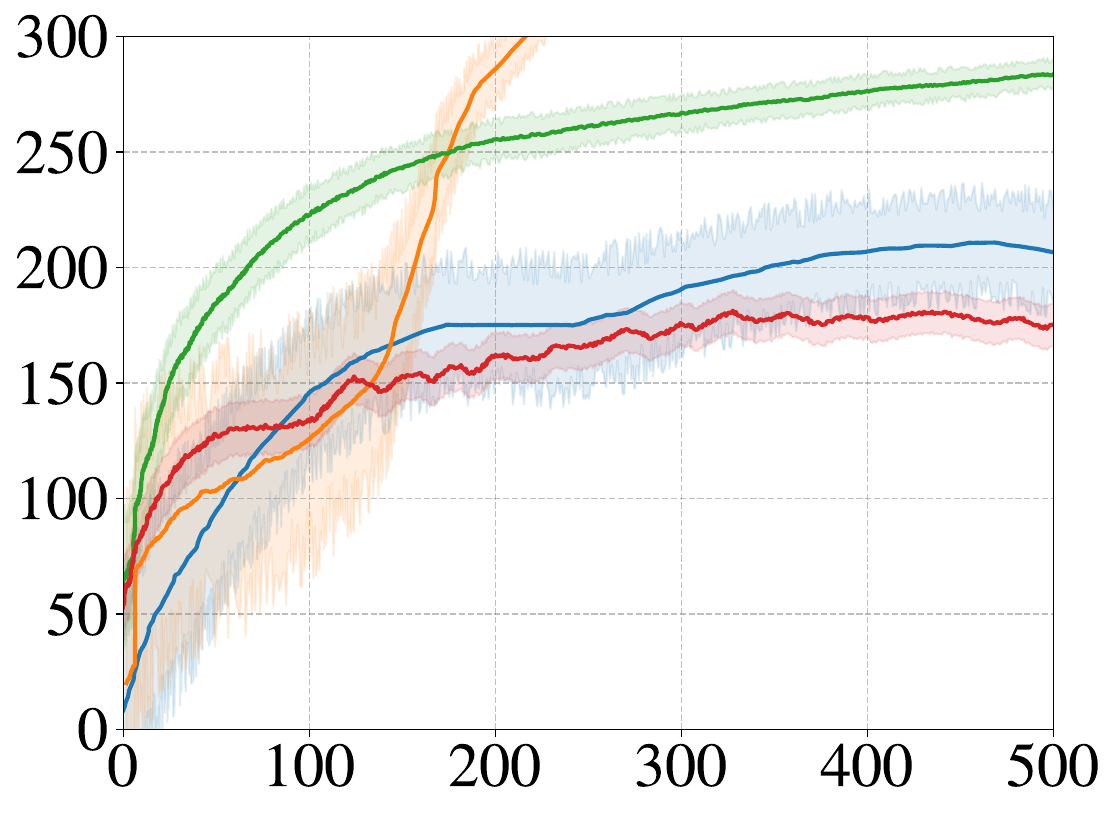}
        \label{fig:navigation_reward}
    }
    \subfloat[Cost1 (hazards)]{
        \includegraphics[height=0.125\linewidth, width=0.24\linewidth]{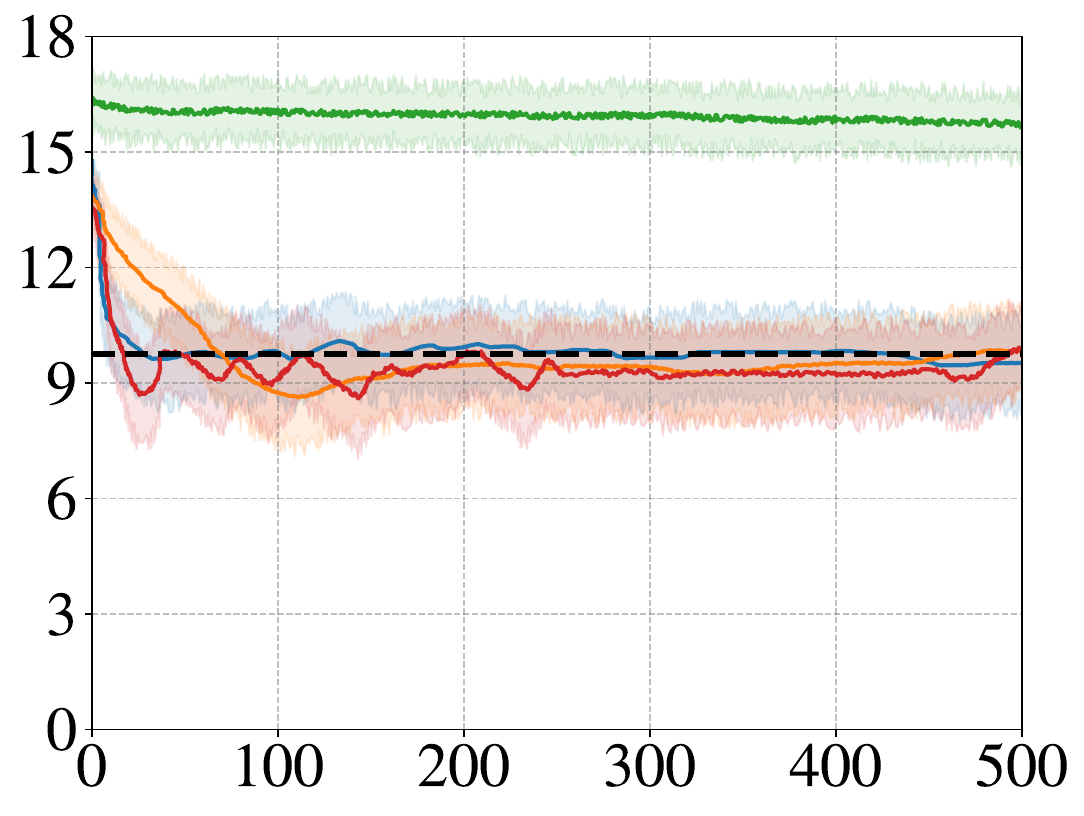}
        \label{fig:navigation_cost1}
    }
    \subfloat[Cost2 (pillars)]{
        \includegraphics[height=0.125\linewidth, width=0.24\linewidth]{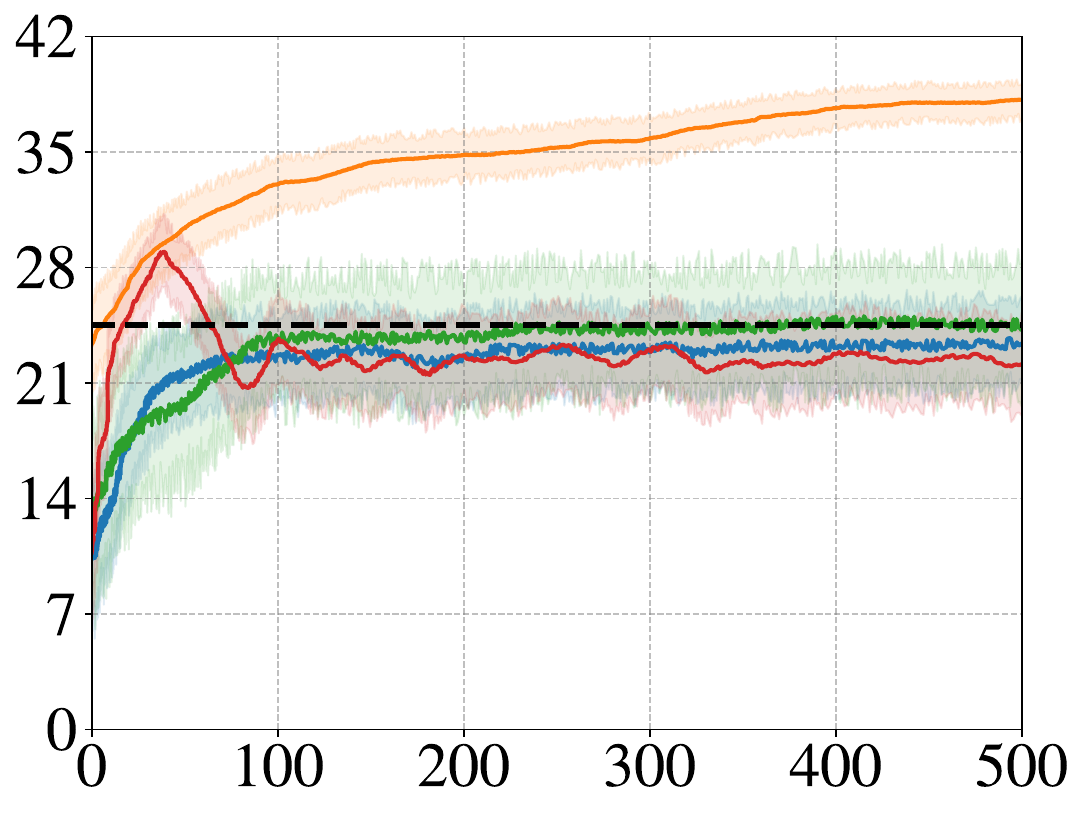}
        \label{fig:navigation_cost2}
    }
  \caption{\small \texttt{Navigation} environment with multiple  constraints: Episodic Rewards (left), Cost1 (center, for hazards) and Cost2 (right, for pillars) of \texttt{e-COP} . The dashed line in the cost plots is the cost threshold (10 for Cost1 and 25 for Cost2). C1/C2 constrained means only taking Cost1/Cost2 into the \texttt{e-COP} loss function and ignoring the other one.}
  \label{fig:navigation_rewards_costs}
  \vspace*{-11pt}
\end{figure*}

Table \ref{table:rewards_costs_comparison} lists the numerical performance of all tested algorithms in seven single constraint scenarios, and one multiple constraint scenario. We find that overall, the \texttt{e-COP} algorithm in most cases outperforms (green) all other baseline algorithms in finding the optimal policy while satisfying the constraints, and in other cases comes a close second (light green). 


From Figure \ref{fig:rewards_costs_comparison}, we can see how the \texttt{e-COP} algorithm is able to improve the reward objective over the baselines while having approximate constraint satisfaction. We also see that updates of \texttt{e-COP} are faster and smoother than other baselines due to the added damping penalty, which ensures smoother convergence with only a few constraint-violating behaviors during training. In particular, \texttt{e-COP} is the \emph{only} algorithm that best learns almost-constraint-satisfying maximum reward policies across \emph{all} tasks: in simple \texttt{Humanoid} and \texttt{Circle} environments, \texttt{e-COP} is able to almost exactly track the cost constraint values to within the given threshold. However, for the high dimensional \texttt{Grid} environment we have more constraint violations due to complexity of the policy behavior, leading to higher variance in episodic rewards as compared to other environments. Regardless, overall in these environments, \texttt{e-COP} still outperforms \emph{all} other baselines with the least episodic constraint violation. For the multiple constraint \texttt{Navigation} environment, see Figure \ref{fig:navigation_rewards_costs}.



\subsection{Secondary Evaluation}

In this section, we take a deeper dive into the empirical performance of \texttt{e-COP}. We discuss its dependence on various factors, and try to verify its merits.

\noindent\textbf{Generalizability.} From the discussion above, it's clear that \texttt{e-COP} demonstrates accurate safety satisfaction across tasks of varying difficulty levels. From Figure \ref{fig:general_rewards_costs_two},  we further see that \texttt{e-COP} satisfies the constraints in all cases and precisely converges to the specified cost limit. Furthermore, the fluctuation observed in the baseline Lagrangian-based algorithms is shown not to be tied to a specific cost limit.


We also conducted a set of experiments wherein we study how \texttt{e-COP} effectively adapts to different cost thresholds. For this, we use the hyperparameters of a pre-trained \texttt{e-COP} agent, which is trained with a particular cost threshold in an environment, for learning on different cost thresholds within the same environment. Figure \ref{fig:general_rewards_costs_three} in Appendix \ref{app:4.2} illustrates the training curves of these pre-trained agents, and we see that while \texttt{e-COP} can generalize well across different cost thresholds, other baseline algorithms may require further tuning to accommodate different constraint thresholds. 

\begin{figure*}[ht]
  \centering
  \includegraphics[width=0.22\linewidth]{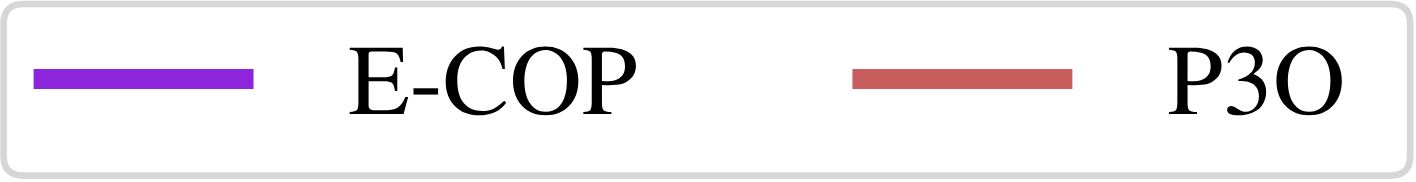}  \vspace*{-\baselineskip}

    \subfloat[Ant Circle Rewards]{
        \includegraphics[height=0.125\linewidth, width=0.23\linewidth]{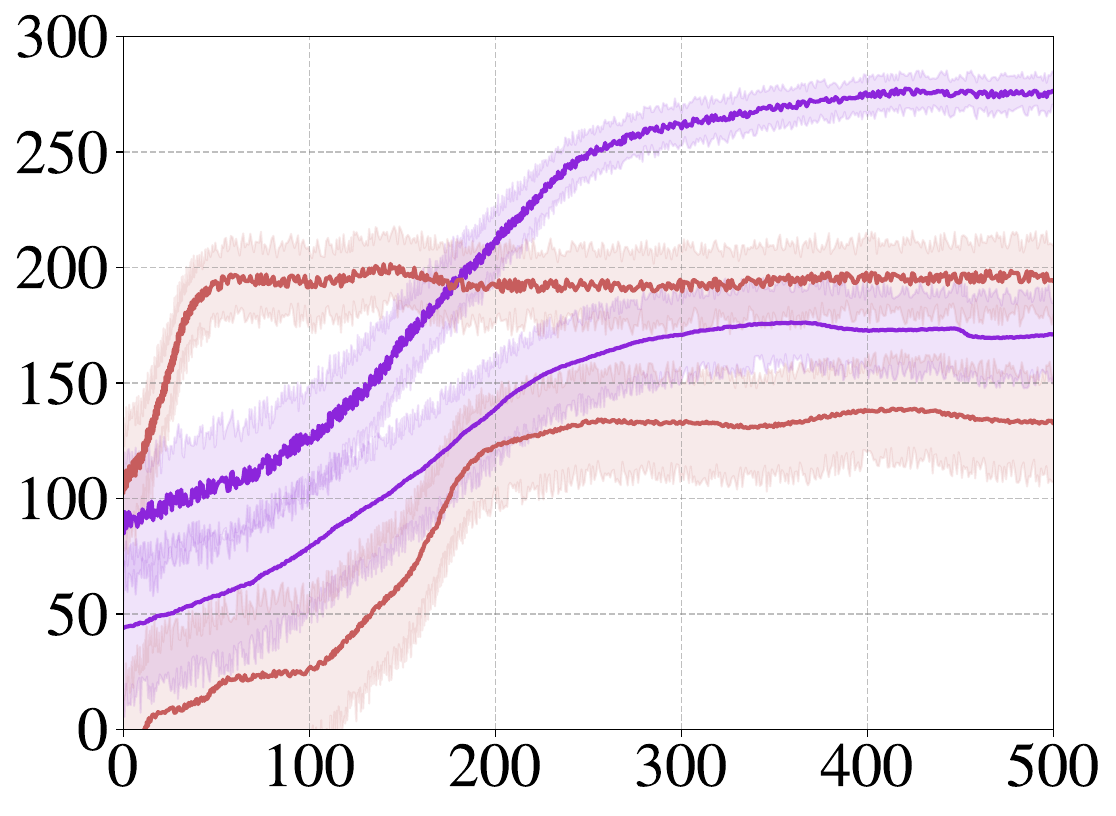}
        \label{fig:general_antcircle_reward}
    }
    \subfloat[Ant Circle Costs]{
        \includegraphics[height=0.125\linewidth, width=0.23\linewidth]{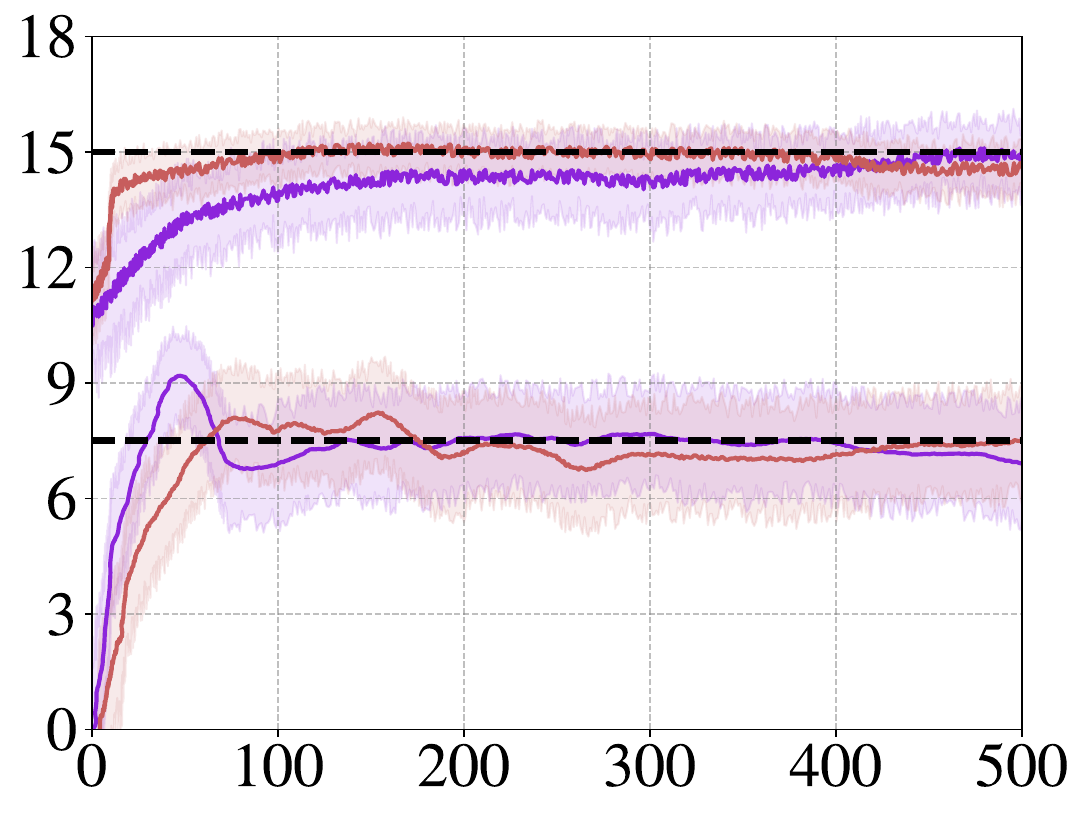}
        \label{fig:general_antcircle_cost}
    }
    \subfloat[Point Reach Rewards]{
        \includegraphics[height=0.125\linewidth, width=0.23\linewidth]{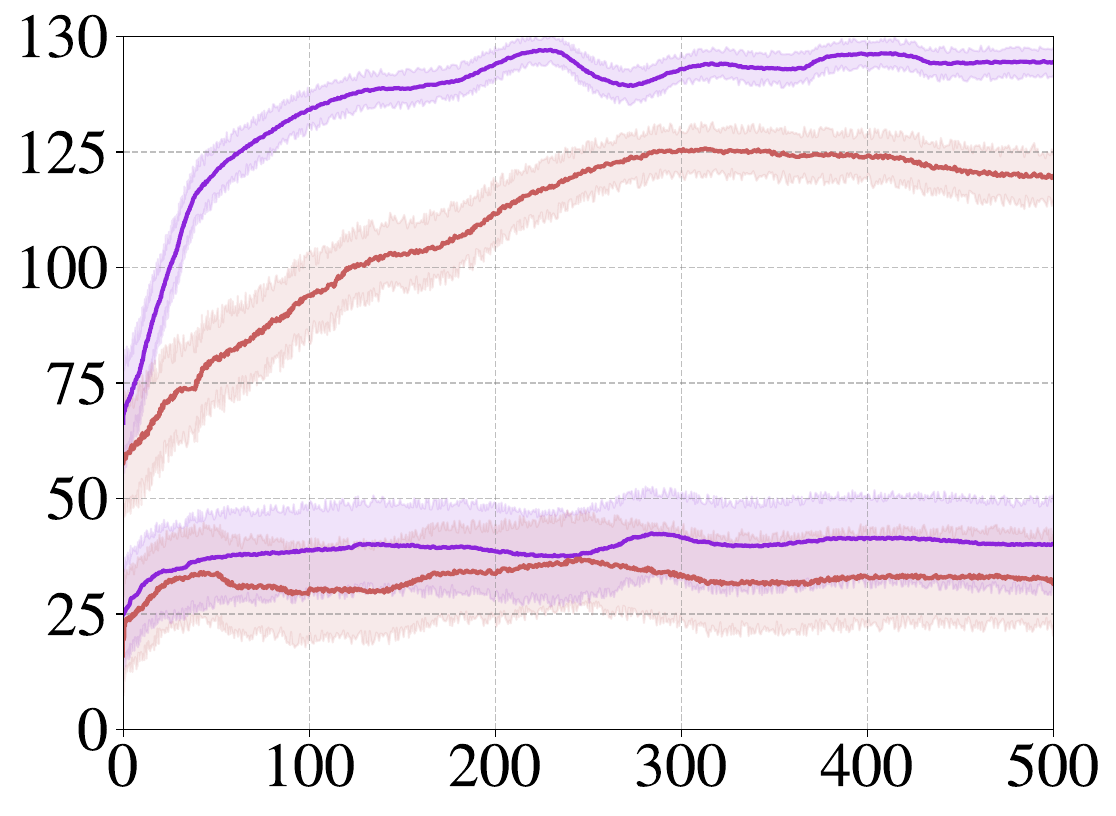}
        \label{fig:general_pointreach_reward}
    }
    \subfloat[Point Reach Costs]{
        \includegraphics[height=0.125\linewidth, width=0.23\linewidth]{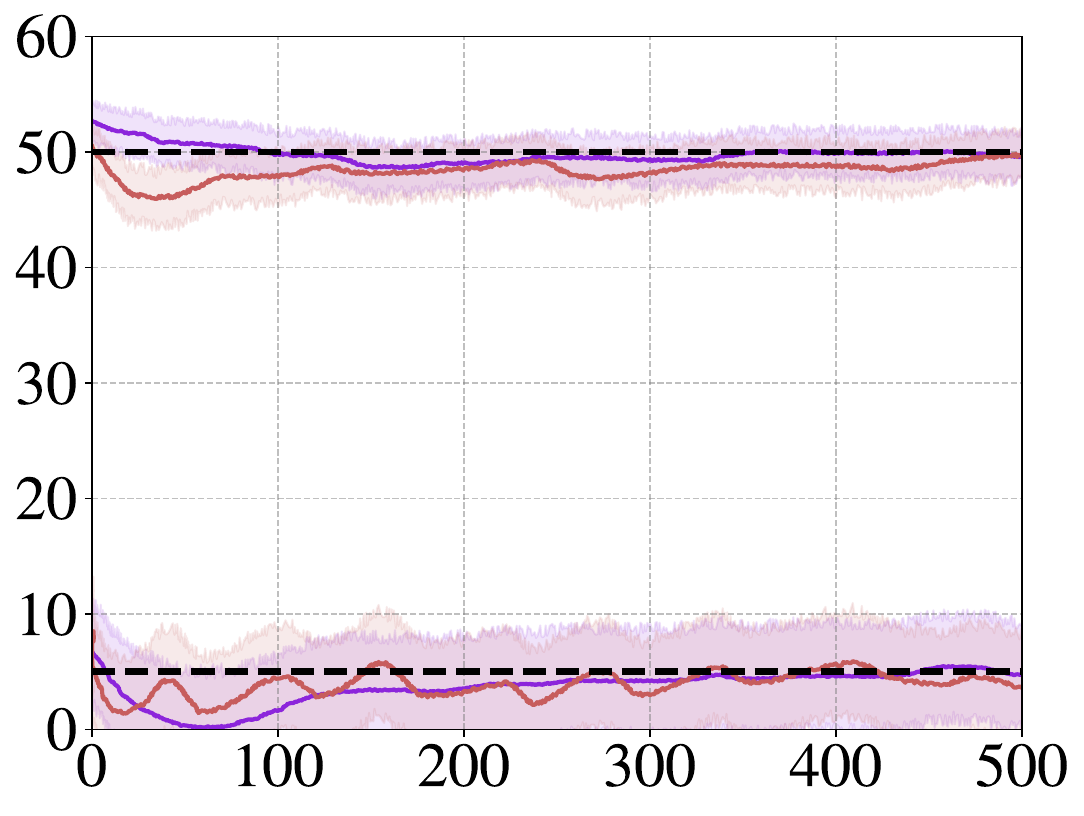}
        \label{fig:general_pointreach_cost}
    }
  \caption{\small Cumulative episodic rewards and costs of baselines in two environments with two constraint thresholds.}
  \label{fig:general_rewards_costs_two}
\end{figure*}

{\renewcommand{\arraystretch}{1.25}
\begin{table*}[!ht]
\centering
\fontsize{7}{8}\selectfont
\begin{tabular}{cl|cccc:c}
\hline
 \multirow{2}*{Task} &  & \multicolumn{3}{c}{\hspace{2cm} \texttt{e-COP}}   & & \multirow{2}*{P3O \cite{zhang2022penalized}} \\ 
 \cline{3-6}
  &  & $\beta=5$, fixed & $\beta=5$, adaptive & $\beta=10$, fixed & $\beta=10$, adaptive &  \\
\hline
\multirow{2}*{PointCircle} & R & 150.5 $\pm$ 11.1 & \cellcolor{CellCol}{\textbf{168.6 $\pm$ 14.3}} & 145.2 $\pm$ 12.2 & 165.3 $\pm$ 11.4  &  162.4 $\pm$ 14.7 \\
 &  C (20.0) & 17.3 $\pm$ 1.3  &  \cellcolor{CellCol}{19.7 $\pm$ 0.6} & 18.8 $\pm$ 1.5 & 18.3 $\pm$ 1.4 & 19.1 $\pm$ 3.3 \\ 
 \hline
\multirow{2}*{AntReach} & R & 48.2 $\pm$ 3.5 & 58.6 $\pm$ 5.1 & 53.2 $\pm$ 5.3 & \cellcolor{CellCol}{\textbf{65.2 $\pm$ 8.1}}  &  61.1 $\pm$ 5.6 \\
 &  C (20.0) & 19.8 $\pm$ 5.9  &  20.0 $\pm$ 4.4 & 20.6 $\pm$ 4.5 & \cellcolor{CellCol}{19.2 $\pm$ 6.2} & 18.9 $\pm$ 7.3 \\ 
\hline
\end{tabular}
\caption{\small Performance of \texttt{e-COP} for different $\beta$ settings on two tasks. Values are given with normal 95\% confidence interval over 5 independent runs. }
\label{table:ecop-sensitivity}
\end{table*}
}

\noindent\textbf{Sensitivity.}
The effectiveness and performance of \texttt{e-COP} would not be justified if it was not robust to the damping hyperparameter $\beta$, which varies across tasks depending on the values of $\Ccal(\cdot)$ and $\mathcal{c}_{h}$. Since this damping penalty enables \texttt{e-COP} to have stable continuous cost control, we update it adaptively as described in Algorithm \ref{alg:practical-ECOP}. As seen in Table \ref{table:ecop-sensitivity}, damping penalty indeed stabilizes the training process and helps in converging to an optimal safe policy. 


\section{Conclusion}
\label{sec:conclusion}

In this paper, we have introduced an easy to implement, scalable policy optimization algorithm for episodic RL problems with constraints due to safety or other considerations. It is based on a policy difference lemma for the episodic setting, which surprisingly has quite a different form than the ones for infinite-horizon discounted or average settings. This provides the theoretical foundation for the algorithm, which is designed by incorporating several time-tested, practical as well as novel ideas. Policy optimization algorithms for Constrained MDPs tend to be numerical unstable and non-scalable due to the need for inverting the Fisher information matrix. We sidestep both of these issues by introducing a quadratic damping penalty term that works remarkably well. The algorithm development is well supported by theory, as well as with extensive empirical analysis on a range of Safety Gym and Safe MuJoco benchmark environments against a suite of baseline algorithms adapted from their non-episodic roots. 





\clearpage

\bibliography{references}
\bibliographystyle{abbrv}

\newpage
\clearpage
\appendix
\section{Appendix}
\label{appendix}

\subsection{Proofs}

\episodicpolicydifference*
\begin{proof}
\label{proof:episodicpolicydifference}

Let us consider $s_{1} \sim \mu$ and categorize the value function difference between the two policies. Also define $\Pbb^{\pib}_{h}(s \given s_{1}) = \sum_{a \in A} \Pbb^{\pib}_{h}(s,a \given s_{1})$, where the term $\Pbb^{\pib}_{h}(s,a \given s_{1})$ is the probability of reaching $(s,a)$ at time step $h$ following $\pib$ and starting from $s_{1}.$

\begin{align*}
    V^{\pib}_{1}(s_{1}) - V^{\pib'}_{1}(s_{1})  
    &= \E{a_{1},s_2}\big[ r(s_{1},a_{1}) + V^{\pib}_{2}(s_2)|s_1\big] + \E{s_2}\big[ V^{\pib'}_{2}(s_2) - V^{\pib'}_{2}(s_2)|s_1\big] - V^{\pib'}_{1}(s_{1}) \\
    &= \E{s_{2}}\big[ V_{2}^{\pib}(s_{2}) - V_{2}^{\pib'}(s_{2}) |s_1 \big]  + \E{a_{1},s_2}\big[ r(s_{1},a_{1}) + V^{\pib'}_{2}(s_2) - V^{\pib'}_{1}(s_{1}) |s_1\big] \\
    &=  \E{s_{2}}\big[ V_{2}^{\pib}(s_{2}) - V_{2}^{\pib'}(s_{2}) |s_1 \big] + \E{a_{1}}\big[ Q^{\pib'}_{1}(s_{1}, a_{1}) - V^{\pib'}_{1}(s_{1})|s_1 \big] \\
    &=  \E{s_{2}}\big[ V_{2}^{\pib}(s_{2}) - V_{2}^{\pib'}(s_{2}) |s_1 \big] + \E{a_{1}} \big[  A^{\pib'}_{1}(s_{1}, a_{1}) |s_1 \big],
\end{align*}

where $a_1 \sim \pib_{1}(\cdot|s_1)$, $s_2 \sim P(\cdot | s_1, \pib_1(s_1))$ and $s_1 \sim \mu$, the initial state distribution.

Now recursively apply the same procedure to the term $V_{h}^{\pib}(s_{h}) - V_{h}^{\pib'}(s_{h}) \forAll h \in \{2, \dots, H\}$ to obtain the following:
\begin{align*}
    V^{\pib}_{1}(s) - V^{\pib'}_{1}(s) = \sum_{h=1}^{H} \E{s_{h}, a_{h} \sim \Pbb^{\pib}_{h}(\cdot, \cdot \given s)} \big[  A^{\pib'}_{h}(s_{h}, a_{h}) | s \big]
\end{align*}
Now we know that $J(\pib) = \E{s \sim \mu}[ V^{\pib}_{1}(s)]$, this means that we combine this with the above to obtain the final result.
\end{proof}

\dampedintermediateproblem*
\begin{proof}
\label{proof:damped_intermediate_problem}

As in Equation \eqref{eq:p3po_objective_fn}, we have the following equivalent problem:

\resizebox{\linewidth}{!}{$
\begin{aligned}
\pi_{k,t}^{\star} & =  \mathop{\arg\min}_{\pi_{k,t}} \sum_{h=t}^{H} \mathop{\mathbb{E}}_{\substack{s\sim \rho_{\pi_{k,h}} \\a\sim \pi_{k-1,h}}} \big[-{\rho(\theta_{h})}A^{\pib_{k-1}}_{h}(s,a) \big] + \sum_{i}^{m} \lambda_{t,i} \max \bigg\{0, \; \sum_{h=t}^{H} \mathop{\mathbb{E}}_{\substack{s\sim \rho_{\pi_{k,h}} \\a\sim \pi_{k-1,h}}} \big  [{\rho(\theta_{h})}A_{C_i, h}^{\pib_{k-1}} (s,a) \big ] + (J_{C_i}(\pib_{k-1})-d_i) \bigg\}.  
\end{aligned}
$}

Letting $\Psi_{C_{i}, t}(\pib_{k-1}, \pib_{k}) := \sum_{h=t}^{H} \mathop{{\E{\substack{s \sim \rho_{\pi_{k,h}}\\ a\sim \pi_{k-1,h}}}}} \big  [{\rho(\theta_{h})}A_{C_i, h}^{\pib_{k-1}} (s,a) \big ] + (J_{C_i}(\pib_{k-1})-d_i)$, and introducing slack variables $x_{t,i} \geq 0$ and defining $w_{t,i}(\pib_{k}) := \Psi_{C_{i}, t}(\pib_{k-1}, \pib_{k}) + x_{t,i} = 0$, we get the quadratic damped problem same as Equation \eqref{eq:e-COP_initial_fn} below.

\begin{equation}
\begin{aligned}
(\pi_{k,t}^{\star}, \bm{\lambda}_{t}^{\star}, \bm{x}_{t}^{\star}) &= \max_{\bm{\lambda} \geq 0} \min_{\pi_{k,t}, \bm{x}} \Lcal_{t}(\pib_{k}, \bm{\lambda}, \bm{x}, \beta) \\
&=  \max_{\bm{\lambda} \geq 0} \min_{\pi_{k,t}, \bm{x}} \sum_{h=t}^{H} \E{s\sim \rho_{\pi_{k,h}} \\a\sim \pi_{k-1,h}} \big[-{\rho(\theta_{h})}A^{\pib_{k-1}}_{h}(s,a) \big] + \sum_{i}^{m} \lambda_{t,i} w_{t,i}(\pib_{k}) + \frac{\beta}{2} \sum_{i}^{m} w_{t,i}^{2}(\pib_{k})
\end{aligned}
\label{appendix:eq:maxminquadratic}
\end{equation}

Like the Lagrangian method, we can alternately update $\pi, \bm{\lambda}$, and $\bm{x}$ to find the optimal triplet. Consider updating $\pi$ and $\bm{x}$ by minimizing $\Lcal_{t}(\pi, \bm{\lambda}, \bm{x}, \beta)$ at any iteration:

\resizebox{\linewidth}{!}{$
\begin{aligned}
\left(\pi_{k,t}^{\star}, \bm{x}_{t}^{\star} \right)=\argmin_{\pi_{k,t}} \min_{x_{i}>0} \sum_{h=t}^{H} \E{s\sim \rho_{\pi_{k,h}} \\a\sim \pi_{k-1,h}} \big[-{\rho(\theta_{h})}A^{\pib_{k-1}}_{h}(s,a) \big] + \sum_{i=1}^{m} \lambda_{t,i} \big(  \Psi_{C_{i}, t}(\pib_{k-1}, \pib_{k}) + x_{t,i} \big) + \frac{\beta}{2} \sum_{i=1}^{m} \big(  \Psi_{C_{i}, t}(\pib_{k-1}, \pib_{k}) + x_{t,i} \big)^{2}
\end{aligned}
$}

The inner optimization problem with respect to $\bm{x}$ is a convex quadratic programming problem with non-negative constraints,

$$
\bm{x}_{t}^{\star}= \argmin_{x_{i}>0} \sum_{i=1}^{m} \lambda_{t,i} \big(  \Psi_{C_{i}, t}(\pib_{k-1}, \pib_{k}) + x_{t,i} \big) + \frac{\beta}{2} \sum_{i=1}^{m}\big(  \Psi_{C_{i}, t}(\pib_{k-1}, \pib_{k}) + x_{t,i} \big)^{2}
$$

The optimal solution is $\; x_{t, i}^{\star}= \max \left\{0, -\frac{\lambda_{t,i}}{\beta} - \Psi_{C_{i}, t}(\pib_{k-1}, \pib_{k}) \right\}$. Then,

$$
\begin{aligned}
w_{t,i}(\pib_{k}) = \Psi_{C_{i}, t}(\pib_{k-1}, \pib_{k}) &+ x_{t, i}^{\star} =  \Psi_{C_{i}, t}(\pib_{k-1}, \pib_{k}) + \max \left\{0, -\frac{\lambda_{t,i}}{\beta} - \Psi_{C_{i}, t}(\pib_{k-1}, \pib_{k}) \right\} \\
& = \frac{\lambda_{t,i}}{\beta} + \Psi_{C_{i}, t}(\pib_{k-1}, \pib_{k}) + \max \left\{0, -\frac{\lambda_{t,i}}{\beta} - \Psi_{C_{i}, t}(\pib_{k-1}, \pib_{k}) \right\} - \frac{\lambda_{t,i}}{\beta} \\
& =\max \left\{0, \frac{\lambda_{t,i}}{\beta} + \Psi_{C_{i}, t}(\pib_{k-1}, \pib_{k}) \right\} -\frac{\lambda_{t,i}}{\beta}
\end{aligned}
$$

Substituting back into Equation \eqref{appendix:eq:maxminquadratic}, we get

\resizebox{\linewidth}{!}{$
\begin{aligned}
\Lcal_{t}(\pib_{k}, \bm{\lambda}, \bm{x}, \beta) = & \sum_{h=t}^{H} \E{s\sim \rho_{\pi_{k,h}} \\a\sim \pi_{k-1,h}} \big[-{\rho(\theta_{h})}A^{\pib_{k-1}}_{h}(s,a) \big] + \sum_{i}^{m} \lambda_{t,i} w_{t,i}(\pib_{k}) + \frac{\beta}{2} \sum_{i}^{m} w_{t,i}^{2}(\pib_{k}) \\
= & \sum_{h=t}^{H} \E{s\sim \rho_{\pi_{k,h}} \\a\sim \pi_{k-1,h}} \big[-{\rho(\theta_{h})}A^{\pib_{k-1}}_{h}(s,a) \big] + \sum_{i=1}^{m} \lambda_{t,i} \left( \max \left\{0, \frac{\lambda_{t,i}}{\beta} + \Psi_{C_{i}, t}(\pib_{k-1}, \pib_{k}) \right\} - \frac{\lambda_{t,i}}{\beta} \right) \\
& \quad +\frac{\beta}{2} \sum_{i=1}^{m} \left( \max \left\{0, \frac{\lambda_{t,i}}{\beta} + \Psi_{C_{i}, t}(\pib_{k-1}, \pib_{k}) \right\} -\frac{\lambda_{t,i}}{\beta} \right)^{2} \\
= & \sum_{h=t}^{H} \E{s\sim \rho_{\pi_{k,h}} \\a\sim \pi_{k-1,h}} \big[-{\rho(\theta_{h})}A^{\pib_{k-1}}_{h}(s,a) \big] + \frac{\beta}{2} \sum_{i=1}^{m}\left( \max \left\{0, \frac{\lambda_{t,i}}{\beta} + \Psi_{C_{i}, t}(\pib_{k-1}, \pib_{k}) \right\}^{2} -\frac{\lambda_{t,i}^{2}}{\beta^{2}} \right)
\end{aligned}
$}

Hence, we finally get

\resizebox{\linewidth}{!}{$
\begin{aligned}
(\pi_{k,t}^{\star}, \bm{\lambda}_{t}^{\star}) &= \max_{\bm{\lambda} \geq 0} \min_{\pi_{k,t}} \Lcal_{t}(\pib_{k}, \bm{\lambda}, \beta) \\ &= \max_{\bm{\lambda} \geq 0} \min_{\pi_{k,t}} \sum_{h=t}^{H} \E{s\sim \rho_{\pi_{k,h}} \\a\sim \pi_{k-1,h}} \big[-{\rho(\theta_{h})}A^{\pib_{k-1}}_{h}(s,a) \big] + \frac{\beta}{2} \sum_{i=1}^{m} \bigg( \max \bigg\{0, \Psi_{C_{i}, t}(\pib_{k-1}, \pib_{k}) + \frac{\lambda_{t,i}}{\beta} \bigg\}^{2} - \frac{\lambda_{t,i}^{2}}{\beta^{2}}  \bigg)
\end{aligned}
$}

\end{proof}

\begin{restatable}{lemma}{step1}
\label{lemma:withrelusolutionsame1} 
Consider two problems, Problem \eqref{eq:withoutmaxquadratic} and Problem \eqref{eq:maxquadratic} below. 
For sufficiently large $\lambda_{i} > \bar\lambda \forAll i$ and $\beta > \bar\beta$ for some finite $\bar\lambda$ and finite $\bar\beta$, the optimal solution set of Problem \eqref{eq:maxquadratic} (equivalent version of Problem \eqref{eq:damped_intermediate_problem}) is identical to the optimal solution set of Problem \eqref{eq:withoutmaxquadratic}. 

Problem \eqref{eq:withoutmaxquadratic} :
\begin{align*}
\Lcal_{t}^{P}(\pib_{k}, \bm{\lambda}, \bm{x}, \beta) &:=  \sum_{h=t}^{H} \E{s\sim \rho_{\pi_{k,h}} \\a\sim \pi_{k-1,h}} \big[-{\rho(\theta_{h})}A^{\pib_{k-1}}_{h}(s,a) \big] + \sum_{i}^{m} \lambda_{t,i} w_{t,i}(\pib_{k}) + \frac{\beta}{2} \sum_{i}^{m} w_{t,i}^{2}(\pib_{k}) \\
\text{Then,} &\quad\quad (\pi_{k,t}^{\star}, \bm{\lambda}_{t}^{\star}, \bm{x}_{t}^{\star}) = \max_{\bm{\lambda} \geq 0} \min_{\pi_{k,t}, \bm{x}} \Lcal_{t}^{P}(\pib_{k}, \bm{\lambda}, \bm{x}, \beta) \hspace{1cm} \tag{P}
\label{eq:withoutmaxquadratic}
\end{align*}

Problem \eqref{eq:maxquadratic} :

\begin{align*}
\Lcal_{t}^{Q}(\pib_{k}, \bm{\lambda}, \bm{x}, \beta) &:=  \sum_{h=t}^{H} \E{s\sim \rho_{\pi_{k,h}} \\a\sim \pi_{k-1,h}} \big[-{\rho(\theta_{h})}A^{\pib_{k-1}}_{h}(s,a) \big] + \sum_{i}^{m} \lambda_{t,i} \Psi_{C_{i}, t}^{+}(\pib_{k-1}, \pib_{k}, \theta) \\ & \qquad \qquad + \frac{\beta}{2} \sum_{i}^{m} \Psi_{C_{i}, t}^{+}(\pib_{k-1}, \pib_{k}, \theta)^{2} \\
\text{Then,} &\quad\quad (\pi_{k,t}^{\star}, \bm{\lambda}_{t}^{\star}, \bm{x}_{t}^{\star}) = \max_{\bm{\lambda} \geq 0} \min_{\pi_{k,t}, \bm{x}} \Lcal_{t}^{Q}(\pib_{k}, \bm{\lambda}, \bm{x}, \beta) \hspace{1cm} \tag{Q}
\label{eq:maxquadratic}
\end{align*}

, where $x^{+} := \max(0, x)$, and 
$$\Psi_{C_{i}, t}(\pib_{k-1}, \pib_{k}, \theta) := \sum_{h=t}^{H} \mathop{\mathbb{E}}_{\substack{s\sim \rho_{\pi_{k,h}} \\a\sim \pi_{k-1,h}}} \big  [{\rho(\theta_{h})}A_{C_i, h}^{\pib_{k-1}} (s,a) \big ] + (J_{C_i}(\pib_{k-1})-d_i).
$$
\end{restatable}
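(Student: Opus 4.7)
The plan is to show both problems have solution sets equal to that of the original CMDP~(\ref{eq:cpo_improvement_vanilla}), hence equal to each other. I will first reduce (\ref{eq:withoutmaxquadratic}) using Proposition~\ref{prop:damped_intermediate_problem}, then compare with (\ref{eq:maxquadratic}) and evaluate the outer maximum over $\bm\lambda$, and finally invoke standard exactness theorems to swap the min-max orders.

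Apply Proposition~\ref{prop:damped_intermediate_problem} to minimize out $\bm{x}$ in $\Lcal_t^P$, replacing $\sum_i \lambda_{t,i} w_{t,i} + \tfrac{\beta}{2}\sum_i w_{t,i}^2$ by $\tfrac{\beta}{2}\sum_i\bigl[\max\{0,\Psi_{C_i,t}+\lambda_{t,i}/\beta\}^2 - \lambda_{t,i}^2/\beta^2\bigr]$. A sign-based case split on each constraint shows that whenever $\Psi_{C_i,t}>0$ the $i$-th summand of the reduced $\Lcal_t^P$ equals the $i$-th summand of $\Lcal_t^Q$ (both expand to $\lambda_{t,i}\Psi_{C_i,t}+\tfrac{\beta}{2}\Psi_{C_i,t}^2$ after cancellation). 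Differentiating in $\lambda_{t,i}$, a short calculation (the derivative equals $\max\{0,\Psi_{C_i,t}+\lambda_{t,i}/\beta\}-\lambda_{t,i}/\beta$ for $\Lcal_t^P$ and $\Psi_{C_i,t}^+$ for $\Lcal_t^Q$) shows that $\max_{\bm\lambda\geq 0}$ of either Lagrangian evaluates to $-A(\pi)$ on the feasible set $\{\forall i:\Psi_{C_i,t}\leq 0\}$ (attained at $\bm\lambda=0$) and to $+\infty$ on the infeasible set. Consequently $\min_\pi\max_{\bm\lambda}\Lcal_t^P = \min_\pi\max_{\bm\lambda}\Lcal_t^Q$, both equal to the CMDP optimum.

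To match the $\max_{\bm\lambda}\min_{\pi,\bm x}$ ordering that actually appears in (\ref{eq:withoutmaxquadratic}) and (\ref{eq:maxquadratic}), invoke exactness. For $\beta>\bar\beta$ above the classical augmented-Lagrangian threshold (tied to a second-order sufficient condition at the CMDP optimum), the augmented-Lagrangian saddle-point theorem yields $\max_{\bm\lambda}\min_{\pi,\bm x}\Lcal_t^P = \min_{\pi,\bm x}\max_{\bm\lambda}\Lcal_t^P$ with the policy component of any saddle equal to a CMDP KKT point. For $\lambda_{t,i}>\bar\lambda \geq \|\mu^{\mathrm{CMDP}}\|_\infty$ (the exact-penalty threshold determined by the true CMDP multipliers $\mu^{\mathrm{CMDP}}$), the exact-penalty theorem gives the same equality for $\Lcal_t^Q$. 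Combined with the previous paragraph, the saddle policy sets of (\ref{eq:withoutmaxquadratic}) and (\ref{eq:maxquadratic}) both equal the CMDP solution set, hence coincide with each other.

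The main obstacle is the non-convexity of $-A$ and of each $\Psi_{C_i,t}$ in $\pi$, which rules out a one-shot Slater-style strong-duality argument. This is exactly what the ``sufficiently large $\lambda,\beta$'' hypothesis is designed to absorb: $\bar\beta$ must dominate the negative curvature of the CMDP Lagrangian on the tangent space of the active constraints (so that the augmented Lagrangian is locally strongly convex in $\pi$), and $\bar\lambda$ must exceed the true CMDP multipliers (so that the penalty is exact), together enabling the local augmented-Lagrangian and exact-penalty theorems. A standard constraint qualification at the CMDP optimum (e.g.\ LICQ on the active gradients $\nabla_\pi \Psi_{C_i,t}$) is typically assumed to justify both exactness invocations.
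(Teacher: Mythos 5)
Your proposal is correct in substance but takes a genuinely different route from the paper. The paper proves the equivalence of \eqref{eq:withoutmaxquadratic} and \eqref{eq:maxquadratic} directly, without passing through the CMDP: following \cite{zhang2022penalized}, it fixes a KKT point $\bar\theta_{t}$ of the constrained problem with multipliers $\bar\lambda$ and, in Part~1, chains inequalities (the ReLU-penalized objective at any $\theta$ dominates the ordinary Lagrangian at $\theta$, which dominates its value at $\bar\theta_{t}$, which by complementary slackness equals the ReLU-penalized objective at $\bar\theta_{t}$) to conclude $\bar\theta_{t}$ minimizes $\Lcal_{t}^{Q}$ once $\lambda_{t,i}\geq\|\bar\lambda\|_\infty$; Part~2 then splits on whether a minimizer $\tilde\theta_{t}$ of \eqref{eq:maxquadratic} is feasible and derives a contradiction in the infeasible case. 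In other words, the paper inlines a proof of precisely the exact-penalty result you cite. You instead eliminate $\bm{x}$ via Proposition~\ref{prop:damped_intermediate_problem}, verify by the sign split and the $\lambda$-derivative computation that the $\min_{\pi}\max_{\bm{\lambda}}$ values of both problems collapse to the constrained optimum (these computations are correct), and then appeal to the classical augmented-Lagrangian and exact-penalty theorems to transfer the conclusion to the $\max_{\bm{\lambda}}\min_{\pi}$ ordering that actually appears in \eqref{eq:withoutmaxquadratic} and \eqref{eq:maxquadratic}. What your route buys is explicitness about hypotheses: you surface the regularity conditions (a constraint qualification, second-order sufficiency, bounded true multipliers) that the paper uses silently when it posits a KKT point with complementary slackness. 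What it costs is that the lemma's entire content---exactness at finite $\lambda,\beta$---is delegated to citations such as \cite{bertsekas2014constrained}; expanding those citations would essentially reproduce the paper's two inequality chains. One point you should make explicit if you keep this route: the textbook exact-penalty statement concerns $\min_{\theta}\big[\Omega(\theta)+\sigma\sum_{i}\Psi_{C_{i},t}^{+}\big]$ for a fixed penalty weight, so you need a sentence explaining why the additional nonnegative term $\tfrac{\beta}{2}\sum_{i}\big(\Psi_{C_{i},t}^{+}\big)^{2}$ (which vanishes on the feasible set) and the outer maximization over $\bm{\lambda}$ (whose inner minimum value is constant in $\bm{\lambda}$ once $\lambda_{t,i}>\bar\lambda$) do not disturb the conclusion.
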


\begin{proof}
This proof uses some ideas given in \cite{zhang2022penalized} for Part 1 below.

\textbf{Part 1} - Solution of Problem \eqref{eq:withoutmaxquadratic} is solution of Problem \eqref{eq:maxquadratic}. \hfill \break
\vspace{-0.25cm}

Suppose $\bar{\theta}_{t}$ is the optimum of the constrained Problem \eqref{eq:withoutmaxquadratic} augmented with the quadratic penalty. Let $\bar\lambda_{t}$ be the corresponding Lagrange multiplier vector for its dual problem, and $\bar\beta$ be the additive quadratic penalty coefficient. Then for $\lambda_{t,i} \geq ||\bar\lambda||_\infty \forAll i$ and $\beta \geq ||\bar\beta||_\infty$, $\bar{\theta}$ is also a minimizer of its ReLU-penalized optimization Problem \eqref{eq:maxquadratic} as below. Let $\Omega(\theta_{t}) := \sum_{h=t}^{H} \E{s\sim \rho_{\pi_{k,h}} \\a\sim \pi_{k-1,h}} \big[-{\rho(\theta_{h})}A^{\pib_{k-1}}_{h}(s,a) \big]$. Then it follows that:

\resizebox{\linewidth}{!}{$
\begin{aligned}
    \Omega(\theta_{t}) + \sum_{i}^{m} \lambda_{t,i} \Psi_{C_{i}, t}^{+}(\pib_{k-1}, \pib_{k}, \theta) + \frac{\beta}{2} \sum_{i}^{m} \Psi_{C_{i}, t}^{+}(\pib_{k-1}, \pib_{k}, \theta)^{2}  
    &\geq \Omega(\theta_{t}) + \sum_{i}^{m} \bar\lambda_{i} \Psi_{C_{i}, t}^{+}(\pib_{k-1}, \pib_{k}, \theta) + \frac{\bar\beta}{2} \sum_{i}^{m} \Psi_{C_{i}, t}^{+}(\pib_{k-1}, \pib_{k}, \theta)^{2} \\ 
    &\geq \Omega(\theta_{t}) + \sum_{i}^{m} \bar\lambda_{i} \Psi_{C_{i}, t}(\pib_{k-1}, \pib_{k}, \theta) + \frac{\bar\beta}{2} \sum_{i}^{m} \Psi_{C_{i}, t}(\pib_{k-1}, \pib_{k}, \theta)^{2}
\end{aligned}
$}

By assumption, $\bar\theta_{t}$ is a Karush-Kuhn-Tucker point in the constrained Problem \eqref{eq:withoutmaxquadratic}, at which KKT conditions are satisfied with the Lagrange multiplier vector  $\bar{\lambda}$ and $\bar\beta$. We then have:


\resizebox{\linewidth}{!}{$
\begin{aligned}
   \Omega(\theta_{t}) + \sum_{i}^{m} \bar\lambda_{i} \Psi_{C_{i}, t}(\pib_{k-1}, \pib_{k}, \theta) + \frac{\bar\beta}{2} \sum_{i}^{m} \Psi_{C_{i}, t}(\pib_{k-1}, \pib_{k}, \theta)^{2}
    &\geq \Omega(\bar\theta_{t}) + \sum_{i}^{m} \bar\lambda_{i} \Psi_{C_{i}, t}(\pib_{k-1}, \pib_{k}, \bar\theta) + \frac{\bar\beta}{2} \sum_{i}^{m} \Psi_{C_{i}, t}(\pib_{k-1}, \pi, \bar\theta)^{2} \\ 
    &=  \Omega(\bar\theta_{t}) + \sum_{i}^{m} \bar\lambda_{i} \Psi_{C_{i}, t}^{+}(\pib_{k-1}, \pib_{k}, \bar\theta) + \frac{\bar\beta}{2} \sum_{i}^{m} \Psi_{C_{i}, t}^{+}(\pib_{k-1}, \pib_{k}, \bar\theta)^{2} \\ 
     &=  \Omega(\bar\theta_{t}) + \sum_{i}^{m} \lambda_{t,i} \Psi_{C_{i}, t}^{+}(\pib_{k-1}, \pib_{k}, \bar\theta) + \frac{\beta}{2} \sum_{i}^{m} \Psi_{C_{i}, t}^{+}(\pib_{k-1}, \pib_{k}, \bar\theta)^{2}
\end{aligned}
$}

, where the first line holds because $\bar\theta_{t}$ minimizes the Lagrange function, and the second line is derived from the complementary slackness. Thus, we conclude that for the objective function of Problem \eqref{eq:maxquadratic}, call it $\Lcal^{Q}(\theta_{t})$, we have $\mathcal{L}^{Q}(\theta_{t}) \geq \mathcal{L}^{Q}(\bar\theta_{t})$ for all $\theta_{t} \in \Theta$, which means $\bar\theta_{t}$ is a minimizer of  the quadratic damped optimization Problem \eqref{eq:maxquadratic}. 

\vspace{0.5cm}
\textbf{Part 2} - Solution of Problem \eqref{eq:maxquadratic} is solution of Problem \eqref{eq:withoutmaxquadratic}. \hfill \break 
\vspace{-0.25cm}

Let $\widetilde{\theta}_{t}$ be an optimal point of the quadratic damped Problem \eqref{eq:maxquadratic}, with $\bar\theta_{t}$ and $\bar\lambda$ being the same as defined above. Then, if $\widetilde\theta_{t}$ is in the set of feasible solutions $S_{\text{feasible}} = \{\theta \given \Psi_{C_{i}, t}(\pib_{k-1}, \pib_{k}, \theta) \leq 0 \ \forAll i\}$, we have:

\begin{align*}
\Omega(\widetilde\theta_{t}) &= \Omega(\widetilde\theta_{t}) + \sum_{i}^{m} \lambda_{t,i} \Psi_{C_{i}, t}^{+}(\pib_{k-1}, \pib_{k}, \widetilde\theta) + \frac{\beta}{2} \sum_{i}^{m} \Psi_{C_{i}, t}^{+}(\pib_{k-1}, \pib_{k}, \widetilde\theta) \\
&\leq \Omega(\theta_{t}) + \sum_{i}^{m} \lambda_{t,i} \Psi_{C_{i}, t}^{+}(\pib_{k-1}, \pib_{k}, \theta) + \frac{\beta}{2} \sum_{i}^{m} \Psi_{C_{i}, t}^{+}(\pib_{k-1}, \pib_{k}, \theta) 
\\ &= \Omega(\theta_{t})
\end{align*}

The inequality above indicates  $\widetilde{\theta}_{t}$ is also optimal in the constrained Problem \eqref{eq:withoutmaxquadratic}. Now, if  $\widetilde\theta$ is not feasible, we have:

\resizebox{\linewidth}{!}{$
\begin{aligned}
 \Omega(\bar\theta_{t}) + \sum_{i}^{m} \lambda_{t,i} \Psi_{C_{i}, t}^{+}(\pib_{k-1}, \pib_{k}, \bar\theta) + \frac{\beta}{2} \sum_{i}^{m} \Psi_{C_{i}, t}^{+}(\pib_{k-1}, \pib_{k}, \bar\theta)^{2}
    &=  \Omega(\bar\theta_{t}) + \sum_{i}^{m} \bar\lambda_{i} \Psi_{C_{i}, t}^{+}(\pib_{k-1}, \pib_{k}, \bar\theta) + \frac{\bar\beta}{2} \sum_{i}^{m} \Psi_{C_{i}, t}^{+}(\pib_{k-1}, \pib_{k}, \bar\theta)^{2} \\
    & = \Omega(\bar\theta_{t}) + \sum_{i}^{m} \bar\lambda_{i} \Psi_{C_{i}, t}(\pib_{k-1}, \pib_{k}, \bar\theta) + \frac{\bar\beta}{2} \sum_{i}^{m} \Psi_{C_{i}, t}(\pib_{k-1}, \pib_{k}, \bar\theta)^{2} \\
    & \leq  \Omega(\widetilde\theta_{t}) + \sum_{i}^{m} \bar\lambda_{i} \Psi_{C_{i}, t}(\pib_{k-1}, \pib_{k}, \widetilde\theta) + \frac{\bar\beta}{2} \sum_{i}^{m} \Psi_{C_{i}, t}(\pib_{k-1}, \pib_{k}, \widetilde\theta)^{2} \\
   & \leq \Omega(\widetilde\theta_{t}) + \sum_{i}^{m} \bar\lambda_{i} \Psi_{C_{i}, t}^{+}(\pib_{k-1}, \pib_{k}, \widetilde\theta) + \frac{\bar\beta}{2} \sum_{i}^{m} \Psi_{C_{i}, t}^{+}(\pib_{k-1}, \pib_{k}, \widetilde\theta)^{2} \\
   & \leq \Omega(\widetilde\theta_{t}) + \sum_{i}^{m} \lambda_{t,i} \Psi_{C_{i}, t}^{+}(\pib_{k-1}, \pib_{k}, \widetilde\theta) + \frac{\beta}{2} \sum_{i}^{m} \Psi_{C_{i}, t}^{+}(\pib_{k-1}, \pib_{k}, \widetilde\theta)^{2}
\end{aligned}
$}

, which is a contradiction to the assumption that  $\widetilde{\theta}_{t}$ is a minimizer of the penalized optimization Problem \eqref{eq:maxquadratic}. Thus, $\widetilde\theta_{t}$ can only be the feasible optimal solution for Problem \eqref{eq:withoutmaxquadratic}.

\end{proof}


\begin{restatable}{lemma}{step2}
\label{lemma:withrelusolutionsame2} 
Consider two problems, Problem \eqref{eq:withoutmaxquadratic'} and Problem \eqref{eq:simplewithoutmaxquadratic}. 
For sufficiently large $\beta > \bar\beta$ for some finite $\bar\beta$, the feasible optimal solution set of Problem \eqref{eq:simplewithoutmaxquadratic} (equivalent version of Problem \eqref{eq:cpo_improvement_vanilla}) is identical to the solution set of Problem \eqref{eq:withoutmaxquadratic'}. 

Problem \eqref{eq:withoutmaxquadratic'} :
\begin{align*}
\Lcal_{t}^{P'}(\pib_{k}, \bm{\lambda}, \bm{x}, \beta) &:=  \sum_{h=t}^{H} \E{s\sim \rho_{\pi_{k,h}} \\a\sim \pi_{k-1,h}} \big[-{\rho(\theta_{h})}A^{\pib_{k-1}}_{h}(s,a) \big] + \sum_{i}^{m} \lambda_{t,i} w_{t,i}(\pib_{k}) + \frac{\beta}{2} \sum_{i}^{m} w_{t,i}^{2}(\pib_{k}) \\
\text{Then,} &\quad\quad (\pi_{k,t}^{\star}, \bm{\lambda}_{t}^{\star}, \bm{x}_{t}^{\star}) = \max_{\bm{\lambda} \geq 0} \min_{\pi_{k,t}, \bm{x}} \Lcal_{t}^{P'}(\pib_{k}, \bm{\lambda}, \bm{x}, \beta)
\label{eq:withoutmaxquadratic'}
\tag{P'}
\end{align*}

Problem \eqref{eq:simplewithoutmaxquadratic} :
\begin{align*}
\Lcal_{t}^{R}(\pib_{k}, \bm{\lambda}, \bm{x}) &:=  \sum_{h=t}^{H} \E{s\sim \rho_{\pi_{k,h}} \\a\sim \pi_{k-1,h}} \big[-{\rho(\theta_{h})}A^{\pib_{k-1}}_{h}(s,a) \big] + \sum_{i}^{m} \lambda_{t,i} w_{t,i}(\pib_{k}) \\
\text{Then,} &\quad\quad (\pi_{k,t}^{\star}, \bm{\lambda}_{t}^{\star}, \bm{x}_{t}^{\star}) = \max_{\bm{\lambda} \geq 0} \min_{\pi_{k,t}, \bm{x}} \Lcal_{t}^{R}(\pib_{k}, \bm{\lambda}, \bm{x})
\tag{R}
\label{eq:simplewithoutmaxquadratic}
\end{align*}
\end{restatable}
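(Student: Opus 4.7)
The plan is to adapt the two-direction saddle-point argument from the proof of Lemma~\ref{lemma:withrelusolutionsame1}, exploiting the fact that $\Lcal_{t}^{P'}$ is just $\Lcal_{t}^{R}$ plus the nonnegative quadratic penalty $\tfrac{\beta}{2}\sum_{i} w_{t,i}^{2}(\pib_{k})$, which vanishes exactly when the slack-converted equality $w_{t,i} = \Psi_{C_{i},t} + x_{t,i} = 0$ is tight. This is precisely the augmented-Lagrangian setup: for $\beta$ sufficiently large, the added quadratic term should not shift the saddle points, only stabilize convergence.

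For the forward direction, suppose $(\bar\pib_{k}, \bar{\bm\lambda}, \bar{\bm x})$ is a saddle of $\Lcal_{t}^{R}$. Standard KKT reasoning on R gives: inner minimization over $x_{t,i}\geq 0$ forces $\bar\lambda_{t,i}\geq 0$ (else unbounded), outer maximization over $\lambda\geq 0$ forces $w_{t,i}\leq 0$ (else unbounded), and complementary slackness yields $\bar\lambda_{t,i}\,w_{t,i}=0$; combining with the equality form of $w_{t,i}$ gives $w_{t,i}(\bar\pib_{k},\bar{\bm x})=0$ at the saddle. Hence the quadratic penalty vanishes at $(\bar\pib_{k},\bar{\bm\lambda},\bar{\bm x})$, so $\Lcal_{t}^{P'}=\Lcal_{t}^{R}$ there. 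The saddle inequalities for P' then follow from those of R together with nonnegativity of the quadratic term: for any competitor $(\pib_{k},\bm x)$ with $\bm x\geq 0$,
\[
\Lcal_{t}^{P'}(\pib_{k}, \bar{\bm\lambda}, \bm x, \beta)\;\geq\;\Lcal_{t}^{R}(\pib_{k}, \bar{\bm\lambda}, \bm x)\;\geq\;\Lcal_{t}^{R}(\bar\pib_{k}, \bar{\bm\lambda}, \bar{\bm x})\;=\;\Lcal_{t}^{P'}(\bar\pib_{k}, \bar{\bm\lambda}, \bar{\bm x}, \beta),
\]
and on the $\lambda$-max side, $w_{t,i}=0$ at the saddle makes $\Lcal_{t}^{P'}(\bar\pib_{k},\bm\lambda,\bar{\bm x},\beta)$ independent of $\bm\lambda$, so the max-side inequality is trivial.

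For the converse direction, let $(\widetilde\pib_{k},\widetilde{\bm\lambda},\widetilde{\bm x})$ be a saddle of $\Lcal_{t}^{P'}$. If $\widetilde\pib_{k}$ is CMDP-feasible, i.e., $\Psi_{C_{i},t}(\pib_{k-1},\widetilde\pib_{k})\leq 0$, I would pick $\widetilde x_{t,i}=-\Psi_{C_{i},t}$ so that $w_{t,i}=0$ and both penalty terms vanish; the P'-saddle inequality then reduces directly to the R-saddle inequality, establishing optimality for R. If $\widetilde\pib_{k}$ is infeasible, I would derive a contradiction by comparing the P' value at $\widetilde\pib_{k}$ with the value at the reference R-saddle $(\bar\pib_{k},\bar{\bm x})$ supplied by the forward direction: at $\bar\pib_{k}$ both the linear-Lagrange and quadratic penalty terms vanish, while at $\widetilde\pib_{k}$ the quadratic contribution is strictly positive and grows with $\beta$. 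For $\beta>\bar\beta$ large enough, this growth dominates the (bounded) difference in the advantage and linear terms, violating the P' saddle inequality and forcing $\widetilde\pib_{k}$ to be feasible after all.

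The main obstacle is the infeasible branch of the converse. It requires (i) a strictly-positive lower bound on $\|\bm w_{t}(\widetilde\pib_{k},\widetilde{\bm x})\|^{2}$ that is uniform across infeasible candidates (via strict infeasibility plus the inner $x$-minimization), and (ii) a bound on the multiplier $\widetilde{\bm\lambda}$, obtained from the KKT stationarity $\widetilde\lambda_{t,i}+\beta w_{t,i}=0$, so that the residual linear-Lagrange contribution is controlled independently of the particular saddle chosen. Synthesizing these into a single finite threshold $\bar\beta$ (and $\bar\lambda$) that works across all infeasible candidates is the technical crux; with this established, the standard augmented-Lagrangian equivalence closes both directions and delivers the claimed identity of solution sets.
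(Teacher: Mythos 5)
Your forward direction (R $\Rightarrow$ P') takes a genuinely different and cleaner route than the paper: you argue globally from the saddle-point inequalities of (R) plus nonnegativity of the quadratic term, whereas the paper runs the classical local augmented-Lagrangian argument --- it verifies $\nabla_{\pi}\Lcal_t^{P'}=0$ at the KKT point of (R) (using $w_{t,i}(\pib_k^\star)=0$) and then shows $\nabla^2_{\pi}\Lcal_t^{P'}=\nabla^2_{\pi}\Lcal_t^{R}+\beta\,\nabla_{\pi}\bm{w}_t\nabla_{\pi}\bm{w}_t^{\tp}\succ 0$ for $\beta$ large via a limit-point contradiction against the second-order sufficient condition. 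Your version buys a global statement but implicitly assumes (R) admits a saddle point (no duality gap); the paper's buys only a strict local minimum but needs no such assumption. Either is a defensible reading of the loosely stated lemma.

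The converse is where the gaps are. First, in your feasible branch you ``pick'' $\widetilde{x}_{t,i}=-\Psi_{C_i,t}$ so that both penalty terms vanish, but at a saddle of (P') the slack is pinned by the inner minimization to $x^{\star}_{t,i}=\max\{0,\,-\lambda_{t,i}/\beta-\Psi_{C_i,t}\}$, which equals $-\Psi_{C_i,t}$ only when $\lambda_{t,i}=0$ or $\Psi_{C_i,t}\le-\lambda_{t,i}/\beta$. For a feasible point with $-\lambda_{t,i}/\beta<\Psi_{C_i,t}\le 0$ the combined penalty contribution is $\lambda_{t,i}\Psi_{C_i,t}+\tfrac{\beta}{2}\Psi_{C_i,t}^{2}\neq 0$; the paper's Case 1/Case 2 computation exists precisely to show this contribution is $\le 0$ for every feasible competitor and $=0$ at the optimum (complementary slackness), which is what closes the sandwich $-\Omega_t(\pi^{\star})=\Lcal_t(\pib_k^{\star},\bm{\lambda}^{\star},\beta)\le\Lcal_t(\pib_k,\bm{\lambda}^{\star},\beta)\le-\Omega_t(\pi_{k,t})$. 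Your shortcut skips exactly the bookkeeping that makes the inequality point the right way. Second, your infeasible branch --- which you rightly flag as the crux --- does not go through as sketched: there is no uniform positive lower bound on the violation over all infeasible candidates, so $\tfrac{\beta}{2}\|\bm{w}_t\|^{2}$ cannot be made to dominate a bounded objective gap by one finite $\bar\beta$; for small violations exactness must come from the linear multiplier term (hence the separate $\lambda>\bar\lambda$ hypothesis in Theorem \ref{th:relu_solution_same}), not from $\beta$ alone. The paper sidesteps this entirely by restricting its Part 2 to the \emph{feasible} optimal solutions of (P'), consistent with the lemma's wording; handling infeasible saddle candidates would require a calmness or exact-penalty hypothesis that neither you nor the paper states.
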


\begin{proof}

Recall that we are using parameterized policies, hence we overload notation as $\theta \equiv \pi$ frequently. For brevity, denote $\Omega_{t}(\pi) := \sum_{h=t}^{H} \E{s\sim \rho_{\pi_{k,h}} \\a\sim \pi_{k-1,h}} \big[{\rho(\theta_{h})}A^{\pib_{k-1}}_{h}(s,a) \big]$. We will also go back and forth between the equivalent problems of Problem \eqref{eq:withoutmaxquadratic'} and Problem \eqref{eq:damped_intermediate_problem} of the main paper in Section \ref{sec:intro_quadratic_term}.

\textbf{Part 1.} Solution of Problem \eqref{eq:simplewithoutmaxquadratic} is solution of Problem \eqref{eq:withoutmaxquadratic'}.

Suppose that $\pi^{\star}$ is the optimal feasible policy for the primal Problem \eqref{eq:simplewithoutmaxquadratic}, which is a Lagrangian version of Problem \eqref{eq:cpo_improvement_vanilla}. Consider the corresponding Langrangian dual parameter $\bm{\lambda}^{\star}$ of $\pi^{\star}$, which satisfies the KKT conditon,

$$
\nabla_{\pi} \mathcal{L}_{t}^{R} \left(\pi_{k,t}^{\star}, \bm{\lambda}^{\star}, \bm{x}^{\star} \right)=-\nabla_{\pi} \Omega_{t}\left(\pi_{k,t}^{\star} \right) + \sum_{i=1}^{m} \lambda_{t,i}^{\textbf{*}} \nabla_{\pi} w_{t, i}\left(\pib_{k}^{\star} \right) = 0
$$

and the second-order sufficient condition that for all non-zero vectors $\bm{u}$ that satisfy $\bm{u}^{T} \nabla_{\pi} w_{t, i}\left(\pib_{k}^{\star}\right)=$ 0 , we have

\begin{align*}
\bm{u}^{T} \nabla_{\pi}^{2} \mathcal{L}_{t}^{R}\left(\pi_{k,t}^{\star}, \bm{\lambda}^{\star}, \bm{x}^{\star}\right) \bm{u}>0
\tag{A}
\label{eq:secondorder}
\end{align*}

Compare Equation \eqref{eq:simplewithoutmaxquadratic} and Equation \eqref{eq:withoutmaxquadratic'}, we have,

$$
\begin{aligned}
\nabla_{\pi} \mathcal{L}_{t}^{P'}\left(\pi_{k,t}^{\star}, \bm{\lambda}^{\star}, \bm{x}^{\star}, \beta \right) & =-\nabla_{\pi}\Omega_{t}\left( \pi_{k,t}^{\star} \right) + \sum_{i=1}^{m} \lambda_{t,i}^{\star} \nabla_{\pi} w_{t,i} \left(\pib_{k}^{\star}\right) + \beta \sum_{i=1}^{m} w_{t, i} \left( \pib_{k}^{\star} \right) \nabla_{\pi} w_{t, i}\left( \pib_{k}^{\star} \right) \\
& = \nabla_{\pi} \mathcal{L}_{t}^{R} \left(\pi_{k,t}^{\star}, \bm{\lambda}^{\star}, \bm{x}^{\star} \right) + \beta \sum_{i=1}^{m} w_{t, i} \left( \pib_{k}^{\star} \right) \nabla_{\pi} w_{t, i}\left(\pib_{k}^{\star}\right) \\
& =0
\end{aligned}
$$

, where we use $w_{t,i}(\pib_{k}^{\star}) := \Psi_{C_{i}, t}(\pib_{k-1}, \pib^{\star}) + x_{t,i}^{\star} = 0$ with the feasible policy $\pib^{\star}$. Moreover,

$$
\begin{aligned}
\nabla_{\pi}^{2} \mathcal{L}_{t}^{P'}\left(\pi_{k,t}^{\star}, \bm{\lambda}^{\star}, \bm{x}^{\star}, \beta \right) = & -\nabla_{\pi}^{2} \Omega_{t}\left( \pi_{k,t}^{\star} \right) \\
& +\sum_{i=1}^{m} \lambda_{t,i}^{\star} \nabla_{\pi}^{2} w_{t, i}\left(\pib_{k}^{\star}\right) + \beta \nabla_{\pi} \bm{w}_{t}\left(\pib_{k}^{\star}\right) \nabla_{\pi} \bm{w}_{t}\left(\pib_{k}^{\star}\right)^{T} \\
= & \nabla_{\pi}^{2} \mathcal{L}_{t}^{R}\left(\pi_{k,t}^{\star}, \bm{\lambda}^{\star}, \bm{x}^{\star}\right)+\beta \nabla_{\pi} \bm{w}_{t}\left(\pib_{k}^{\star}\right) \nabla_{\pi} \bm{w}_{t}\left(\pib_{k}^{\star}\right)^{T}.
\end{aligned}
$$

To prove that $(\pi_{k,t}^{\star}, \bm{\lambda}^{\star})$ is a strict minimum solution to $\mathcal{L}_{t}^{P'}(\pib_{k}, \bm{\lambda}, \bm{x}, \beta)$, we only need to prove the following is true for sufficiently large $\beta$,

$$
\nabla_{\pi}^{2} \mathcal{L}_{t}^{P'}\left(\pib_{k}^{\star}, \bm{\lambda}^{\star}, \bm{x}^{\star}, \beta \right) \succ 0 .
$$

If the above is not true, then for any large $\beta$, there exists $\bm{u}_{t}$ such that $\left\|\bm{u}_{t}\right\|=1$ and satisfies

\begin{align*}
\bm{u}_{t}^{T} \nabla_{\pi}^{2} \mathcal{L}_{t}^{P'}\left(\pib_{k}^{\star}, \bm{\lambda}^{\star}, \bm{x}^{\star}, \beta \right) \bm{u}_{t} &= \bm{u}_{t}^{T} \nabla_{\pi}^{2} \mathcal{L}_{t}^{R} \left(\pib_{k}^{\star}, \bm{\lambda}^{\star}, \bm{x}^{\star} \right) \bm{u}_{t} + \beta \left\| \nabla_{\pi} \bm{w}_{t}\left(\pib_{k}^{\star}\right)^{T} \bm{u}_{t}\right\|^{2} \leq 0 \\
\implies \left\|\nabla_{\pi} \bm{w}_{t}\left(\pib_{k}^{\star}\right)^{T} \bm{u}_{t}\right\|^{2} & \leq-\frac{1}{\beta} \bm{u}_{t}^{T} \nabla_{\pi}^{2} \mathcal{L}_{t}^{R} \left(\pib_{k}^{\star}, \bm{\lambda}^{\star}, \bm{x}^{\star} \right) \bm{u}_{t} \; \rightarrow \; 0, \; \text{as} \; \beta \rightarrow \infty.
\end{align*}

Therefore, $\left\{\bm{u}_{h}\right\}$ is a bounded sequence and there must be a limit point, denoted by $\ring{\bm{u}}$. Then

$$
\begin{gathered}
\nabla_{\pi} \bm{w}_{t}\left(\pib_{k}^{\star}\right)^{T} \ring{\bm{u}}=0 \\
\ring{\bm{u}}^{T} \nabla_{\pi}^{2} \mathcal{L}_{t}^{R}\left(\pib_{k}^{\star}, \bm{\lambda}^{\star}, \bm{x}^{\star}\right) \ring{\bm{u}} \leq 0 .
\end{gathered}
$$

The above contradicts Equation \eqref{eq:secondorder}, so the conclusion. Hence, $\pi_{k,t}^{\star}$ is also the optimal feasible policy for the primal-dual Problem \eqref{eq:withoutmaxquadratic'}.

\vspace{0.5cm}
\textbf{Part 2.} Solution of Problem \eqref{eq:withoutmaxquadratic'} is solution of Problem \eqref{eq:simplewithoutmaxquadratic}.

This part is straightforward since it is a standard result. Please see Chapter 2 and Chapter 9 of \cite{birgin2014practical}, and Chapter 2 and Chapter 4 of \cite{bertsekas2014constrained} for the proof. For completeness, we provide the result below.

Suppose $\pi_{k,t}^{\star}$ in the feasible optimal solution set of the primal-dual Problem \eqref{eq:withoutmaxquadratic'}. Let $\lambda^{\star}$ be the corresponding dual parameter of $\pi_{k,t}^{\star}$. Consider Problem \eqref{eq:damped_intermediate_problem}, which is an equivalent version of Problem \eqref{eq:withoutmaxquadratic'}. For any feasible $\pib_{k}$, we have

$$
\mathcal{L}_{t}\left(\pib_{k}^{\star}, \bm{\lambda}^{\star}, \beta \right) \leq \mathcal{L}_{t}\left(\pib_{k}, \bm{\lambda}^{\star}, \beta \right) .
$$

Now we have two cases:

\underline{Case 1}. When $\frac{\lambda_{t,i}^{\star}}{\beta} + \Psi_{C_{i}, t}(\pib_{k-1}, \pib_{k}^{\star})>0$, we have

$$
\begin{aligned}
\mathcal{L}_{t}\left(\pib_{k}, \lambda^{\star}, \beta \right) & =-\Omega_{t}(\pi_{k,t}) + \sum_{i=1}^{m} \lambda_{t,i}^{\star} \Psi_{C_{i}, t}(\pib_{k-1}, \pib_{k}) + \frac{\beta}{2} \sum_{i=1}^{m} \Psi_{C_{i}, t}^{2}(\pib_{k-1}, \pib_{k}) \\
& =-\Omega_{t}(\pi_{k,t}) + \sum_{i=1}^{m} \beta \Psi_{C_{i}, t}(\pib_{k-1}, \pib_{k}) \left( \frac{\lambda_{t,i}^{\star}}{\beta} + \Psi_{C_{i}, t}(\pib_{k-1}, \pib_{k}) \right) - \frac{\beta}{2} \sum_{i=1}^{m} \Psi_{C_{i}, t}^{2}(\pib_{k-1}, \pib_{k}) \\
&\leq -\Omega_{t}(\pi_{k,t})
\end{aligned}
$$

where the last step uses $\beta>0,\Psi_{C_{i}, t}(\pib_{k-1}, \pib_{k}) < 0$, and $\frac{\lambda_{t,i}^{\star}}{\beta} + \Psi_{C_{i}, t}(\pib_{k-1}, \pib_{k}) >0$.

\underline{Case 2}. When $\frac{\lambda_{t,i}^{\star}}{\beta} + \Psi_{C_{i}, t}(\pib_{k-1}, \pib_{k}^{\star}) \leq 0$, we have

$$
\mathcal{L}_{t}\left(\pib_{k}, \bm{\lambda}^{\star}, \beta \right) = -\Omega_{t}(\pi_{k,t}) - \frac{1}{2 \beta} \sum_{i=1}^{m} \lambda_{t,i}^{\star 2} \leq -\Omega_{t}(\pi_{k,t}).
$$

Now, combining both cases above, we have $\mathcal{L}_{t}\left(\pib_{k}, \bm{\lambda}^{\star}, \beta \right) \leq -\Omega_{t}(\pi_{k,t}).$ On the other hand, $ \mathcal{L}_{t}\left(\pib_{k}^{\star}, \bm{\lambda}^{\star}, \beta \right)=\mathcal{L}_{t}\left(\pib_{k}^{\star}, \bm{\lambda}^{\star}, \bm{x}^{\star}, \beta \right)= - \Omega_{t}(\pi_{k,t}^{\star}).$ Thus, the combining all of the above we get

$$
-\Omega_{t}(\pi_{k,t}^{\star}) = \mathcal{L}_{t} \left( \pib_{k}^{\star}, \bm{\lambda}^{\star}, \beta \right) \leq \mathcal{L}_{t} \left(\pib_{k}, \bm{\lambda}^{\star}, \beta \right) \leq -\Omega_{t}(\pi_{k,t}).
$$

\end{proof}

\relusolutionsame*
\begin{proof}
\label{proof:relu_solution_same}

We prove this result as a two-step process. 

First, we show that the solution sets of the below problems are identical. See Lemma \ref{lemma:withrelusolutionsame1} for the proof.

Problem \eqref{eq:withoutmaxquadratic}.
\begin{align*}
\Lcal_{t}^{P}(\pib_{k}, \bm{\lambda}, \bm{x}, \beta) &:=  \sum_{h=t}^{H} \E{s\sim \rho_{\pi_{k,h}} \\a\sim \pi_{k-1,h}} \big[-{\rho(\theta_{h})}A^{\pib_{k-1}}_{h}(s,a) \big] + \sum_{i}^{m} \lambda_{t,i} w_{t,i}(\pib_{k}) + \frac{\beta}{2} \sum_{i}^{m} w_{t,i}^{2}(\pib_{k}) \\
\text{Then,} &\quad\quad (\pi_{k,t}^{\star}, \bm{\lambda}_{t}^{\star}, \bm{x}_{t}^{\star}) = \max_{\bm{\lambda} \geq 0} \min_{\pi_{k,t}, \bm{x}} \Lcal_{t}^{P}(\pib_{k}, \bm{\lambda}, \bm{x}, \beta) \hspace{1cm}
\tag{P}
\end{align*}

Problem \eqref{eq:maxquadratic}.
\begin{align*}
\Lcal_{t}^{Q}(\pib_{k}, \bm{\lambda}, \bm{x}, \beta) &:=  \sum_{h=t}^{H} \E{s\sim \rho_{\pi_{k,h}} \\a\sim \pi_{k-1,h}} \big[-{\rho(\theta_{h})}A^{\pib_{k-1}}_{h}(s,a) \big] + \sum_{i}^{m} \lambda_{t,i} \Psi_{C_{i}, t}^{+}(\pib_{k-1}, \pib_{k}, \theta) \\ & \qquad \qquad + \frac{\beta}{2} \sum_{i}^{m} \Psi_{C_{i}, t}^{+}(\pib_{k-1}, \pib_{k}, \theta)^{2} \\
\text{Then,} &\quad\quad (\pi_{k,t}^{\star}, \bm{\lambda}_{t}^{\star}, \bm{x}_{t}^{\star}) = \max_{\bm{\lambda} \geq 0} \min_{\pi_{k,t}, \bm{x}} \Lcal_{t}^{Q}(\pib_{k}, \bm{\lambda}, \bm{x}, \beta)
\tag{Q}
\end{align*}

Second, we show that the solution sets of the below problems are identical. See Lemma \ref{lemma:withrelusolutionsame2} for the proof.

Problem \eqref{eq:withoutmaxquadratic'} :
\begin{align*}
\Lcal_{t}^{P'}(\pib_{k}, \bm{\lambda}, \bm{x}, \beta) &:=  \sum_{h=t}^{H} \E{s\sim \rho_{\pi_{k,h}} \\a\sim \pi_{k-1,h}} \big[-{\rho(\theta_{h})}A^{\pib_{k-1}}_{h}(s,a) \big] + \sum_{i}^{m} \lambda_{t,i} w_{t,i}(\pib_{k}) + \frac{\beta}{2} \sum_{i}^{m} w_{t,i}^{2}(\pib_{k}) \\
\text{Then,} &\quad\quad (\pi_{k,t}^{\star}, \bm{\lambda}_{t}^{\star}, \bm{x}_{t}^{\star}) = \max_{\bm{\lambda} \geq 0} \min_{\pi_{k,t}, \bm{x}} \Lcal_{t}^{P'}(\pib_{k}, \bm{\lambda}, \bm{x}, \beta)
\tag{P'}
\end{align*}

Problem \eqref{eq:simplewithoutmaxquadratic} :
\begin{align*}
\Lcal_{t}^{R}(\pib_{k}, \bm{\lambda}, \bm{x}) &:=  \sum_{h=t}^{H} \E{s\sim \rho_{\pi_{k,h}} \\a\sim \pi_{k-1,h}} \big[-{\rho(\theta_{h})}A^{\pib_{k-1}}_{h}(s,a) \big] + \sum_{i}^{m} \lambda_{t,i} w_{t,i}(\pib_{k}) \\
\text{Then,} &\quad\quad (\pi_{k,t}^{\star}, \bm{\lambda}_{t}^{\star}, \bm{x}_{t}^{\star}) = \max_{\bm{\lambda} \geq 0} \min_{\pi_{k,t}, \bm{x}} \Lcal_{t}^{R}(\pib_{k}, \bm{\lambda}, \bm{x})
\tag{R}
\end{align*}

Now, it follows from equivalency that the optimal solution of  Problem \eqref{eq:maxquadratic} and Problem \eqref{eq:simplewithoutmaxquadratic}, and hence Problem \eqref{eq:damped_intermediate_problem} and Problem \eqref{eq:cpo_improvement_vanilla}, is the same.

\end{proof}

\subsection{Experiments Revisited}
\label{sec:appendix_experiments}

Below we detail the experimental attributes that we used in benchmarking. See Figure \ref{fig:appendix:envs} for the environment details. All our experiments are run in the \texttt{omnisafe} module \cite{omnisafe}.

\subsubsection{Environment Details}
\label{appendix:envs}

\begin{figure}[t]
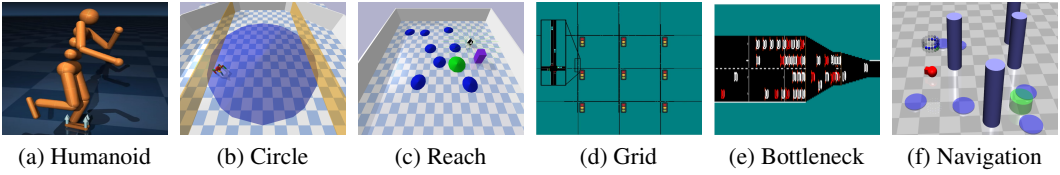

    \centering
    \subfloat[Humanoid]{
        \includegraphics[height=.12\textwidth, width=0.15\textwidth]{images/task_humanoid.JPG}
    }
    \subfloat[Circle]{
        \includegraphics[height=.12\textwidth, width=0.15\textwidth]{images/task_circle.png}
    }
    \subfloat[Reach]{
        \includegraphics[height=.12\textwidth, width=0.15\textwidth]{images/task_reach.png}
    }
    \subfloat[Grid]{
        \includegraphics[height=.12\textwidth, width=0.15\textwidth]{images/task_grid.JPG}
    }
    \subfloat[Bottleneck]{
        \includegraphics[height=.12\textwidth, width=0.15\textwidth]{images/task_bottleneck.png}
    }
    \subfloat[Navigation]{
        \includegraphics[height=.12\textwidth, width=0.15\textwidth]{images/task_navigation.png}
    }
    \caption[]{\small The Humanoid, Circle, Reach, Grid, Bottleneck, and Navigation tasks. (a) \textbf{Humanoid}: The agent is to run as fast as possible on a flat surface, while not exceeding a specified speed limit i.e. the cost constraint. (b) \textbf{Circle}: The agent is rewarded for moving in a specified circle but is penalized if the diameter of the circle is larger than some value \cite{achiam2017constrained}.  (c) \textbf{Reach}: The agent is rewarded for reaching a goal while avoiding obstacles (cost constraints) that are placed to hinder the agent \cite{ray2019benchmarking}. (d) \textbf{Grid}: The agent controls traffic lights in a 3x3 road network and is rewarded for high traffic throughput but is constrained to let lights be red for at most 5 consecutive seconds \cite{vinitsky2018benchmarks}. (e) \textbf{Bottleneck}: The agent controls vehicles (red) in a merging traffic situation and is rewarded for maximizing the number of vehicles that pass through but is constrained to ensure that white vehicles (not controlled by agent) have ``low'' speed for no more than 10 seconds \cite{vinitsky2018benchmarks}. (f) \textbf{Navigation}: The agent is rewarded for reaching the target area (green) but is constrained to avoid hazards (light purple) and impassible pillars (dark purple). The cost for hazards and pillars is different \cite{ray2019benchmarking}.}
    \label{fig:appendix:envs}
\end{figure}

Comprehensively, our experiments consist of eight tasks ranging from more superficial (Run and Circle tasks) to relatively more stochastic and sophisticated (Bottleneck and Grid tasks), each training different robots. They come from three well-known safe RL benchmark environments, Safe MuJoCo, Bullet-Safety-Gym, and Safety-Gym. For agents maneuvering on a two-dimensional plane, the cost is calculated as $C(s, a)=\sqrt{v_{x}^{2}+v_{y}^{2}}$. For agents moving along a straight line, the cost is calculated as $C(s, a)=\left|v_{x}\right|$, where $v_{x}$ and $v_{y}$ are the velocities of the agent in the $\mathrm{x}$ and $\mathrm{y}$ directions.

\paragraph{Circle}
This environment is inspired by \cite{achiam2017constrained}. Reward is maximized by moving along a circle of radius $d$:
\begin{equation*}
    R = \frac{v^\mathrm{T}[-y,x]}{1 + \big| \sqrt{x^2+y^2} - d\big|},
\end{equation*}
but the safety region $x_{\mathrm{lim}}$ is smaller than the radius $d : C = \mathbf{1}[x > x_{\mathrm{lim}}]$.

\paragraph{Navigation}
This environment is inspired by \cite{ray2019benchmarking}. Reward is maximized by getting close to the destination $R = \mathrm{Dist}(target,s_{t-1}) - \mathrm{Dist}(target,s_{t}),$
but it yields a cost of +1 when the agent hits the hazard or the pillar. The two different types of cost functions are returned separately and have different thresholds. In out setting, $d_1 = 25$ for the hazard constraint and $d_2 = 20$ for the pillar constraint.  

Since the main goal in MuJoCo is to train the robot to locomote on the plane, we call it the "Run" task in our article. Our chosen two robots are the relatively complex types in MuJoCo: Ant and Humanoid. \textbf{OpenAI Gym} is open source at https://github.com/openai/gym, and has a documentation at https://www.gymlibrary.ml/. \textbf{Bullet Safety Gym}. The implementation of the Circle task comes from Bullet-safety-Gym (Gronauer 2022), which Stooke, Achiam, and Abbeel (2020) first proposed. The reward is dense and increases by the agent's velocity and the proximity to the boundary of the circle. Costs are received when the agent leaves the safety zone defined by the two yellow boundaries. The environment is open source at https://github.com/SvenGronauer/Bullet-Safety-Gym.

\textbf{Safety Gym}. The remaining two tasks, Goal and Button, are from Safety-Gym (Ray, Achiam, and Amodei 2019). Compared to Run and Circle tasks, they are more stochastic and sophisticated in that agents are challenged to maximize the return while satisfying the constraints.

The environment is open source at https://github.com/openai/safety-gym, and readers can see OpenAI's blog at https://openai.com/blog/safety-gym/ for more details.

\subsection{Agents}
For single-constraint scenarios, Point agent is a 2D mass point($A \subseteq  \mathbb{R}^2$) and Ant is an quadruped robot($A \subseteq  \mathbb{R}^8$). For the multi-constraint scenario which is modified from OpenAI SafetyGym \cite{ray2019benchmarking}, $S \subseteq \mathbb{R}^{28 + 16\cdot m}$  where $m$ is the number of pseudo-radar (one for each type of obstacles and we set two different types of obstacles in the Navigation task) and $A \subseteq \mathbb{R}^2$ for a mass point or a wheeled car.

\subsubsection{Experimental Details}
\label{appendix:experimental_details}

To be fair in comparison, the proposed \texttt{e-COP} algorithm and FOCOPS \cite{zhang2020first} are implemented with same rules and tricks on the code-base of \cite{ray2019benchmarking}.

\subsubsection{Hyperparameters}

{
\renewcommand{\arraystretch}{1.15}
\begin{table*}
\centering
\fontsize{7}{11}\selectfont
\begin{tabular}{cccccccccc}
\hline Hyperparameter & APPO & PDO & FOCOPS & CPPO-PID & IPO & P3O & CPO & TRPO-L & PCPO \\
\hline Actor Net layers & $(32,32)$ & $(32,32)$ & $(32,32)$ & $(32,32)$ & $(32,32)$ & $(32,32)$ & $(32,32)$ & $(32,32)$ & $(32,32)$ \\
Critic Net layers & $(32,32)$ & $(32,32)$ & $(32,32)$ & $(32,32)$ & $(32,32)$ & $(32,32)$ & $(32,32)$ & $(32,32)$ & $(32,32)$ \\
Activation & tanh & tanh & tanh & tanh & tanh & tanh & tanh & tanh & tanh \\
Initial log std & 0.5 & 0.5 & 0.5 & 0.5 & 0.5 & 0.5 & 0.5 & 0.5 & 0.5 \\
Discount $\gamma$ & 0.99 & 0.95 & 0.995 & 0.995 & 0.99 & 0.99 & 0.99 & 0.99 & 0.99 \\
Policy lr & $3 \mathrm{e}-4$ & $3 \mathrm{e}-4$ & $3 \mathrm{e}-4$ & $3 \mathrm{e}-4$ & $3 \mathrm{e}-4$ & $3 \mathrm{e}-4$ & $3 \mathrm{e}-4$ & $3 \mathrm{e}-4$ & $3 \mathrm{e}-4$ \\
Critic Net lr & $1 \mathrm{e}-3$ & $1 \mathrm{e}-3$ & $1 \mathrm{e}-3$ & $1 \mathrm{e}-3$ & $1 \mathrm{e}-3$ & $1 \mathrm{e}-3$ & $1 \mathrm{e}-3$ & $1 \mathrm{e}-3$ & $1 \mathrm{e}-3$ \\
No. of episodes & 500 & 500 & 500 & 500 & 500 & 500 & 500 & 500 & 500 \\
Steps per epochs & 300 & 300 & 300 & 300 & 300 & 300 & 300 & 300 & 300 \\
Target KL & 0.01 & 0.01 & 0.01 & 0.01 & 0.01 & 0.01 & 0.01 & 0.01 & 0.01 \\
KL early stop & True & True & True & True & True & True & False & False & False \\
Line Search Times & N/A & N/A & N/A & N/A & N/A & N/A & 25 & 25 & 25 \\
Line Search Decay & N/A & N/A & N/A & N/A & N/A & N/A & 0.8 & 0.8 & 0.8 \\
Proximal clip & 0.2 & 0.2 & 0.2 & 0.2 & 0.2 & 0.2 & N/A & N/A & N/A \\
Max horizon & 200 & 200 & 200 & 200 & 200 & 200 & 200 & 200 & 200 \\
\hline
\end{tabular}
\caption{Hyperparameters used for each baseline.}
\label{table:hyperparams}
\end{table*}
}

Table \ref{table:hyperparams} shows the hyperparameters of baseline algorithms.

\subsubsection{Runtime Environment}

All experiments were implemented in Pytorch 1.7 .0 with CUDA 11.0 and conducted on an Ubuntu 20.04.2 LTS with 8 CPU cores (AMD Ryzen Threadripper PRO 3975WX 8-Coresz), 127G memory and 2 GPU cards (NVIDIA GeForce RTX 4060 Ti Cards).

\subsubsection{Robustness to Cost Thresholds}
\label{app:4.2}

We conducted a set of experiments wherein we study how \texttt{e-COP} effectively adapts to different cost thresholds. For this, we use a pre-trained \texttt{e-COP} agent, which is trained with a particular cost threshold in an environment, and test its performance on different cost thresholds within the same environment. Figure \ref{fig:general_rewards_costs_three} illustrates the training curves of these pre-trained agents, and we see that while \texttt{e-COP} can generalize well across different cost thresholds, other baseline algorithms may require further tuning to accommodate different constraint thresholds. 

\begin{figure*}[h]
  \centering
  \includegraphics[width=0.35\linewidth] {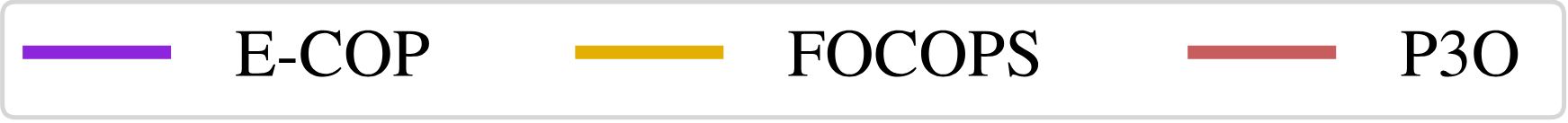}   \vspace*{-\baselineskip}

    \subfloat[Humanoid Rewards]{
        \includegraphics[height=0.125\linewidth, width=0.23\linewidth]{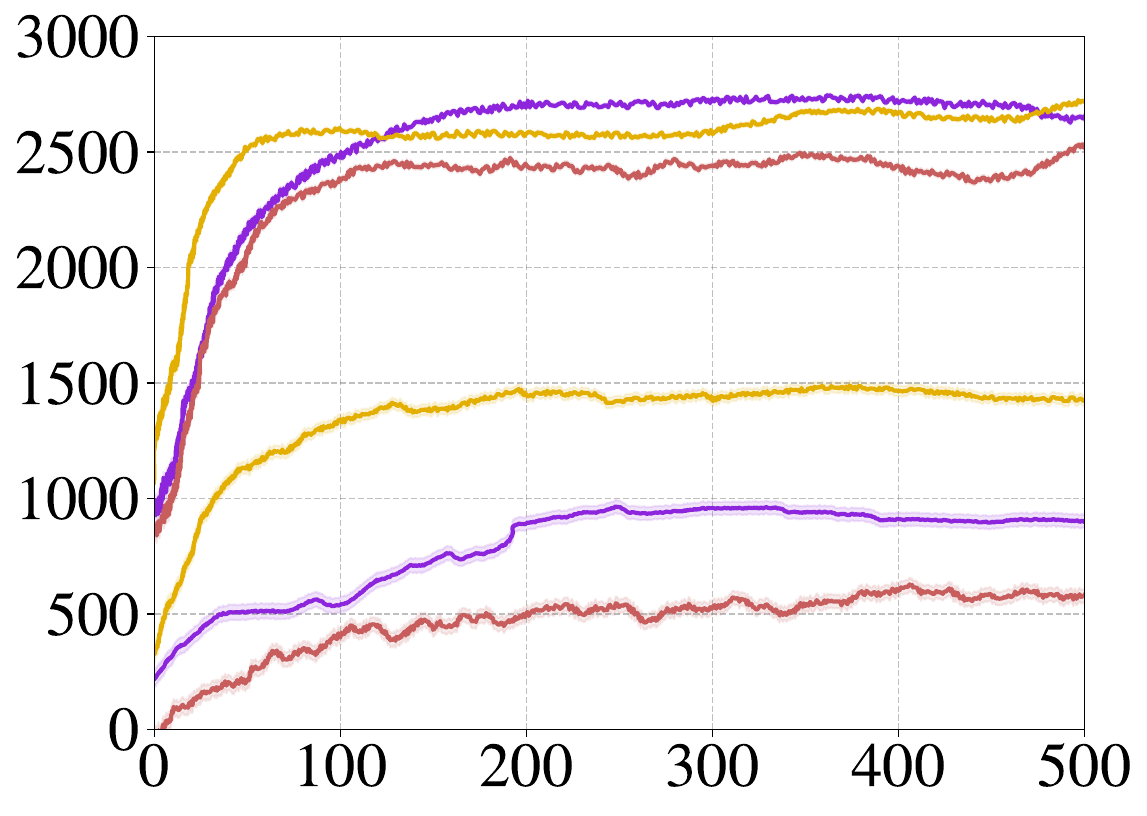}
        \label{fig:general_humanoid_reward}
    }
    \subfloat[Humanoid Costs]{
        \includegraphics[height=0.125\linewidth, width=0.23\linewidth]{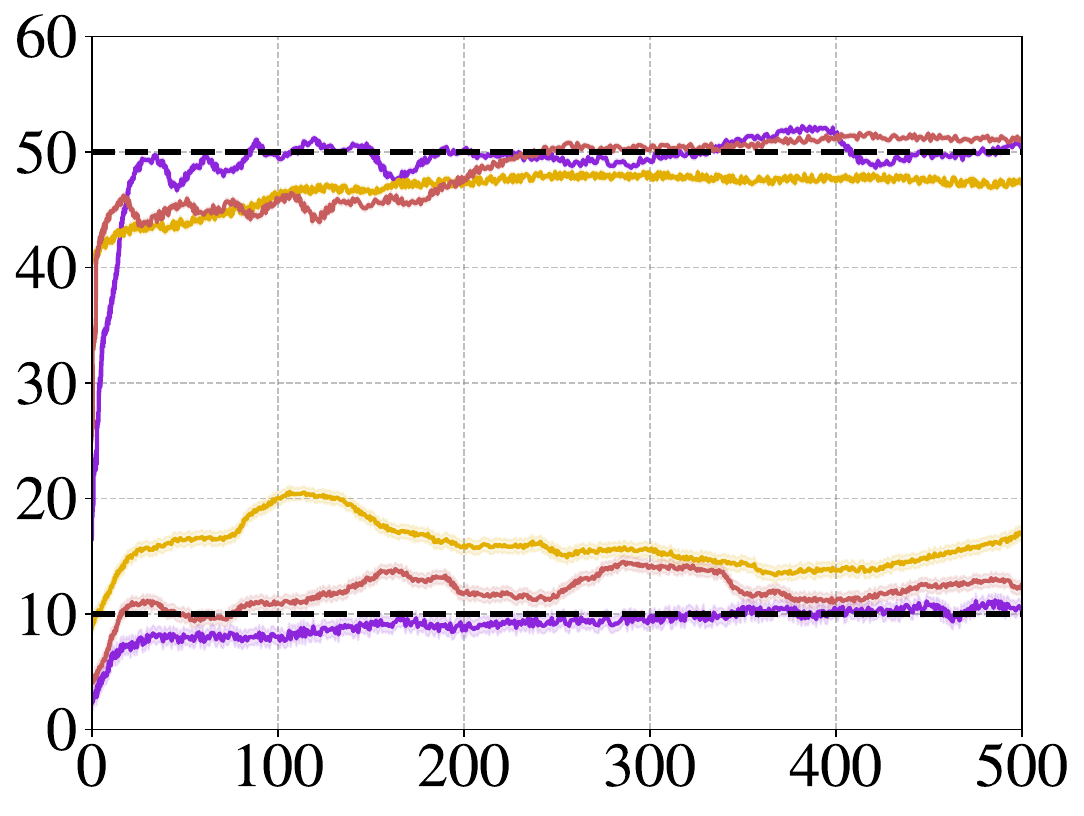}
        \label{fig:general_humanoid_cost}
    }
    \subfloat[Point Circle Rewards]{
        \includegraphics[height=0.125\linewidth, width=0.23\linewidth]{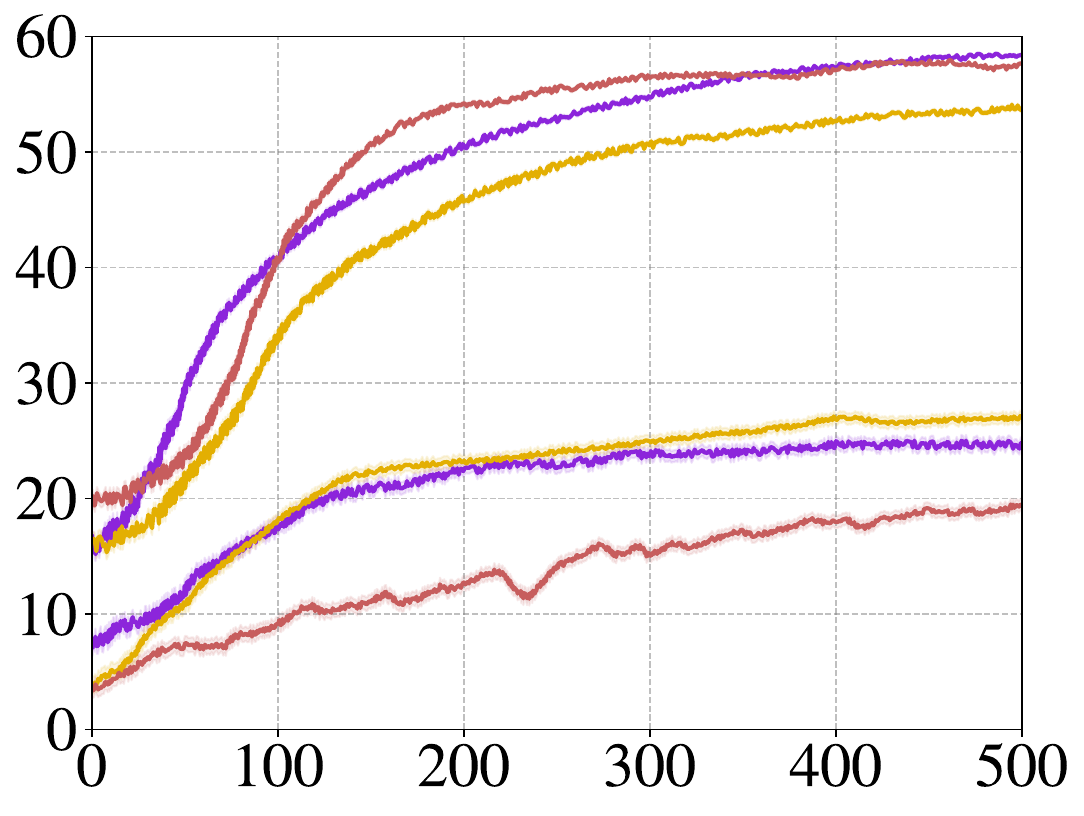}
        \label{fig:general_pointcircle_reward}
    }
    \subfloat[Point Circle Costs]{
        \includegraphics[height=0.125\linewidth, width=0.23\linewidth]{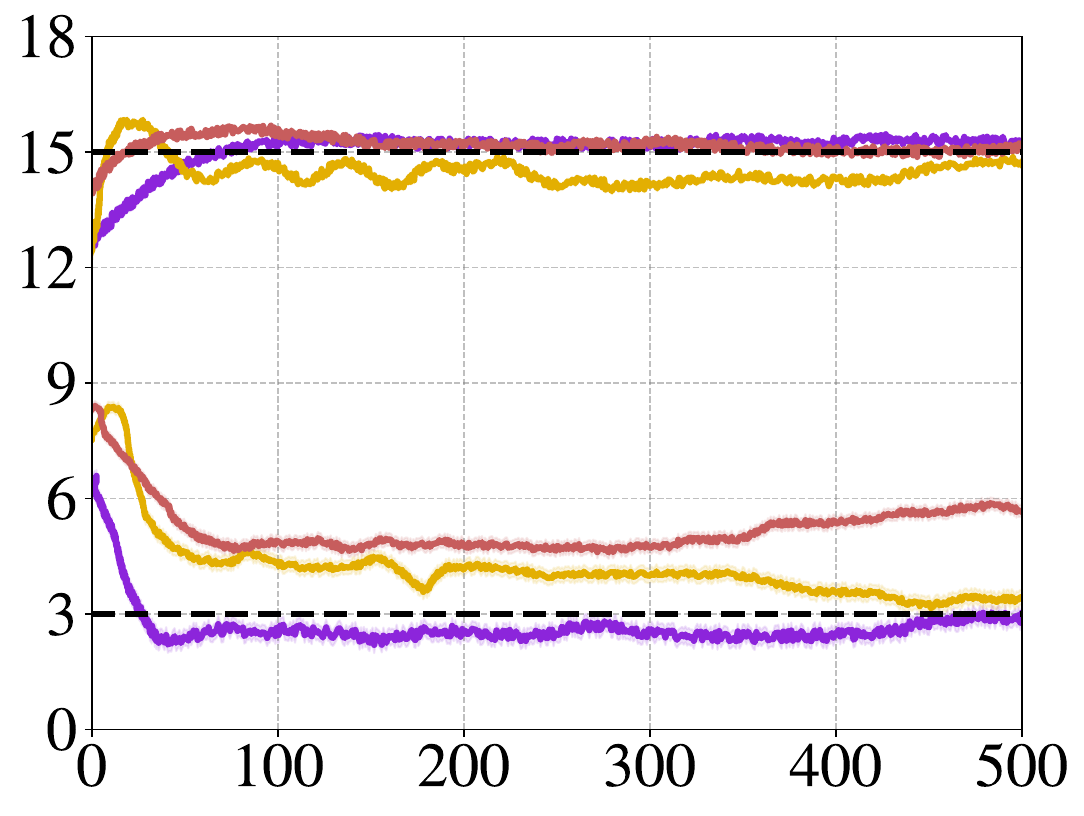}
        \label{fig:general_pointcircle_cost}
    }
  \caption{\small Cumulative episodic rewards and costs of baselines in two environments with two different constraint cost thresholds: 10 and 50 in Humanoid, and 3 and 15 in Point Circle. The hyperparameters are tuned at constraint limit of 20 in Humanoid and 10 in Point Circle.}
  \label{fig:general_rewards_costs_three}
\end{figure*}

\end{document}